\newcommand{\yesnum}{\addtocounter{equation}{1}\tag{\theequation}}
\newcommand{\tagnum}[1]{\addtocounter{equation}{1}{\tag{#1)\ \ (\theequation}}}
\newcommand{\customlabel}[2]{%
\protected@write \@auxout {}{\string \newlabel {#1}{{#2}{\thepage}{#2}{#1}{}} }%
\hypertarget{#1}{}
}
\newtheorem{assumption}{Assumption}
\newtheorem{theorem}{Theorem}[section]
\newtheorem{lemma}[theorem]{Lemma}
\newtheorem{definition}[theorem]{Definition}
\newtheorem{problem}[]{Problem}
\newtheorem*{theorem*}{Theorem}
\newtheorem{remark}[theorem]{Remark}
\crefname{equation}{Equation}{Equations}
\crefname{figure}{Figure}{Figures}
\crefname{table}{Table}{Tables}
\crefname{section}{Section}{Sections}
\crefname{appendix}{Section}{Sections}
\crefname{algorithm}{Algorithm}{Algorithms}
\crefname{assumption}{Assumption}{Assumptions}
\crefname{theorem}{Theorem}{Theorems}
\crefname{lemma}{Lemma}{Lemmas}
\crefname{definition}{Definition}{Definitions}
\crefname{conjecture}{Conjecture}{Conjectures}
\crefname{corollary}{Corollary}{Corollaries}
\crefname{construction}{Construction}{Constructions}
\crefname{claim}{Claim}{Claims}
\crefname{observation}{Observation}{Observations}
\crefname{proposition}{Proposition}{Propositions}
\crefname{fact}{Fact}{Facts}
\crefname{question}{Question}{Questions}
\crefname{problem}{Problem}{Problems}
\crefname{remark}{Remark}{Remarks}
\crefname{example}{Example}{Examples}
\crefname{appendix}{Section}{Sections}
\DeclareFontFamily{U}{stixscr}{}
\DeclareFontShape{U}{stixscr}{m}{n}{<-> s*[1.1] stix-mathscr}{}
\newcommand{\white}[1]{\textcolor{white}{#1}}
\colorlet{RED}{black}
\colorlet{BLUE}{black}
\newcommand{\N}{\mathbb{N}}
\newcommand{\R}{\mathbb{R}}
\newcommand{\cC}{\mathcal{C}}
\newcommand{\cS}{\mathcal{S}}
\newcommand{\cW}{\mathcal{W}}
\newcommand{\evE}{\ensuremath{\mathscr{E}}}
\newcommand{\evF}{\ensuremath{\mathscr{F}}}
\newcommand{\wt}{\widetilde}
\newcommand{\nfrac}{\nicefrac}
\newcommand{\sfrac}[2]{#1/#2}
\newcommand{\st}{\mathrm{s.t.}}
\newcommand{\eps}{\varepsilon}
\renewcommand{\epsilon}{\varepsilon}
\newcommand{\argmax}{\operatornamewithlimits{argmax}}
\newcommand{\Ex}{\operatornamewithlimits{\mathbb{E}}}
\newcommand{\poly}{\mathop{\mbox{\rm poly}}}
\def\abs#1{\left| #1 \right|}
\def\sabs#1{| #1 |}
\newcommand{\sinparen}[1]{(#1)}
\newcommand{\sinbrace}[1]{\{#1\}}
\newcommand{\inparen}[1]{\left(#1\right)}
\newcommand{\inbrace}[1]{\left\{#1\right\}}
\newcommand{\insquare}[1]{\left[#1\right]}
\newcommand{\zo}{\{0,1\}}
\newcommand{\np}{{\bf NP}}
\newcommand{\negsp}{\hspace{-0.5mm}}
\newcommand{\hH}{{\widehat{H}}}
\newcommand{\hF}{{\widehat{F}}}
\newcommand{\hS}{{\widehat{S}}}
\newcommand{\hT}{{\widehat{T}}}
\newcommand{\hw}{\widehat{w}}
\newcommand{\hW}{{\widehat{W}}}
\newcommand{\Stackrel}[2]{\stackrel{\mathmakebox[\widthof{\ensuremath{#2}}]{#1}}{#2}}
\newcommand{\folder}{../figures/}
\newif\ifconf
\newif\ifnofigure
\newcommand{\midsepremove}{\aboverulesep = 0mm \belowrulesep = 0mm}
\newcommand{\midsepadd}{\aboverulesep = 1mm \belowrulesep = 1mm}
\newcommand{\leftmarginCUSTOM}{18pt}
\renewcommand{\leftmarginCUSTOM}{18pt}
\renewcommand{\leftmarginCUSTOM}{\leftmargin}
\newcommand{\opt}{\ensuremath{\mathrm{\textsc{OPT}}}}
\newcommand{\sopt}{S_\opt{}}
\newcommand{\ovsopt}{\overline{\sopt}}
\newcommand{\scons}{S_j}
\title{\bf Maximizing Submodular Functions for\\ \bf  Recommendation in the Presence of Biases}
\author{Anay Mehrotra\\ Yale University\and Nisheeth K. Vishnoi \\ Yale University}
\begin{document}

\maketitle

\begin{abstract}
    Subset selection tasks, arise in recommendation systems and search engines and ask to select a subset of items that maximize the value for the user. The values of subsets often display diminishing returns, and hence, submodular functions have been used to model them. If the inputs defining the submodular function are known, then existing algorithms can be used.  In many applications, however, inputs have been observed to have social biases that reduce the utility of the output subset. Hence, interventions to improve the utility are desired. Prior works focus on maximizing linear functions---a special case of submodular functions---and show that fairness constraint-based interventions can not only ensure proportional representation but also achieve near-optimal utility in the presence of biases. We study the maximization of a family of submodular functions that capture functions arising in the aforementioned applications. Our first result is that, unlike linear functions, constraint-based interventions cannot guarantee any constant fraction of the optimal utility for this family of submodular functions.  Our second result is an algorithm for submodular maximization. The algorithm provably outputs subsets that have near-optimal utility for this family under mild assumptions and that proportionally represent items from each group. In empirical evaluation, with both synthetic and real-world data, we observe that this algorithm improves the utility of the output subset for this family of submodular functions over baselines.
\end{abstract}

\addtocontents{toc}{\protect\setcounter{tocdepth}{2}}

\renewcommand*{\thefootnote}{\fnsymbol{footnote}}
\renewcommand*{\thefootnote}{\arabic{footnote}}

\newpage
\tableofcontents
\newpage

\section{Introduction}\label{sec:intro} 
    Subset selection arises in many web-based applications including different types of content recommendation systems \cite{spotify_submod, Blogosphere2009Submod, amazon_submod, amazon_stream_submod} and search engines \cite{microsoft_diverse}. 
    Generally speaking, in all of these applications, given a set of $n$ items (e.g., posts, products, videos, or websites), the task is to select a subset $S$ of size $k$ that is the most valuable for the user. 
    Submodular functions are often used to capture the utility of {subsets of items} because of the diminishing returns property that arises in the above applications \cite{krause2014submodular,microsoft_diverse,spotify_diversity}.
    Diminishing returns arise because of the fact that an item's value to a user depends on the other items shown to the user \cite{Beyond_topicality_1982, manning2010introduction, microsoft_diverse, Blogosphere2009Submod, amazon_stream_submod}.
    For instance, in a recommendation system, where items belong to different categories (such as genres or product types), each additional item from the same category provides a diminishing value to the user \cite{Blogosphere2009Submod,amazon_submod,amazon_stream_submod}.
    Similarly, in web search, results belong to different categories (based on, e.g., relevance to technology, news, locations, etc.) and each additional result from the same category adds diminishing value \cite{microsoft_diverse}.
    In settings with multiple stakeholders (e.g., multiple users, content creators, and platform), adding item $i$ to a subset containing another item, relevant to the same stakeholder as $i$, has a lower increment in utility than adding $i$ to a subset not containing such an item \cite{spotify_diversity}.
    Submodular functions model such diminishing returns.
    Formally, a set function $F$ is said to be submodular if, for any two subsets $S\subseteq T$ and item $i\not\in T$, the increase in value on adding $i$ to $S$ is at least as large as the increase in value on adding $i$ to $T$, i.e., $$F(S\cup\inbrace{i})-F(S)\geq F(T\cup\inbrace{i})-F(T).$$
    \noindent In the above applications, given $n$ items and an upper-bound $1\leq k\leq n$ on the maximum number of selected items, at a high level, the goal is to solve the following maximization program for a suitable submodular function $F\colon 2^{[n]}\to \R$
    \begin{align*}
        \ifconf\textstyle\else\fi
        \max\nolimits_{S\subseteq [n]: \abs{S}\leq k}F(S).
        \yesnum\label{prog:const_max}
    \end{align*} 
    \paragraph{A family of submodular functions.}
      There is a vast literature on maximizing submodular functions  \cite{nemhauser1978analysis, fisher1978analysis, feige1998threshold, krause2014submodular, pmlr-v48-mirzasoleiman16, guptaConstrained2010}.
      This literature studies various types of submodular functions.
      We focus on a family of submodular functions that captures many functions used in recommendation and web search.
      In these applications, each item has $m$ {\em attributes} and an item {$i\in\inbrace{1,2,\dots,n}$} generates a value or {\em utility} $W_{ij} \geq 0$ for a user (or stakeholder) who is looking for items with the $j$-th attribute.
      For instance, on Amazon Music, \cite{amazon_submod} choose $W_{ij}$ to encode the utility of song $i$ for users interested in songs from genre $j$.
      They use the submodular function $F(S) = \sum\nolimits_{j=1}^m $ $\log\inparen{1+\sum\nolimits_{i\in S} W_{ij}}$ to capture the value of a set $S$ of song recommendations.
      Another example is \cite{spotify_submod} who, roughly speaking, set $W_{ij}$ to encode the utility of song $i$ for stakeholder $j$.
      They use $F(S)= \sum_{i\in S}W_{i1} + \sum_{j=2}^m \sqrt{\sum_{i\in S}W_{ij}}$ to capture the value of a playlist $S$ on Spotify.
    Generalizing these examples, we consider the following family of submodular functions:
    Given $m$ increasing concave functions $g_1,g_2,\dots,g_m\colon \R_{\geq  0}\negsp{}\to\negsp{}\R_{\geq 0}$, corresponding to each attribute, and utilities $W$, the submodular function $F$ is
    \begin{align*}
        \ifconf\textstyle\else\fi
        \forall S\subseteq[n],\quad
        \ifconf\textstyle\else\fi
        F(S)\coloneqq \sum\nolimits_{j=1}^m g_j\inparen{\sum\nolimits_{i\in S}W_{ij}}.
        \ifconf\textstyle\else\fi
        \yesnum\label{def:submod_family}
    \end{align*} 
    {Here, $W$ captures an item's utility in ``isolation'' and the concavity of $g_1,g_2,\dots,g_m$ ensures that the utilities of sets of items display diminishing returns.}\footnote{{To see this, imagine we have multiple items $i\in \inbrace{1,2,...}$ and the utility of each item $i$ on the first attribute is $W_{i1}=1$. 
    Let $g(x)$ be $\sqrt{x}$ for each $x$, which is increasing and concave. 
    Selecting the first item increases the utility of the selection $g(1)-g(0)=1$. 
    Further selecting the second item increases the utility by a smaller amount $g(2)-g(1)\approx 0.414$, and further yet, selecting the third item increases the utility by an even smaller amount $g(3)-g(2)\approx 0.318$.}}
    We denote the above family of functions by $\evF$.
    $\evF$ captures the function in \cite{amazon_submod} when $g_j(x)=\log(1+x)$ for all $x$ and $j$.
    It also captures the function considered by \cite{spotify_submod} when $g_1(x)=x$ and $g_2(x)=\dots=g_m(x)=\sqrt{x}$ for all $x$.
    Moreover, in 
    \cref{app:additional_examples}, we show that $\evF$ captures functions used by \cite{microsoft_diverse,amazon_stream_submod}.
    
    If the utilities $W$ are accurately known, then one can use standard algorithms to approximately solve Program~\eqref{prog:const_max} (see \cref{sec:related_work}).
    However, in the above applications, the utilities are often derived from users; either directly from users' feedback or indirectly from predictions of learning algorithms trained on user data.
    Hence, societal biases can creep into such {\em observed} utilities.
    Consequently, the subset maximizing the objective defined by these observed utilities can have a sub-optimal value with respect to the objective $F$ defined by the true or {\em latent} utilities.
    
    \smallskip
    \paragraph{Bias in inputs.}  There are various mechanisms through which biases can arise in observed utilities.
    If certain social groups are overrepresented in the data compared to their proportion in the user base, then utilities derived from this data will be skewed toward the opinions of these groups.
    For instance, the IMDB rating of the 2016 remake of Ghostbusters was ``sabotaged by a faction of fans who appeared to be upset by its all-female cast'' \cite{imdb_ghostbuster_ringer, imdb_ghostbuster_forbes}.
    Apart from such explicit biases, humans also have unconscious implicit biases \cite{wenneras2001nepotism, bertrand2004emily, lyness2006fit, greenwald2006implicit, Bohnet_2016}.
    Humans' implicit biases can manifest in data and, in turn, introduce skews in the utilities \cite{Bohnet_2016,ekstrand_gender_in_book_recommendation_2021}.
    Bias can also arise due to differences in user characteristics across socially-salient groups.
    For instance, \cite{keswani2021dialect} observe that SOTA text summarization algorithms output summaries that under-represent minority dialects of English.
    \cite{keswani2021dialect} postulate that this is due to ``structural differences'' across dialects (e.g., differences in lengths of sentences or of Tweets).
    When algorithms are used to estimate utilities, such algorithmic biases can introduce skews in estimated utilities.
    
    \smallskip
    \paragraph{A model of bias.}
    To capture such skews, we consider a model that extends the model in \cite{KleinbergR18}.
    In this model, items belong to one of $p$ disjoint groups $G_1, G_2,\dots, G_p$. 
    Each group $G_\ell$ has an \textit{unknown} and increasing bias function $\phi_\ell\colon \R_{\geq 0}\to \R_{\geq 0}$.
    The observed utilities of items $i$ in group $G_\ell$ are defined as follows
    \begin{equation*}
        \ifconf\textstyle\else\fi
        \text{$\forall\ 1\leq j\leq m$,}\quad 
        \hW_{ij} \coloneqq \phi_\ell\inparen{W_{ij}}.
        \ifconf\textstyle\else\fi
        \yesnum\label{eq:bias_model_intro}
    \end{equation*}
    In the above examples, the groups $G_1, G_2,\dots, G_p$ can correspond to the set of movies whose protagonists are male (respectively non-male) and the set of tweets written in Standard English (respectively African American English).
    To gain some intuition, consider the special case studied in \cite{KleinbergR18}: for each $\ell$ and $x$, $\phi_\ell(x)=\beta_\ell\cdot x$ for some parameter $0<\beta_\ell<1$.
    In this case, the observed utilities of items in group $G_\ell$ are $\beta_\ell$ times smaller than their latent utilities.
    
    {To see a concrete connection to the above examples, consider user-given ratings used for recommendation: suppose $x$ fraction of the users are explicitly biased and give female-led movies a score of 1 (lowest).
    This reduces the score of a female-led movie from its true value of, say $v$, to $v(1-x)+x$.
    One can model this skew using by setting $G_\ell$ to be the set of female-led movies and $\phi_\ell(z)=z\cdot(1-x)+x$ for each $z$.
    The above setup can also be used to model algorithmic biases.
    For instance, consider a (hypothetical) algorithm that scores Tweets proportional to the TF-IDF score of the text in the Tweet.
    Since the TF-IDF score is known to be lower for text with more ``common words'' \cite{manning2010introduction}, Tweets in dialects $D$ that use a higher fraction, say $x_D$, of common words, receive lower scores.
    This can be modeled with $\phi_\ell(z)=z\cdot h(x_D)$ for some decreasing function $h$, where group $G_\ell$ consists of all Tweets in dialect $D$.}
    
    Since a platform that does not observe the latent utilities $W$, it naturally outputs the subset $\hS$ that maximizes the function $\hF$ defined by observed utilities $\hW$:
    \begin{eqnarray*}
            \ifconf\textstyle\fi
            \hS \coloneqq \ifconf\textstyle\fi \argmax_{T\subseteq[n]\colon \abs{T}\leq k} \hF(T), 
            \quad\text{where,}\quad \text{for all}\   
            \ifconf\textstyle\fi
            T \subseteq[n],\quad  \hF(T)  \coloneqq \sum_{j=1}^m g_j\inparen{\sum\nolimits_{i\in S}\hW_{ij}}.
            \ifconf\textstyle\fi
    \end{eqnarray*} 
    Since $\hS$ optimizes a different objective than $F$, the latent utility of $S$, $F\sinparen{\hS}$, can be smaller than the optimal latent utility, $\opt{} \coloneqq \max\nolimits_{S\subseteq [n]: \abs{S}\leq k}F(S).$ 
    
    \begin{mdframed}[style=FrameBox2]
      \centering
      \textit{Given $\hW$, can we find a set $S$ such that $F(S)$ is close to $\opt{}$?}
    \end{mdframed}
     
  \paragraph{Related work.}
  Recent works \cite{KleinbergR18,celis2020interventions,mehrotra2022intersectional} have studied the above model in the special case where the objective $F$ is a linear function--a specific type of functions in $\evF$--and the bias functions, for each $\ell$, are $\phi_\ell(x)=\beta_\ell \cdot x$ for some parameters $0<\beta_1,\beta_2,\dots,\beta_p<1$.
  These works explore if requiring the output $S$ to satisfy fairness constraints can improve its latent utility $F(S)$.
  Various fairness constraints have been considered in practice \cite{VermaR18,fairmlbook}: Equal representation requires $S$ to have at most $\nfrac{k}{p}$ items from each group $G_\ell$ and proportional representation requires $S$ to have at most $k\cdot\nfrac{\abs{G_\ell}}{n}$ items from  each group $G_\ell$.
  Generalizations of these constraints have also been considered, given values $u_{\ell}$, generalizations require $\abs{S\cap G_\ell}\leq u_\ell \cdot k$ respectively for each $\ell$.
  There are many reasons to use fairness constraints, including, ethical and legal ones \cite{ranking_survey,criticalReviewFairRanking22,overviewFairRanking,fairmlbook}.
  \cite{KleinbergR18,celis2020interventions,mehrotra2022intersectional} demonstrate that another benefit of fairness constraints is that they can improve the latent utility of the output.
  Given $u=(u_1,u_2,\ldots, u_p)$, let $S_u$ be the subset maximizing the observed utility $\hF(S_u)$  subject to satisfying the constraint specified by $u$.
  In the special case, where $F$ is linear (e.g., $F(S)\coloneqq \sum\nolimits_{i\in S}W_{i1}$) and latent utilities $W$ are drawn i.i.d. from some distribution,
  \cite{celis2020interventions,mehrotra2022intersectional} show that
  if $u$ captures proportional representation, then $F(S_u)\geq (1-o_k(1))\cdot \opt{}$. 
  \textit{Thus, a natural question is if there is a $u$ such that $F(S_u)$ is close to $\opt{}$ for a submodular function $F$ in the family $\evF$.}
           
    \paragraph{Stochasticity in groups.}
    \cite{KleinbergR18,celis2020interventions,mehrotra2022intersectional} assume that entries of $W$ are drawn i.i.d. from some distribution.
    We weaken this assumption:
    we let $W$ be arbitrary and assume that the groups $G_1,G_2,\dots,G_p$ are generated stochastically--independent of $W$.
    This is done by uniformly sampling $\abs{G_1}$ distinct items and assigning them to $G_1$, then sampling $\abs{G_2}$ distinct items (from those remaining) and assigning them to $G_2$, and so on.
    This captures the belief that there are no systematic differences in the latent utilities of items in different groups.
    If, in addition, entries of $W$ are also sampled i.i.d., then this is equivalent to the model of
    \cite{KleinbergR18,celis2020interventions,mehrotra2022intersectional}. 
    Further discussion of the model and results of     \cite{KleinbergR18,celis2020interventions,mehrotra2022intersectional} appears in \cref{sec:related_work_app}.
    
    \paragraph{Our contributions.}
        {We begin by studying the effectiveness of fairness constraints to achieve a high latent utility.}
        Our first result shows that for any $\eps>0$ and upper bound parameters $u$, there is a submodular function $F$ in $\evF$, latent utilities $W$, and functions $\phi_1,\phi_2,\dots,\phi_p$, such that with high probability, the latent utility of $S_u$ is at most $\eps\cdot\opt{}$ (\cref{thm:main_negative_result}).
        Thus, no choice of $u$ can ensure that $F(S_u)$ is close to $\opt{}$.
        This result holds with the ``multiplicative'' bias functions $\phi_1,\phi_2,\dots,\phi_p$ studied in \cite{KleinbergR18,celis2020interventions,mehrotra2022intersectional}.
        Hence, it contrasts the result of \cite{KleinbergR18,celis2020interventions,mehrotra2022intersectional} {that for any linear $F$, there is always a $u$ specifying fairness constraints} such that $F(S_u)\geq (1-o_k(1))\cdot \opt{}$. 
        {This shows a difference in the effectiveness of fairness constraints with linear objective functions compared to submodular functions.}

        \smallskip

        On the positive side, we give an algorithm for submodular maximization in the presence of biases (\cref{alg:disj}).
        \cref{alg:disj} can be used with any submodular function $F$ in the family $\evF$.
        If each item $i$ has a non-zero utility for at most one attribute (\cref{asmp:disj}), then the algorithm provably outputs a subset with near-optimal latent utility (\cref{thm:disj}).
        \cref{asmp:disj} is natural in some settings: for instance, \cite{microsoft_diverse} used the assumption in the context of web search.
        {Moreover, the assumption is also satisfied by the construction in our first result.}
        Concretely, we show that under \cref{asmp:disj}, given observed utilities $\hW$, \cref{alg:disj} outputs a subset $S$ whose latent utility is at least $\inparen{1-O(\tau^{-1}m^2 k^{-\sfrac{1}{4}})}\cdot \opt$, where $\tau>0$ is the minimum value such that all {non-zero entries of $W$ are between $\tau$ and $\tau^{-1}$ (\cref{thm:disj}).}

        \smallskip
         
        \cref{alg:disj} differs from the standard greedy algorithms for constrained monotone submodular maximization \cite{nemhauser1978analysis, fisher1978analysis, LazierThanGreedy}:
        Roughly speaking, given constraints, greedy algorithms iteratively select the item with the highest marginal utility (or an approximation of marginal utility). 
        \cref{alg:disj}, on the other hand, first computes the ``right'' constraints for the given data and, then, performs submodular maximization subject to the computed constraints.
 
        \newcommand{\uncons}{\textsf{Uncons}} 

        \smallskip
        
        Empirically, we evaluate the performance of \cref{alg:disj} when \cref{asmp:disj} does not hold.
        We run simulations on the MovieLens 20M \cite{MovieLensDataset} and two synthetic datasets and compare against the baselines \uncons{} and \textsf{ProportionalRepr}, which output the subset maximizing the observed utility and the subset maximizing the observed utility subject to satisfying proportional representation respectively.
        We fix $\phi_1(x)=x$ and $\phi_\ell(x)=\beta\cdot x$ for each $\ell\neq 1$.
        In simulations with synthetic datasets, we observe that \cref{alg:disj} achieves a latent utility higher than $0.95$ times the latent utility achieved when $\beta=1$ (i.e., there is no bias), even for small values of $\beta$ ($\beta\leq 0.01$).
        Whereas, \uncons{} outputs subsets whose latent utility decreases with $\beta$ and for $\beta<0.1$ is up to 12\% smaller than $\opt{}$.
        On MovieLens 20M \cite{MovieLensDataset}, we observe that the predicted relevance scores in the data are disproportionately higher (by up to 3 times) for movies led by male actors compared to movies led by non-male actors in genres stereotypically associated with men.
        In contrast, user ratings for these sets of movies are within 6\% of each other in all genres.
        We use these (biased) relevance scores to recommend movies from sets of men-stereotypical genres and evaluate the performance of recommended movies with user ratings.
        \cref{alg:disj} outperforms \uncons{} and \textsf{ProportionalRepr} by 3\% or more on 14/31 genre sets and has a similar as (within 1\%) or better performance than \uncons{} and \textsf{ProportionalRepr} on more than 87\% sets of men-stereotypical genres.
        
\section{Other related work}\label{sec:related_work}

    \paragraph{Fairness in information retrieval and recommendation.} 
        Information retrieval and recommendation systems (such as personalized feed generators, news recommenders, and search engines) have a significant societal influence \cite{Noble2018}.
        They are one of the primary sources of information for individuals \cite{manning2010introduction, Noble2018, ekstrand2022fairness, overviewFairRanking}, who, in turn, impart significant trust to these systems--tending to agree with their outputs \cite{Seeing_believing_2003} and to follow their suggestions \cite{PersuasionofRecommenders2006}. 
        Without fairness considerations, the outputs of existing systems have been observed to encode various societal biases \cite{Olteanu00K19}--leading to underrepresentation of some social groups \cite{KayMM15}, the polarization of user opinions \cite{polarizationWSJ2020}, and denial of economic opportunities available to individuals \cite{hannak2017bias}.
        Consequently, a growing body of works design interventions to mitigate the adverse effects of biases \cite{manning2010introduction, ekstrand2022fairness, overviewFairRanking}.
        These works can be broadly divided into those mitigating adverse effects on the users \cite{yao2017beyond,kamishima2017considerations,abdollahpouri2020popularity}, those mitigating adverse effects on the items (denoting providers such as journalists in news recommendation, artists in song recommendation, and individuals on online hiring platforms) \cite{fairExposureAshudeep, ReducingDisparateExposureZehlike,causal2021yang, BalancedRankingYang2019,linkedin_ranking_paper, AmortizedFairnessBiega2018}, and those considering both \cite{mehrotra2017auditing,  MehrotraFairMarketplace2018}.
        Works in each of these categories take diverse approaches:
        from modifying the relevance estimation pipeline to satisfy fairness criteria \cite{ReducingDisparateExposureZehlike,causal2021yang,yao2017beyond}, to requiring the output to satisfy fairness constraints \cite{fairExposureAshudeep,BalancedRankingYang2019,linkedin_ranking_paper,AmortizedFairnessBiega2018}, to modifying the objective of system to capture fairness metrics \cite{MehrotraFairMarketplace2018,abdollahpouri2020popularity,kamishima2017considerations,mehrotra2017auditing}.
        We focus on harm for the items or providers due to biases in the input and examine the efficacy of fairness constraints to mitigate these harms when the output's utility is captured by a submodular function. 
        Unlike this work, most prior works assume that the input data is accurate and is absent of biases.

    \paragraph{Submodular maximization.}
    There is a vast literature on maximizing submodular functions subject to different types of constraints \cite{nemhauser1978analysis,fisher1978analysis,feige1998threshold,krause2014submodular,pmlr-v48-mirzasoleiman16,guptaConstrained2010}.
    Among these, cardinality constraints are of specific interest. 
    These are constraints of the form $\abs{S}\leq k$ for some fixed $k$.
    Given a cardinality constraint and an evaluation oracle for $F$, the standard greedy algorithm of \cite{nemhauser1978analysis}  selects a subset $S$ of size $k$ such that $F(S)\geq (1-e^{-1})\cdot \opt{}$ while making at most $nk$ evaluations of $F$ and doing at most $O(nk)$ additional arithmetic operations \cite{nemhauser1978analysis}, where $\opt{}=\max_{\abs{S}\leq k} F(S)$. 
    Several variants of this algorithm have also been designed.
    These variants extend the constant-factor approximation guarantee to other types of constraints (including upper bounds stated in \cref{sec:intro}), improve its running time, and design distributed variants of the algorithm \cite{lazyGreedy, LazierThanGreedy, GruiaMaximizingMatroid2011, mirzasoleiman2016distributed}.
    {Finally, motivated by context-specific fairness requirements, a number of recent works \cite{celis2018multiwinner,bandyapadhyay2021Covering,Asudeh2022coverage,boehmer2022matching,blanco2022Covering,Yanhao2022submodular} also study submodular maximization in the presence of constraints beyond the family of constraints introduced in \cref{sec:intro}.}
    However, unlike the present work, these works, assume that one can evaluate the true function $F$ which may not be possible in the presence of biases.

\section{Model} 
    Let there be $n$ items, indexed by the set of values $[n]\coloneqq \inbrace{1,2,\dots,n}$.
    A set function $F\colon 2^{[n]}\to \R$ is said to be submodular if for each pair of subsets $T\subseteq S\subseteq [n]$ and item $i\in [n]$,
    $F(S\cup\inbrace{i})-F(S)\geq F(T\cup\inbrace{i})-F(T)$.
    We consider the following family of submodular functions that are studied in the context of content recommendation \cite{spotify_submod,Blogosphere2009Submod,amazon_submod,amazon_stream_submod} and web search \cite{microsoft_diverse}. 
    \begin{definition}[\textbf{A family of submodular functions}]
        $\evF$ is the family of all submodular functions $F$ that are parameterized by a number $m$, increasing concave functions $g_1,g_2,\dots,g_m\colon \R_{\geq 0}\negsp{}\to\negsp{}\R_{\geq 0}$, and matrix $W\in \R_{\geq 0}^{n\times m}$ as follows: $F(S)\coloneqq \sum\nolimits_{j=1}^m g_j\inparen{\sum\nolimits_{i\in S}W_{ij}}.$
    \end{definition}
    \noindent Setting $g_j(x)=\log(1+x)$ for all $x$ and $j$ in the above definition, we get the submodular function tested by \cite{amazon_submod} on Amazon Music.
    With $g_1(x)=x$ and $g_j(x)=\sqrt{x}$ for all $x$ and $j\neq 1$, we get the submodular function used by \cite{spotify_submod} to measure the quality of song playlists.
    Further examples appear in \cref{app:additional_examples}.

    Given a suitable $F\in  \evF$ and a number $k$, the goal in our motivating applications is to solve the following program:
    $$\max\nolimits_{S\subseteq [n]: \abs{S}\leq k}F(S).$$ 
    If $W$ and, hence, $F$ is known, then one can hope to find a subset $S$ such that $F(S)$ is close to the optimal value, $\opt{}$, of this program.
    However, as discussed, in many contexts, the utilities observed by a platform $\hW$ can encode societal biases, and hence, be different from the true or {\em latent} utilities $W$.
    Here, we consider a model of bias that builds on \cite{KleinbergR18,celis2020interventions}.
    In this model, items belong to one of $p$ disjoint groups $G_1,G_2,\dots,G_p$.
    We weaken the assumption in \cite{KleinbergR18,celis2020interventions} (and related models in \cite{mehrotra2022intersectional})by allowing $W$ to be arbitrary and requiring $G_1,G_2,\dots,G_p$ to be generated stochastically.
    In particular, given sizes $\abs{G_1},\abs{G_2},\dots,\abs{G_p}$, $G_1$ is constructed by selecting $\abs{G_1}$ items uniformly without replacement, $G_2$ is constructed by selecting $\abs{G_2}$ items uniformly from those remaining, and so on.
    We define $\gamma>0$ to be a constant such that for each $1\leq \ell\leq p$
    \begin{align*}
         \ifconf\textstyle\else\fi\abs{G_\ell}\geq \gamma n.
    \end{align*}
    The model of bias is as follows.
    \begin{definition}[\textbf{Model of bias}]\label{def:bias_model}
        For each $\ell$, there is an unknown and increasing bias function $\phi_\ell\colon \R_{\geq 0}\to \R_{\geq 0}$ such that
        the observed utility of item $i\in G_\ell$ for the $j$-th attribute is 
        $\hW_{ij}\coloneqq \phi_\ell\inparen{W_{ij}}.$
    \end{definition} 
    \noindent \cref{sec:related_work_app} extends this to overlapping groups.
    The specific groups vary with application and could, for instance, be defined by socially salient attributes (e.g., gender, race, or age) associated with each item (see examples in \cref{sec:intro}).
    For any $F\in \evF$, parameterized by $W$, let $\hF$ be the corresponding function parameterized by $\hW$.
    In this work, we study the following problem.
    \begin{problem}\label{prob:1} 
        Given $m$ functions $g_1,g_2,\dots,g_m$ and an $n\times m$ matrix $\hW$, parameterizing a monotone submodular function $\hF$, without knowledge of the specific bias functions $\phi_1,\phi_2,\dots,\phi_p$, find a subset $S$ of size at most $k$ such that $F(S)\approx \opt$.
    \end{problem}  
    \noindent {Note that for linear functions, which also belong to the family $\evF$, if the bias functions are ``multiplicative'' then \cite{KleinbergR18,celis2020interventions} already show that the set $S_u$ that maximizes observed utility subject to satisfying the proportional representation constraints satisfies $F(S_u)\approx \opt$.
    Since $S_u$ can be efficiently found when $F$ is linear, this answers \cref{prob:1} for linear $F$ and multiplicative bias functions. 
    This work studies generalizations to other bias functions and submodular $F$.}

\section{Theoretical results}   \label{sec:thy_results} 

    \subsection{Fairness constraints do not guarantee high latent utility} 
        
        In this section, we consider a family of fairness constraints and show that no fairness constraints in this family can guarantee a constant fraction of the optimal latent utility.
        
        For each $1\leq \ell\leq p$, let $\gamma_\ell$ be the fraction of all items that are in $G_\ell$, i.e., $\gamma_\ell\coloneqq \frac{\abs{G_\ell}}{n}$.
        For any vectors $u$ and $v$, the constraint specified by $u$ and $v$ requires the output subset $S$ to satisfy $$1\leq \ell\leq p,\quad \abs{S\cap G_\ell}\leq (u_\ell + v_\ell \gamma_\ell) \cdot k.$$ %
        {This family of fairness constraints generalizes the family introduced in \cref{sec:intro} (which corresponds to the subset of the above family where $v=0$).
        In particular, the above family captures} the equal representation when $u_\ell=\frac{1}{p}$ and $v_\ell=0$ for each $\ell$ and proportional representation when $u_\ell=0$ and $v_\ell=1$ for each $\ell$.
        Given $(u,v)$, let $S_{uv}$ be the subset that maximizes the observed utility $\hF$ subject to satisfying the constraints specified by $u$ and $v$.
        {Our first result studies the utility of {$S_{uv}$ under multiplicative bias, as studied by \cite{KleinbergR18,celis2020interventions}.}}
        \begin{theorem}[\textbf{Fairness constraints do not guarantee any fraction of $\opt{}$}]\label{thm:main_negative_result}
            Define $\phi_1(x)=\beta_1\cdot x$ and $\phi_2(x)=\beta_2\cdot x$ for all $x$.
            For any $0<\eps<1$, $u=(u_1,u_2)$, and $v=(v_1,v_2)$
            there exists 
            \begin{itemize}[itemsep=0pt,leftmargin=10pt]
                \item a submodular function $F\in \evF$, %
                \item numbers $0\leq \gamma_1,\gamma_2\leq 1$ specifying group sizes,
                \item numbers $0<\beta_1,\beta_2\leq 1$ specifying $\phi_1$ and $\phi_2$ respectively, and
                \item family of $n\times m$ matrices $\cW$ parameterized by $n$,
            \end{itemize}
            such that,
            for any $k\geq \poly(\eps^{-1})$, $n\geq k\cdot \poly(\eps^{-1})$, and $W\in\cW(n)$,
            \begin{align*}
                \ifconf\textstyle\else\fi\Pr\insquare{F(S_{uv})\leq \eps\cdot  \opt{}} \geq 1-\eps,
            \end{align*}
            where the probability is over the randomness in $G_1$ and $G_2$.
        \end{theorem}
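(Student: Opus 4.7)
The plan is to construct, for each given $(u,v,\epsilon)$, an instance with $m = \lceil \poly(1/\epsilon)\rceil$ attributes and $g_j(x)=\sqrt{x}$ for all $j$, built out of two flavors of items: a small pool of ``heavy'' items of type $1$ with $W = M e_1$ for a large parameter $M = \poly(m/\epsilon)$, and abundantly many ``light'' items of each remaining type $j = 2,\dots,m$ with $W = e_j$. By standard Lagrangian analysis, the unconstrained $\opt$ is $\sqrt{k(M+m-1)} \approx \sqrt{Mk}$, attained by taking $s_1^* \approx Mk/(M+m-1)$ heavy items and $s_j^* \approx k/(M+m-1)$ of each light type. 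Thus any subset that fails to include a $\Theta(k)$ share of heavy items will incur a value dominated by $\sqrt{M s_1} \ll \sqrt{Mk}$.

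I would then set $\beta_1=1$, $\beta_2=\delta$ for a very small $\delta$ (e.g.\ $\delta = 1/M^2$), and choose the group sizes tailored to $(u,v)$. WLOG (by relabeling groups) assume $c_1 \le c_2$ where $c_\ell = u_\ell + v_\ell \gamma_\ell$. The main case has $\gamma_1 = \epsilon^2$, $\gamma_2 = 1-\epsilon^2$, and the heavy pool sized as $T_1 = k$. The point is that the observer's marginals in $\hF$ favor items from $G_1$ by a factor $1/\delta$, so greedily it first exhausts the $\gamma_1 T_1 = \epsilon^2 k$ heavy items available in $G_1$; for the remainder of its budget it compares the marginal $\frac{1}{2\sqrt{s_j}}$ of a $G_1$ light item against the marginal $\frac{\delta\sqrt M}{2\sqrt{s_1}}$ of a $G_2$ heavy item, and I would choose $\delta$ small enough that the $G_1$ light marginal wins even after the $G_1$ heavy supply is gone. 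The observer therefore spreads the remaining budget over the $m-1$ light attributes, yielding
\[
F(S_{uv}) \le \sqrt{M\cdot \epsilon^2 k} + \sqrt{(m-1)k} = O(\epsilon)\sqrt{Mk} = O(\epsilon)\cdot \opt,
\]
provided $M$ is taken large enough in terms of $m$. A Chernoff/Hoeffding bound on the hypergeometric distribution of $|G_1 \cap \text{heavy}|$ with mean $\epsilon^2 k$ gives the desired $1-\epsilon$ probability bound whenever $k \ge \poly(\epsilon^{-1})$.

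The main obstacle is handling $(u,v)$ uniformly. Two regimes are easy: when $c_1 + c_2 < 1$ the observer's set is forced to have $|S_{uv}| < k$, so a direct submodular-upper-bound suffices; and when one $c_\ell$ is (essentially) zero, one swaps the roles of the groups so that the group the observer is confined to has only $\epsilon^2 k$ heavy items. The delicate case is $c_1, c_2$ both moderate (say both $\ge 1/2$), because then the observer's per-group budget would, in principle, allow it to dig into $G_2$ heavy items via the cross-group sum inside each $\sqrt{\cdot}$. Preventing this is exactly where the construction must be tuned: choosing $M$ large relative to $m/\epsilon^2$ and $\delta$ small relative to $1/M$ so that the ``observed'' value $\sqrt{M\delta}$ of a $G_2$ heavy item is beaten by the marginal of yet another $G_1$ light item throughout the greedy execution. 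I expect the chief work of the proof to be this marginal-comparison step, carried out with explicit scheduling of greedy picks under a worst-case tiebreaker, together with a verification that the parameter choices of $(m,M,\delta,\gamma_\ell)$ remain admissible ($\beta_\ell\in(0,1]$, $\gamma_\ell\in[0,1]$, $T_1 \le n$) simultaneously with the concentration inequality controlling the random partition into $G_1,G_2$.
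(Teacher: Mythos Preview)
Your construction has a genuine gap in precisely the ``delicate case'' you flag. Take $u_1=u_2=\tfrac12$, $v_1=v_2=0$, so $c_1=c_2=\tfrac12$ regardless of $\gamma_\ell$. With your choices $\gamma_1=\eps^2$, $\beta_2=\delta$, the observer first spends its $G_1$-budget of $\tfrac{k}{2}$: it takes all $\approx\eps^2 k$ heavy $G_1$ items, then fills the remaining $G_1$-budget with light $G_1$ items (your marginal comparison correctly shows light $G_1$ beats heavy $G_2$ throughout this phase). But now the $G_1$ budget is \emph{exhausted} by the constraint $|S\cap G_1|\le \tfrac{k}{2}$, and the remaining $\tfrac{k}{2}$ items must come from $G_2$. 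At this point the relevant comparison is heavy $G_2$ versus light $G_2$, and here the bias factor $\delta$ \emph{cancels}: the marginal of a heavy $G_2$ item is $\tfrac{\delta M}{2\sqrt{x_1}}$ with $x_1\approx M\eps^2 k$, i.e.\ $\tfrac{\delta\sqrt{M}}{2\eps\sqrt{k}}$, while the marginal of a light $G_2$ item of type $j$ is $\tfrac{\delta}{2\sqrt{x_j}}$ with $x_j\approx \tfrac{(1/2-\eps^2)k}{m-1}$, i.e.\ $\tfrac{\delta\sqrt{m-1}}{2\sqrt{(1/2-\eps^2)k}}$. Heavy $G_2$ wins iff $M>\tfrac{\eps^2(m-1)}{1/2-\eps^2}\approx 2\eps^2(m-1)$, which is forced by your requirement $M\gg m/\eps^2$. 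So the observer takes $a_2\approx \tfrac{k}{2}$ heavy $G_2$ items, and then
\[
F(S_{uv})\ge \sqrt{M\inparen{\eps^2 k+\tfrac{k}{2}}}\ge \sqrt{\tfrac{Mk}{2}}\ge \tfrac{1}{\sqrt{2}}\cdot\opt\cdot\inparen{1-o(1)},
\]
which is nowhere near $\eps\cdot\opt$. Shrinking $\delta$ cannot fix this: the $\delta$ appears identically on both sides of the heavy-$G_2$ vs.\ light-$G_2$ comparison.

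The paper sidesteps exactly this obstacle by using \emph{different} concavities across attributes in the moderate-$c_\ell$ regime: it takes $g_1(x)=x^{1/3}$ and $g_2(x)=\eps^3\sqrt{x}$ (with only $m=2$ attributes). Because $(x^{1/3})'=\tfrac13 x^{-2/3}$ decays faster than $(\sqrt{x})'=\tfrac12 x^{-1/2}$, the marginal gain from a heavy $G_2$ item (which feeds the already-large $x_1$) is dominated, for large $k$, by the marginal gain from a light $G_2$ item; an explicit swap argument then shows $|S_{uv}\cap G_2\cap A|=0$. Your uniform choice $g_j(x)=\sqrt{x}$ cannot produce this asymmetry. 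To salvage your approach you would need either to break the homogeneity of the $g_j$'s (as the paper does) or to find a different mechanism that makes heavy $G_2$ items unattractive to the observer \emph{after} the $G_1$ budget is spent.
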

        
        \noindent Thus, for any fairness constraint in the above family and any $\eps>0$, there is a submodular function in the family $\evF$ and family of utilities, such that the subset maximizing the observed utility subject to satisfying the fairness constraint has a utility of at most $\eps$ times the optimal (with high probability).
        
        \cref{thm:main_negative_result} straightforwardly generalizes to $p>2$ groups by adding empty groups. 
        It also generalizes to $m>3$ attributes by fixing utilities so that $W_{ij}=0$ for each item $i$ and $j\in [m]\backslash[3]$.
        {Note that in the above result the bias functions, $\phi_1$ and $\phi_2$, are multiplicative with positive multiplicative constants $\beta_1,\beta_2>0$.
        This ensures that the order of utilities of items is preserved within groups (irrespective of how close $\beta_1$ and $\beta_2$ are to 0).
        Under this property, \cite{celis2020interventions} shows that the proportional representation constraint recovers near-optimal latent utility with a linear function $F$. 
        In contrast, \cref{thm:main_negative_result}  shows that this is not true when $F$ is sub-modular (even when the bias functions satisfy the same multiplicative property).
        In fact, \cref{thm:main_negative_result}   shows that if $F$ is submodular then the subset maximizing the observed utility subject to satisfying proportional representation (or any other fairness constraint in the above family for that matter) can have a latent utility significantly smaller than the optimal.}
        The proof of {\cref{thm:main_negative_result} appears in \cref{sec:proofof:thm:main_negative_result}.}
        
    \subsection{An algorithm that recovers near-optimal latent utility}
    
    In this section, we present our main algorithm, \cref{alg:disj}, and the theoretical guarantee of its performance (\cref{thm:disj}).
    \cref{alg:disj} can be used for any function in the family $\evF$.
    It outputs a subset that proportionally represents items from each group and has near-optimal latent utility for functions in $\evF$ that satisfy an {algorithmically verifiable ``disjointedness'' assumption (\cref{asmp:disj}).}

    \paragraph{Disjointedness assumption.}
    {For each $j$, let $C_j$ be the set of items $i$ which have positive utility for the $j$-th attribute, i.e.,}
    $$\ifconf\textstyle\else\fi C_j\coloneqq \inbrace{i\in [n]: W_{ij}>0}.$$
    Intuitively, $C_j$ is the set of items that are relevant to the ``$j$-th attribute.''
    More concretely, in many of our motivation contexts, attributes correspond to different categories of items such as genres, topics, or retail types \cite{amazon_submod,microsoft_diverse, KleinbergRaghu18}.
    Here, $C_j$ is the set of items in the $j$-th category.
    \begin{assumption}[\textbf{Disjoint categories}]\label{asmp:disj}
        The matrix $W\in \R^{n\times m}$ is such that $C_1,\dots,C_m$ are disjoint.
    \end{assumption} 
    \noindent \cref{asmp:disj} holds in any context where the relevant categories of items are disjoint.
    In the context, of web search, the above assumption is identified and studied by \cite{microsoft_diverse}.
    Deviations from \cref{asmp:disj} can deteriorate the performance of \cref{alg:disj}.
    In \cref{sec:empirical_results}, we evaluate the performance of \cref{alg:disj} on MovieLens 20M data \cite{MovieLensDataset} that does not satisfy \cref{asmp:disj} (\cref{fig:real_world_4}).\footnote{In the MovieLens 20M data, the sets $C_1,\dots,C_m$ denote the genders of movies and can be non-disjoint when a movie has more than one genre.}
    We also evaluate \cref{alg:disj} on synthetic data that violates \cref{asmp:disj} and is inspired by the deployment of submodular-maximization based algorithms by \cite{spotify_submod} (\cref{fig:syn_iid_disj}).

    \paragraph{Our algorithm.}
        {\cref{alg:disj} outputs a subset that satisfies the proportional representation constraint.
        However, crucially, its output may not maximize the observed utility subject to satisfying the proportional representation constraint.
        Instead, its output satisfies data-dependent fairness constraints (which, in particular, guarantee that the output proportionally represents each group $G_1,G_2, \dots,G_p$).
        In particular, the constraints of \cref{alg:disj} are specified by $m$ values $k_1,k_2,\dots,k_m$ such that $k_1+k_2+\dots+k_m=k$:
        for each $j$, the output subset has $k_j$ items from the $j$-th category $C_j$ that proportionally represent the protected groups $G_1,G_2,\dots,G_p$.
        Note that this, in particular, guarantees that for any $k_1,k_2,\dots,k_m$, the output subset has a proportional number of items from each group $G_1, G_2, \dots, G_p$.}

        {The algorithm is divided into two parts.
        The first part computes the ``right'' constraints from given data.
        This fixes the parameters $k_1,k_2,\dots,k_m$ which are used in the second part.
        The second part performs submodular maximization subject to the computed constraints:
        for each $j$, it selects a subset $S_j$ of size $k_j$ that maximizes the observed utility $g_j\inparen{\sum_{i\in S} W_{ij}}$ subject to selecting a proportional number of items from each group $G_1, G_2,\dots, G_p$.
        \cref{alg:disj} outputs the subset $S_1\cup S_2 \cup \dots \cup S_m$.}

    \paragraph{Main theoretical result.}
    {Our theoretical result on the performance of \cref{alg:disj} assumes that there is one group, say  $G_1$, such that the utilities of items in $G_1$ face no ``bias'', i.e., the bias function $\phi_1$ corresponding to $G_1$ is the identity function.
    When utilities capture the relative ``quality'' or ``relevance'' of items then this assumption says that the observed utilities of items in $G_1$ act as a ``reference'' with respect to which the bias functions for all other groups are defined. 
    For instance, consider the multiplicative bias functions where $\phi_\ell(x)=\beta_\ell\cdot x$ for each $x$ and $1\leq \ell\leq p$.
    The assumption that $\phi_1$ is the identity function corresponds to re-scaling the other bias functions by $\beta_1$, i.e., choosing $\phi_\ell(x)=\frac{\beta_\ell}{\beta_1}\cdot x$ for each $x$ and $\ell$.}

    The performance of \cref{alg:disj} depends on the range of the non-zero entries of $W$.
    {We define a parameter $\tau$ to capture this range.}
        \begin{definition}\label{asmp:bounded_utils}
            Let $\tau>0$ be the smallest constant such that
            for each item $i\in [n]$ and attribute $j\in[m]$,
            either $W_{ij}=0$ or $\tau<W_{ij} < 1/\tau$.
        \end{definition}
    \noindent The performance of \cref{alg:disj}  depends on $\tau$ due to a certain concentration inequality, involving sums of the form $\sum_{i\in S\cap G_\ell}W_{ij}$, which depends on $\tau$ (\cref{lem:invariance} in \cref{sec:proofof:thm:disj}). 
        \begin{theorem}[\textbf{Latent utility guarantee of \cref{alg:disj}}]\label{thm:disj}
            {Suppose $\phi_1$ is the identity function.} 
            There is an algorithm (\cref{alg:disj}) that, given observed utilities $\hW\in \R^{n\times m}$ and evaluation oracles for $g_1,g_2,\dots,g_m:\R_{\geq 0}\to \R_{\geq 0}$,
            outputs a set $S$ of size $k$ with the following property (under \cref{asmp:disj}):
            For any $\eps,\tau,\gamma>0$ and any {increasing functions $\phi_2,\phi_3,\dots,\phi_p:\R_{\geq 0}\to \R_{\geq 0}$}, there is a large enough $k_0$ (dependent on $\eps,\gamma,m$, and $\tau$) such that for any $k\geq k_0$, 
            with probability at least $1-\eps$
            \begin{align*}
                \ifconf\textstyle\else\fi F(S)\geq \opt{} \cdot \inparen{1-\eps}.
            \end{align*}
            \noindent  Where the probability is over the randomness in the choice of $G_1,\dots G_p$.
            The algorithm makes $O(nk)$ evaluations of each $g_1,g_2,\dots,g_m$ and does $O(nk\log{n})$ additional arithmetic operations.
        \end{theorem}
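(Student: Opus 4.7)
The plan is to exploit \cref{asmp:disj} to decompose $F$ across categories, use $G_1$ (whose observed utilities equal the latent ones, since $\phi_1$ is the identity) as a proxy to estimate the per-category sums, and then rely on concentration over the random group assignment to argue that picking items proportionally within each group recovers essentially the same contribution as picking the top items in each category overall.

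\emph{Step 1 (decomposition).} Under \cref{asmp:disj} the supports $C_1,\dots,C_m$ are disjoint, so for every $S\subseteq[n]$,
\[
  F(S) \;=\; \sum_{j=1}^m g_j\!\Bigl(\sum_{i\in S\cap C_j} W_{ij}\Bigr).
\]
Write $k_j^\star\coloneqq\abs{\sopt\cap C_j}$ and let $V_j(t)$ denote the sum of the top $t$ values of $\inbrace{W_{ij}}_{i\in C_j}$, so that $\opt=\sum_{j=1}^m g_j(V_j(k_j^\star))$ and $\sum_j k_j^\star = k$. The plan reduces to (i)~picking per-category budgets $k_j$ close to $k_j^\star$, and (ii)~realizing a within-category sum close to $V_j(k_j)$.

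\emph{Step 2 (estimating $V_j(t)$ from $G_1$).} Since $\phi_1$ is the identity, $\hW_{ij}=W_{ij}$ for every $i\in G_1$, so sorting $C_j\cap G_1$ by $\hW_{\cdot j}$ is the same as sorting it by $W_{\cdot j}$. The first part of \cref{alg:disj} uses this to compute, for every $j$ and every candidate $t\in [k]$,
\[
  U_j(t) \;\coloneqq\; \text{sum of the top } \lceil \gamma_1 t\rceil \text{ values of } \inbrace{\hW_{ij}}_{i\in C_j\cap G_1},
\]
and treats $\gamma_1^{-1}U_j(t)$ as a proxy for $V_j(t)$. It then selects $(k_1,\dots,k_m)$ with $\sum_j k_j=k$ maximizing $\sum_j g_j(\gamma_1^{-1}U_j(k_j))$; because each $t\mapsto g_j(\gamma_1^{-1}U_j(t))$ is concave in $t$, this can be done by a greedy marginal rule, accounting for the stated $O(nk\log n)$ arithmetic budget.

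\emph{Step 3 (invariance under random grouping).} The key technical ingredient — the concentration lemma alluded to as \cref{lem:invariance} — is that, because $G_1,\dots,G_p$ are drawn by uniform sampling without replacement and are independent of $W$, for any fixed $T\subseteq[n]$, attribute $j$, and group $\ell$,
\[
  \sum_{i\in T\cap G_\ell} W_{ij} \;=\; \gamma_\ell \sum_{i\in T} W_{ij} \;\pm\; O\!\Bigl(\tau^{-1}\sqrt{\abs{T}\,\log(nm/\eps)}\Bigr),
\]
via a Hoeffding/Serfling inequality for sampling without replacement, with the range of the summands controlled by \cref{asmp:bounded_utils}. A union bound over the $\poly(n,m)$ prefix-type sets (top-$t$ of $C_j$ by $W_{\cdot j}$, and top-$t$ of $C_j\cap G_\ell$ by $W_{\cdot j}$, for all $t,j,\ell$) makes this hold simultaneously for every set the analysis will inspect.

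\emph{Step 4 (combining).} Applying Step~3 to ``top-$t$ of $C_j$'' yields $\gamma_1^{-1}U_j(t)=V_j(t)\pm O(\tau^{-1}\sqrt{k})$ uniformly in $t\leq k$, so the greedy optimum $(k_1,\dots,k_m)$ satisfies $\sum_j g_j(V_j(k_j))\geq \opt - O(m\tau^{-1}\sqrt{k}\cdot L)$, where $L$ is the Lipschitz modulus of the $g_j$'s on the bounded relevant range (which itself depends only on $\tau, k$). Because each $\phi_\ell$ is increasing, the second phase of \cref{alg:disj} picks the top $\lceil \gamma_\ell k_j\rceil$ items of $C_j\cap G_\ell$ by $\hW_{\cdot j}$, which coincides with the top $\lceil \gamma_\ell k_j\rceil$ items of $C_j\cap G_\ell$ by $W_{\cdot j}$. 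Summing these over $\ell$ and applying Step~3 once more gives a per-category realized sum $V_j\geq V_j(k_j) - O(m\tau^{-1}\sqrt{k})$; monotonicity of $g_j$ and a final application of concavity/Lipschitzness yield
\[
  F(S) \;\geq\; \bigl(1 - O(\tau^{-1}m^2 k^{-1/4})\bigr)\,\opt,
\]
which is at least $(1-\eps)\opt$ once $k\geq k_0(\eps,\gamma,m,\tau)$.

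\emph{Main obstacle.} The quantitatively delicate step is Step~3: the concentration bound has to (a)~handle sampling without replacement rather than i.i.d.\ groups, (b)~carry the right $\tau$-dependence so that the additive error is usable when plugged into $g_j$'s Lipschitz estimate, and (c)~hold uniformly over every $W$-determined prefix set that the algorithm's analysis encounters. The union bound is affordable only because those families of prefix sets have polynomial — rather than exponential — size once the orderings within each $C_j$ and $C_j\cap G_\ell$ are fixed; stitching the group-wise errors together to produce a $1-o_k(1)$ factor (and not merely a constant-factor approximation) is where most of the technical care is concentrated.
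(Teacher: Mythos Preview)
Your decomposition (Step~1), the use of $G_1$ as an unbiased proxy (Step~2), the Hoeffding/Serfling-style concentration for sampling without replacement (Step~3), and the fact that each $\phi_\ell$ is order-preserving within $G_\ell$ (Step~4) are exactly the right ingredients, and match the paper's strategy. There is, however, a genuine gap in how you translate additive errors in the \emph{arguments} of $g_j$ into multiplicative errors in $F$.

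You bound the error in terms of a Lipschitz modulus $L$ of the $g_j$'s ``on the bounded relevant range.'' But nothing in your budget-selection step prevents some $k_j$ from being $0$ or very small, in which case $V_j(k_j)$ can be arbitrarily close to $0$; for functions like $g_j(x)=\sqrt{x}$ or $\log(1+x)$, the derivative blows up at $0$, so there is no finite $L$ depending only on $\tau$ and $k$. The paper handles this in two ways you have not incorporated. First, \cref{alg:disj} explicitly seeds $\wt S$ with at least $\sqrt{k}$ items from every $C_j\cap G_1$, so that every $k_j\geq\sqrt{k}$ and hence $\sum_{i\in S\cap C_j}W_{ij}\geq \tau\sqrt{k}$; this is the floor that makes the ``relevant range'' bounded away from $0$. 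Second, and more importantly, the paper never uses a Lipschitz bound: it works multiplicatively throughout, using concavity and $g_j(0)\geq 0$ to get $g_j((1-\eta)x)\geq(1-\eta)\,g_j(x)$. With these two pieces your Step~4 goes through cleanly.

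There is also a smaller point in Step~3/4 you are glossing over. The sets ``top-$t$ of $C_j\cap G_\ell$'' that the algorithm actually picks are \emph{not} independent of $G_\ell$, so you cannot apply the concentration to them directly. What you can do is apply concentration to the $W$-determined set $T_j=\{\text{top-}k_j\text{ of }C_j\}$ to control both $\abs{T_j\cap G_\ell}$ and $\sum_{i\in T_j\cap G_\ell}W_{ij}$, and then compare the algorithm's pick (the top $\gamma_\ell k_j$ of $C_j\cap G_\ell$) to $T_j\cap G_\ell$; the discrepancy is a prefix of length $O(\sqrt{k_j\log(m/\delta)})$ in the sorted list of $C_j\cap G_\ell$, and you bound its contribution using that these are the \emph{smallest} entries of the prefix (this is the content of the paper's \cref{lem:pos_result_linear}). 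Your union bound over ``prefix-type sets'' is the right instinct, but the actual family you union-bound over should be the $W$-determined prefixes of each $C_j$ (there are $O(nm)$ of them; the paper instead unions over the $(k+1)^m$ ``locally optimal'' budget vectors), not the $G_\ell$-dependent prefixes.
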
 
        \noindent {If the latent utilities are known, then under the disjointedness assumption (\cref{asmp:disj}) one can efficiently find a set $S$ that with the optimal latent utility using standard submodular maximization algorithms \cite{microsoft_diverse}.
        The challenge is to output a high latent utility subset when the latent utilities are unknown, and where existing submodular-maximization algorithms (either with or without fairness constraints) can have a low latent utility (\cref{thm:main_negative_result}).}
        \cref{thm:disj} shows that, for  $k$ is ``large,'' \cref{alg:disj} outputs a set with {near-optimal latent utility \textit{without knowing} the latent utilities.}
         
        \cref{thm:disj} can be extended (with the same proof) to a generalization of the bias model where the bias parameter corresponding to $W_{ij}$ not only depends on the protected group(s) which $i$ is in but also on the index $j$:
        {For each $1\leq \ell\leq p$ and  $j$-th attribute there is an increasing function $\phi_{\ell j}\colon \R\to \R$ such that the observed utilities of an item $i\in G_t$ are
        \begin{align*}
            \ifconf\textstyle\else\fi
            (\hW_{i1},\hW_{i2},\dots,\hW_{im}) = (\phi_{t1}(W_{i1}), \phi_{t2}(W_{i2}),\dots, \phi_{tm}(W_{im})).
            \yesnum\label{eq:extension_of_model}
        \end{align*}
        \noindent This reduces to the model of bias in \cref{def:bias_model} when $\phi_{t\ell}=\phi_\ell$ for each $1\leq t\leq m$.}
        The parameter $k_0$ in \cref{thm:disj} depends on $\eps,$ $\gamma$, $m$, and $\tau$ as $\wt{O}\inparen{\eps^{-4}\tau^{-4}\gamma^{-10}m^8}$.
        For a general submodular function, it is \np-hard to output a set $S$ with $F(S)\geq (1-e^{-1})\cdot \opt$ \cite{feige1998threshold}.
        \cref{alg:disj} gives a stronger guarantee (for large $k$) because this hardness result does not hold under \cref{asmp:disj}:
        if {$\phi_1,\phi_2,\dots,\phi_p$} are known and \cref{asmp:disj} holds, then there is an efficient algorithm to find a subset $S$ with optimal latent utility \cite{microsoft_diverse}.
        Finally, we remark that in the statements of our results, we have tried to capture the dependence on each parameter as cleanly as possible and not tried to optimize the constants.
        The proof of \cref{thm:disj} appears in \cref{sec:proofof:thm:disj}.

        \renewcommand{\algorithmicrequire}{\textbf{Input:}}
        \renewcommand{\algorithmicensure}{\textbf{Output:}}

        \begin{algorithm}[t]
        \caption{}\label{alg:disj}
        \begin{algorithmic}[1]
            \Require
            A matrix $\hW\in \R^{n\times m}$, a number $k\in \N$, groups $G_1,G_2,\dots,G_p$, and sets $C_1,\dots,C_m$
            \Ensure A subset $S$ of size $k$
            \vspace{2mm}
            \item[] $\triangleright$ \textit{Part 1: Compute data-dependent constraints}
            \State Initialize $\wt{S} = \emptyset$ and define following function
            \begin{align*}
                    \ifconf\textstyle\else\fi
                    \forall T\subseteq[n],
                    \quad 
                    \wt{F}(T) \coloneqq 
                    \ifconf\textstyle\else\fi
                    \sum\nolimits_{j=1}^m g_j\inparen{\frac{n}{\abs{G_1}}\cdot\sum\nolimits_{i\in T\cap G_1}\hW_{ij}}.
                    \ifconf\textstyle\else\fi
                    \yesnum\label{eq:rescaled_utility}
                \end{align*}
            \For{$1\leq j\leq m$}
                \State Sort items $i$ in $C_j\cap G_1$ in decreasing order of $\wt{F}(i)-\wt{F}(\emptyset)$,
                add the first $\min\sinbrace{\sqrt{k}, \abs{C_j\cap G_1}}$ \white{.} \qquad \white{.} \quad items in $\wt{S}$
            \EndFor{}
            \While{$\sabs{\wt{S}} < k\cdot \frac{\abs{G_1}}{n}$}
                \State Set $\wt{S}\coloneqq \wt{S}\cup \inbrace{i}$ where $i$ it the item in $G_1$ that maximizes the marginal increase in observed \white{...}\white{.} utility: $\wt{F}(\wt{S}\cup \inbrace{i})-\wt{F}(\wt{S})$
            \EndWhile 
            \vspace{1mm}
            \State Let $k_j\coloneqq \sabs{\wt{S}\cap C_j}$ for each $1\leq j \leq m$
            \vspace{3mm}
            \item[] $\triangleright$ \textit{Part 2: For each $j$, select a set $S_j\subseteq C_j$ of size $k_j$ that maximizes the observed utility while satisfying proportional representation}
            \State Initialize $S_j\coloneqq \emptyset$ for each $1\leq j\leq m$ %
            \vspace{1mm}
            \For{$1\leq j\leq m$ and $1\leq t\leq k_j$}
                    \State Define $\cC$ as the set of items in $C_j$ that can be added to $S_j$ without violating proportional \white{...}\quad \white{..}\quad  representation constraints
                    \State Set $S_j\coloneqq S_j\cup \inbrace{i}$, where $i$ is the item in $\cC$  with the highest marginal utility:
                    $$g_j\inparen{W_{ij}+\sum\nolimits_{t\in S_j} W_{tj}} - g_j\inparen{\sum\nolimits_{t\in S_j} W_{tj}}.$$ 
            \EndFor
            \vspace{1mm}
            \State \Return $S=S_1\cup S_2\cup\dots\cup S_m$
        \end{algorithmic}
    \end{algorithm}

    \subsection{Proof overviews of theoretical results}

    \subsubsection{Technical challenges in extending approaches for linear $F$} 
        Consider the linear function $F(S)=\sum\nolimits_{i\in S}W_{i1}$ and the subset $S_u$ that maximizes the observed utility for $F$ subject to satisfying proportional representation constraints.
        \cite{celis2020interventions,mehrotra2022intersectional} show that the latent utility of $S_u$, $F(S_u)$, is at least $(1-o_k(1))\cdot \opt{}$.
        This result relies on two properties of any linear function $F$:
        \begin{enumerate}[itemsep=0pt,leftmargin=\leftmarginCUSTOM]
            \item For any subset $S$, $F(S) =
            \sum\nolimits_{\ell\in [p]}  F(S\cap G_\ell)$
            \item For any increasing functions $\phi_1,\phi_2,\dots,\phi_p$, $\ell\negsp{}\in\negsp{} [p]$, and $R\negsp{}\subseteq\negsp{}\negsp{} [n]$,
            $$\ifconf\textstyle\else\fi\argmax\nolimits_{S\subseteq R\cap G_\ell:\abs{S}\leq k} \hF(S) \Stackrel{}{=} \argmax\nolimits_{S\subseteq R\cap G_\ell: \abs{S}\leq k} F(S).$$
        \end{enumerate} 
    
        \noindent Suppose $\sopt{}$ satisfies proportional representation.  
        Using property 1, one can show that $ S_u =  \bigcup\nolimits_{\ell=1}^p \hT_\ell$ and $\sopt{} =  \bigcup\nolimits_{\ell=1}^p T_\ell$, where 
        \[
            \hT_\ell \ \coloneqq \argmax_{\substack{T\subseteq G_\ell, \abs{T}\leq \sfrac{k\abs{G_\ell}}{n}}}  \hF(T)
            \ \ \text{and}\ \ 
            T_\ell\ \coloneqq \argmax_{\substack{T\subseteq G_\ell, \abs{T}\leq \sfrac{k\abs{G_\ell}}{n}}}  F(T).
        \]
        \noindent Further, because of property 2 (with $R=[n]$), it follows that for any $\ell\in [p]$, $\hT_\ell=T_\ell$ and, hence, $S_u=\sopt{}$.
        This relies on the assumption that $\sopt{}$ satisfies proportional representation.
        This may not be always true, but one can show that, with high probability, $\sopt{}$ \textit{nearly}-satisfies proportional representation.
        Using this and accounting for approximations in the above argument one can show that $F(S_u)\approx F(\sopt{}).$
        However, unfortunately, straightforward examples show that neither of the above properties holds for submodular functions in $\evF$. 
         
    \subsubsection{{Proof overview of \cref{thm:disj}}}
        Under \cref{asmp:disj}, we prove the following variants of properties 1 and 2 stated above:
        \begin{itemize}[itemsep=-0pt,leftmargin=14pt]
            \item For any subset $S$, $F(S) =
            \sum\nolimits_{j\in [m]}\sum\nolimits_{\ell\in [p]}  F(S\cap G_\ell\cap C_j)$.
            \item For any increasing functions $\phi_1,\phi_2,\dots,\phi_p$, $\ell\in [p]$, and $j\in[m]$
            $$\ifconf\textstyle\else\fi \argmax\nolimits_{S\subseteq C_j\cap G_\ell:\abs{S}\leq k} \hF(S) \Stackrel{}{=} \argmax\nolimits_{S\subseteq C_j\cap G_\ell: \abs{S}\leq k} F(S).$$
        \end{itemize}
        
        \paragraph{Observation.}
        Let $\sopt{}$ be any subset with the optimal latent utility.
        Suppose we know $k_{j}\coloneqq \abs{\sopt{}\cap C_j}$ for all $1\leq j\leq m$.
        If for all $j$, $k_{j}\geq \sqrt{k}$, then using the above properties (and adapting the analysis of \cite{mehrotra2022intersectional}), we can show that, with high probability, $S\coloneqq \bigcup_{j=1}^m S_j(k_j)$ has latent utility at least $(1-o_k(1))\cdot F(\sopt{})$.
        Where, for each $1\leq j\leq m$, $S_j(k_j)\subseteq C_j$ is the subset that maximizes the observed utility subject to selecting at most $k_j$ items and satisfying proportional representation constraint.

        Part 1 of \cref{alg:disj} computes estimates, $\tilde{k}_1, \tilde{k}_2,\dots,\tilde{k}_j$, of $k_1,$ $k_2,\dots,k_j$.
        For this, it relies on the fact that $\phi_\ell$ is the identity function for some (known) $\ell$ and that the groups $G_1, G_2,\dots, G_p$ are constructed stochastically.
        The estimated values $\tilde{k}_1,\dots,\tilde{k}_j$ specify the constraints for Part 2.
        Part 2 of \cref{alg:disj} outputs the set $S\coloneqq \bigcup_{j=1}^m S_j(\tilde{k}_j)$.
        It is possible that for some $j$, $\abs{\sopt{}\cap C_j} < \sqrt{k}$ and, hence, the above argument does not apply.
        \cref{alg:disj} avoids this by overestimating $k_j$ to ensure that $\tilde{k}_j\geq \sqrt{k}$ for all $1\leq j\leq m$.
        At a high level, this guarantees that the output set $S$ contains any item in $\abs{\sopt{}\cap C_j}$ with a ``high-utility'' with high probability.

\section{{Empirical results}}\label{sec:empirical_results}
    We evaluate \cref{alg:disj}'s performance on both synthetic and real-world data.\footnote{The code for our simulations is available at \url{https://github.com/AnayMehrotra/Submodular-Maximization-in-the-Presence-of-Biases}}

    \newcommand{\normlu}{\textsc{\rm NLU}}
      \subsection{Baselines and setup}
        We compare \cref{alg:disj}'s performance against two baselines: \uncons{} and \textsf{ProportionalRepr}.
        \uncons{}, given $k$ and observed utilities $\hW$, runs the standard greedy algorithm of \cite{nemhauser1978analysis} to find a subset of size $k$ that approximately maximizes the observed utility (\cref{alg:model_greedy} in \cref{sec:standard_algs}).
        \textsf{ProportionalRepr}, given $k$ and observed utilities $\hW$, uses a variant of the greedy algorithm that satisfies proportional representation constraints (\cref{alg:greedy_with_ub} in \cref{sec:standard_algs}). 
        \textsf{ProportionalRepr} outputs the subset of size $k$ that approximately maximizes the observed utility subject to selecting a proportional number of items from each group.

        In all simulations, we generate latent and observed utilities $W$ and $\hW$ (as explained in subsequent sections) and run algorithms with the following inputs:
        \cref{alg:disj} are given $\hW$, $k$, and the protected groups and \uncons{} is given $\hW$ and $k$.

    \subsection{Simulations with synthetic datasets}
        In this simulation, we show that \cref{alg:disj} can achieve high latent utility even in some cases where \cref{asmp:disj} does not hold.
        We consider two synthetic datasets inspired by the recommendation algorithms used on Spotify \cite{spotify_submod} and tested on Amazon music \cite{amazon_submod}.
        Among these, the first dataset and the corresponding function do not satisfy \cref{asmp:disj}.
        
        \paragraph{Setup.}
        In both simulations, the task is to recommend a set of $k \coloneqq 50$ songs to the current user.
        We set $n \coloneqq 250$, $m \coloneqq 3$, and consider two disjoint groups.\footnote{{Our parameter choices draw from \cite{spotify_submod}'s context (recommendation on Spotify): Spotify's algorithmically-curated playlists have $50$ songs, hence, $k=50$. 
        $m$ is small and unspecified in \cite{spotify_submod}. 
        We set $m=3$ with synthetic data to keep the corresponding scenario simple. 
        With real-world data, we try all values of $m$ (for chosen genres). 
        \cite{spotify_submod} do not specify $n$. 
        We varied $n$ over $\inbrace{100,250,500}$, observed similar results, and chose $n=250$ arbitrarily.}}
        
        In the first simulation, we fix the objective as $$F(S)\coloneqq \sum\nolimits_{i\in S}W_{i1} + \lambda \sum\nolimits_{j=2}^3 \sqrt{\sum\nolimits_{i\in S}W_{ij}}$$ with $\lambda=\frac{1}{20}$.\footnote{This is the same as the objective function used by the recommendation algorithm, Mostra, on Spotify \cite{spotify_submod}. Except that Mostra considers more than $m=3$ attributes and instead $W_{i1}$ encoding the number of  song-plays, it encodes a ``relevance score'' predicted by a learning algorithm.}
        Here, $W_{i1}\geq 0$, $W_{i2}\in \zo$, and $W_{i3}\in \zo$ denote some measure of the song's popularity $i$, whether $i$ is from an ``emerging artist,'' and whether the song has not been heard by the current user respectively.
        Intuitively, among songs $i$ with similar popularity (i.e., similar $W_{i1}$), songs from emerging artists and those not heard by the current user have a higher marginal utility.
        We draw the entries of $W$ i.i.d. from natural distributions that can arise in these contexts (see \cref{sec:implementation_details} for details).
        We divide the items into two groups $G_1$ and $G_2$; and vary the size of $G_1$ among $\inbrace{\frac{n}{4}, \frac{n}{2},\frac{3n}{4}}$.
        Here, $G_2$ can denote songs from artists that users want to hear, but which are nevertheless under-recommended due to biases in the recommendation pipeline, e.g., as have been recently observed for regional music on Spotify in India \cite{spotify_ken_india}.
        Given $G_1,G_2$ and a parameter $\beta\in [0,1]$, we generate observed utilities $\hW$ as follows:
        \begin{align*}
            \ifconf\textstyle\else\fi \hW_{i1} = \begin{cases}
                \ifconf\textstyle\else\fi W_{i1}, &\text{if } i\in G_1\ifconf\textstyle\else\fi\\
                \beta \cdot W_{i1}\ifconf\textstyle\else\fi &\text{if } i\in G_2.\ifconf\textstyle\else\fi
            \end{cases}
            ,\qquad
            \hW_{i2}=W_{i2},\ifconf\textstyle\else\fi
            \quad\text{and}\quad
            \hW_{i3}=W_{i3}.\ifconf\textstyle\else\fi
        \end{align*}
        \noindent For the first attribute, this corresponds to using $\phi_1(x)=x$ and $\phi_2(x)=\beta\cdot x$ for all $x$.
        For the last two attributes, we do not apply bias because they encode values that the platform may know.
        This violates the model in \cref{def:bias_model} where the same bias function acts on all attributes, hence, is a hard case for \cref{alg:disj}.
        (It falls into the extension of this model discussed in \cref{eq:extension_of_model}.)

            In the second simulation, we fix the objective:\footnote{This is the same as the function used by \cite{amazon_submod} except that they allow $m\geq 3$.} $$F(S)\coloneqq \sum\nolimits_{j=1}^3 \log\inparen{1+\sum\nolimits_{i\in S}W_{ij}}.$$
            Here, attributes denote genre, and the sets $C_1,C_2,C_3$ are disjoint.
            For any item $i$ in genre $h(i)$, $W_{ih(i)}$ is number of times users played song $i$ and $W_{ij}=0$ for $j\neq h(i)$.
            At a high level, this promotes the content to be diverse across genres as between two items $i$ and $j$ with a similar number of user plays, $i$ has a higher marginal utility if $\sum\nolimits_{r\in S}W_{rh(i)} < \sum\nolimits_{r\in S}W_{rh(j)}$.
            Like the previous simulation, we draw entries of $W$ i.i.d. from natural distributions that can arise in these contexts.
            We divide items into two protected groups {$G_1$ and $G_2$; and vary the size of $G_1$ among $\inbrace{\frac{n}{4}, \frac{n}{2},\frac{3n}{4}}$.
            The complete implementation {details of these simulations appear in \cref{sec:implementation_details}.}
            
        \paragraph{Results and discussion.}
            We vary $\beta\in [0,1]$ and $\abs{G_1} \in \inbrace{\frac{n}{4}, \frac{n}{2},\frac{3n}{4}}$, and report the normalized latent utilities of different algorithms.
            \cref{fig:syn_data_combined} presents the results for $\frac{\abs{G_1}}{n}=0.5$.
            The results with $\abs{G_1}\in \inbrace{\frac{n}{4},\frac{3n}{4}}$ appear in \cref{fig:syn_iid_non_disj_full,fig:syn_iid_disj_full} in \cref{sec:additional_simulations}.
            Across all figures and both synthetic datasets, \cref{alg:disj} outputs subsets with \normlu{} $>0.95$ even for values of $\beta$ close to $0$.
            In contrast, when $\beta$ approaches 0 and $\abs{G_1}= \frac{n}{2}$, \uncons{} achieves \normlu{}~ $\leq 0.85$.
            {As $\beta$ approaches 1 (i.e., as the bias decreases), the utility achieved by \uncons{} increases and approaches \normlu{}~ $\approx 1$ (which is the {optimal value and matches the performance of \cref{alg:disj}.}}
            
            Thus, we observe that \cref{alg:disj} can achieve high latent utility. 
            This observation also holds on the first synthetic dataset, where \cref{asmp:disj} does not hold.
            Hence, \cref{alg:disj} can also achieve high latent utility in some cases where \cref{asmp:disj} is violated.

    \begin{figure}[t!]
        \centering
        \par
        \hspace{8mm}
        \renewcommand{\folder}{./figures/synthetic-data}
        \hspace{-7mm}\subfigure[$\phi_1(x)=x$, $\phi_2(x)=\beta\cdot x$, $\frac{\abs{G_1}}{n}=0.5$, and $\delta=2.0$\label{fig:syn_iid_non_disj}]{
            \begin{tikzpicture}
            \node (image) at (0,-0.17)
                {\includegraphics[width=0.5\linewidth, trim={0cm 0cm 1.5cm 2cm},clip]{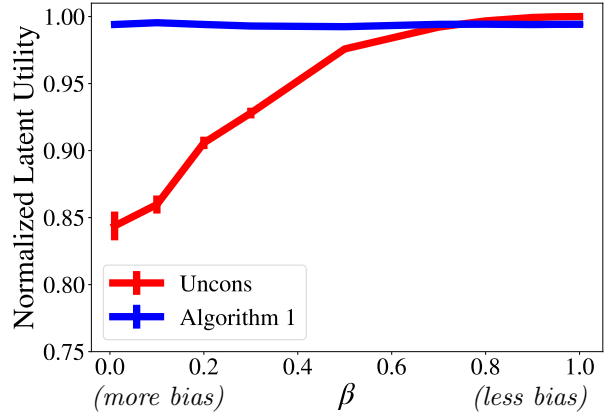}}; 
          \node[rotate=0] at (2.80,-2.75) {\textit{\small(less bias)}};
          \node[rotate=0] at (-2.14375, -2.75) {\textit{\small(more bias)}};
        \end{tikzpicture}
        }
        \hspace{8mm}
        \renewcommand{\folder}{./figures/synthetic-data}
        \hspace{-12mm}\subfigure[$\phi_1(x)=x$, $\phi_2(x)=\beta\cdot x$, $\frac{\abs{G_1}}{n}=0.5$, and $\delta=2.0$\label{fig:syn_iid_disj}]{
            \begin{tikzpicture}
            \node (image) at (0,-0.17)
                {\includegraphics[width=0.5\linewidth, trim={0cm 0cm 1.5cm 2cm},clip]{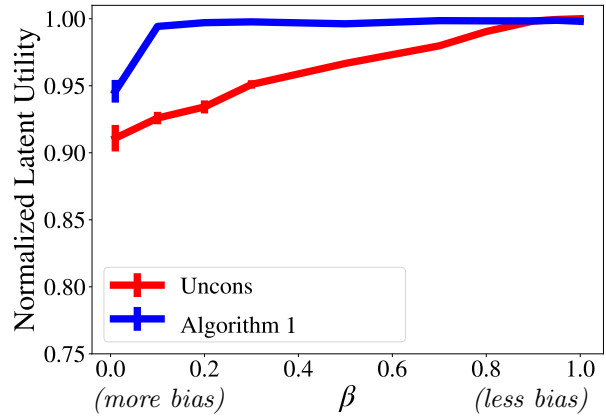}};
          \node[rotate=0] at (2.8,-2.75) {\textit{\small(less bias)}};
          \node[rotate=0] at (-2.14375, -2.75) {\textit{\small(more bias)}};
        \end{tikzpicture}
        }
        \hspace{-5mm}
        \par
        \caption{
        {\em Simulation on synthetic data:}
        We run \cref{alg:disj}  and \uncons{} (\cref{fig:syn_iid_disj,fig:syn_iid_non_disj}) on synthetic data and report their normalized latent  utility (error bars denote standard error of the mean).
        The results show that \uncons{} can lose a significant fraction of the optimal latent utility in the presence of bias (up to 15\% for $\beta<0.1$)
        while \cref{alg:disj}  have a normalized latent utility higher than 0.99.
        }
        \label{fig:syn_data_combined}
    \end{figure}

    \subsection{Simulations with real-world data}
        In this simulation, we evaluate \cref{alg:disj} on real-world data with pre-existing bias (as discussed below)
        and show that it can outperform \textsf{ProportionalRepr} and \uncons{}, even when the data does not satisfy \cref{asmp:disj}.
 
        \paragraph{Data.}
            MovieLens 20M \cite{MovieLensDataset} contains 20 million user ratings (on a scale of 0 to 5) for 27,000 movies submitted by 138,000 users of the \url{movielens.org}.
            For each movie $i$, apart from user-rating, the data has a set of genres of $i$ and relevance scores $r_{ig}\in [0,1]$ of each genre $g$ encoding ``how strongly [movie $i$] exhibits particular properties represented by [genre $g$].''
            In addition, we gathered information about the lead actor (the first-listed cast member) of each movie from \url{movielens.org}.

        \paragraph{Preprocessing.}
            For each movie $i$, we predict the (probable) gender of its lead actor using the Genderize API (\url{gender-api.com})
            and remove all movies where this prediction has confidence less than 0.9; this leaves 6612 and 1990 movies led by male and non-male actors respectively.\footnote{We choose a high threshold of 0.9 to ensure that the gender predictions have a low error rate among the remaining movies. We repeated the simulation with thresholds 0.7 and 0.8 and observed similar results.}
            Among the remaining movies, movies led by male actors have disproportionately higher relevance scores on genres that are stereotypically associated with men (e.g., ``Action'' or ``War'') compared to movies led by non-male actors (differing by up to 300\%; see Table~\ref{table:rel_score_movie} in \cref{sec:additional_simulations}).
        In contrast to relevance scores, user ratings are relatively balanced across movies led by male and non-male actors across all genres (the difference is at most 6\% across all genres, see \cref{table:user_rating_per_genre} in \cref{sec:additional_simulations}).
        (This observation also holds if we consider all 27,000 movies; see \cref{table:rel_score_movie_all} in \cref{sec:additional_simulations}.)
        Hence, compared to the user ratings, the relevance scores are systematically lower for movies led by an actor of a non-stereotypical gender in many genres.
 
        \paragraph{Setup.}
            Given a subset $T$ of genres, the task is to recommend $k$ movies from genres $T$ to the users.
            For each genre $g$, let $R_g$ be the ratio of the average relevance score of movies in this genre led by male actors and non-male actors.
            We select all genres $g$ where $R_g\geq 2$, these are $B=\{$\texttt{action}, \texttt{adventure}, \texttt{crime}, \texttt{western}, and \texttt{war}$\}$.
            Given a set $M$ of movies and a subset of selected genres $T\subseteq B$, we recommend a subset of movies $S$ that maximizes the following ``observed utility:''
            $$\hF(S)\coloneqq \sum\nolimits_{g\in T} \sqrt{\sum\nolimits_{i\in S} r_{ig}}.$$
            This function captures the benefit of recommending movies that are relevant to the selected genres $T$ and the $\sqrt{\cdot}$ captures the diminishing return of recommending multiple movies from the same genre $g\in T$.
            We use the user ratings to evaluate the quality of the recommended movies (or their ``latent utility''): Given a set of movies $S$, we say its latent utility is $$F(S)\coloneqq \frac{1}{\abs{S}}\sum\nolimits_{i\in S}{\rm rat}_{i},$$ where ${\rm rat}_i$ is the average user rating of movie $i$.

            In the simulation, we vary $k\in \inbrace{50,100, 150, 200}$ and select different nonempty subsets $T\subseteq B$ (31 such subsets).
            For each $k$ and $T$, we repeat the simulation 100 times.
            In each iteration, we draw a user $u$ uniformly at random from the set of users who have rated at least $200$ movies (19\% of the total users).
            We set $M$ to be the set of movies rated by user $u$; intuitively, this ensures that $M$ contains movies that are watched by users on the platform.
            \cref{asmp:disj} requires that each movie has a unique genre.
            This does not hold, but we still use \cref{alg:disj} (without modifications).

        \paragraph{Results.}
            We report the normalized latent utilities of \uncons{}, \textsf{ProportionalRepr}, and \cref{alg:disj} in \cref{fig:real_world_4} (for all subsets $T$ of size 4).
            The results for the remaining choices of $T$ appear in \cref{fig:movie_lens_pairs_action,fig:movie_lens_pairs_adventure,fig:movie_lens_pairs_crime,fig:real_world_3,fig:real_world_5} in \cref{sec:additional_simulations}.
            Across all choices of $T$, we observe that \cref{alg:disj} achieves {3\%} higher quality than \uncons{} and \textsf{ProportionalRepr} for {14/31} subsets of stereotypical genres ({>45\%}), has similar quality (within {1\%}) as \uncons{} or \textsf{ProportionalRepr} in {13/31} subsets of stereotypical genres ({42\%}), and has up to {2\%} lower quality than \uncons{} in {4/31} choices of genres ({13\%}).

    \renewcommand{\folder}{./figures/real-world-data/4}
    \begin{figure}[t!]
        \centering
        \subfigure[Action, Adventure, Crime, War\label{6a}]{
            \begin{tikzpicture}
                  \node (image) at (0,-0.17) {\includegraphics[width=0.5\linewidth, trim={0.5cm 0cm 1.7cm 0.5cm},clip]{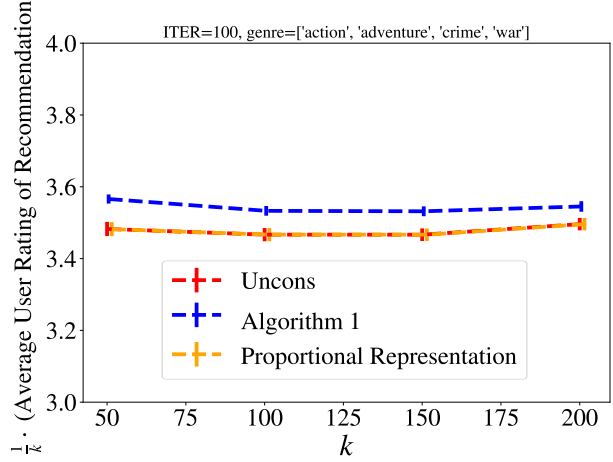}};
            \end{tikzpicture} 
        }
        \subfigure[Action, Adventure, Crime, Western\label{6c}]{
            {\begin{tikzpicture}
                  \node (image) at (0,-0.17) {\includegraphics[width=0.5\linewidth, trim={0cm 0cm 1.7cm 0.5cm},clip]{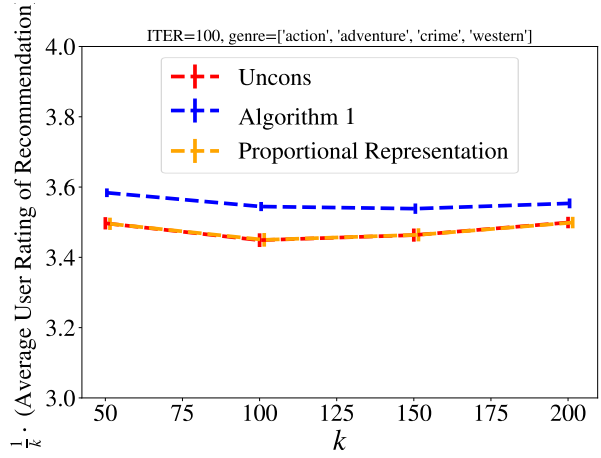}}; 
            \end{tikzpicture}} 
        }
        \caption{
        {\em Simulation on MovieLens 20M data:}
            The relevance scores in the data are disproportionately higher (up to 3 times) for movies led by male actors compared to movies led by non-male actors, in genres stereotypically associated with men.
            In contrast, user ratings for these sets of movies are within 6\% for each genre.
            We use relevance scores to recommend $k\in\negsp{} \inbrace{50,\negsp{} 100,\negsp{} 150,\negsp{} 200}$ movies from different subsets of men-stereotypical genres and use user ratings to estimate the latent utility of the recommended movies.
            {\cref{6a,6c} present results for two subsets of size 4}--we observe that \cref{alg:disj} achieves {1.40\% and 0.28\%} higher normalized latent utility than \uncons{} and \textsf{ProportionalRepr} for all $k$.
            (Results for other genre subsets of other sizes appear in \cref{fig:movie_lens_pairs_action,fig:movie_lens_pairs_adventure,fig:movie_lens_pairs_crime,fig:real_world_3,fig:real_world_5} in \cref{sec:additional_simulations}.)
        }
        \label{fig:real_world_4}
    \end{figure}
 
\addtocontents{toc}{\protect\setcounter{tocdepth}{2}}

\section{Proofs}\label{sec:proofs}

    \subsection{{Proof of \cref{thm:main_negative_result}}}
    \label{sec:proofof:thm:main_negative_result}
    \label{sec:proofof:thm:UB_any_const}
    \label{sec:proofof:thm:UB_any_const_app}
        In this section, we show that a family of fairness constraints is insufficient to guarantee any constant fraction of the optimal latent utility.
        This family of fairness constraints is parameterized by $2p$ parameters $u_1,u_2,\dots,u_p\geq 0$ and $v_1,v_2,\dots,v_p\geq 0$.
        Given $u_1,\dots,u_p$ and $v_1,v_2,\dots,v_p$, the constraints require any output subset $S$ to satisfy 
        \begin{align*}
            \forall \ell\in [p],\quad \abs{S\cap G_\ell} \leq (u_\ell+v_\ell \gamma_\ell) \cdot k.
            \yesnum\label{eq:const_satisfied}
        \end{align*}
        Where for each $\ell\in [p]$
        \begin{align*}
            \gamma_\ell \coloneqq \frac{\abs{G_\ell}}{n}.
        \end{align*}
        This family captures equal representation when $u_\ell=\frac{1}{p}$ and $v_\ell=0$ (for each $\ell\in [p]$), proportional representation when $u_\ell=0$ and $v_\ell=1$ (for each $\ell\in [p]$), and it ensures that each output subset satisfies the 4/5ths rule when $u_\ell=0$ and $\frac{\min_\ell v_\ell}{\max_\ell v_\ell}\geq \frac{4}{5}$.
        Given $U\coloneqq (u_1,\dots,u_p)$ and $V\coloneqq (v_1,\dots,v_p)$, let $S_{UV}\subseteq [n]$ be the set of size at most $k$ that maximizes observed utility subject to satisfying the constraint specified by $(U, V)$, i.e.,
        $$S_{UV} \coloneqq \argmax_{\abs{S}\leq k\colon \text{$S$ satisfies \cref{eq:const_satisfied}}}\hF(S).$$
        \noindent In this section, we prove the following theorem.
        \begin{theorem} %
            For any $0<\eps<1$, $U=(u_1,u_2)$, and $V=(v_1,v_2)$
            there exists 
            \begin{itemize}[itemsep=0pt,leftmargin=\leftmarginCUSTOM]
                \item a submodular function $F\in \evF$, %
                \item numbers $0\leq \gamma_1,\gamma_2\leq 1$ denoting group sizes,
                \item numbers $0<\beta_1,\beta_2\leq 1$ denoting the bias parameters, 
                \item  family of $n\times 3$ matrices $\cW$ parameterized by $n$, and 
                \item a constant $m_0$ dependent on the smallest non-zero value in $\inbrace{u_1,u_2}$, 
            \end{itemize}
            such that,
            for any 
                $k\geq m_0\cdot \poly\inparen{\eps^{-1}}$, 
                $n\geq m_0\cdot \poly\inparen{k, \eps^{-1}}$, 
            and $W=\cW(n)$, it holds that 
            \begin{align*}
                \Pr\insquare{F(S_{UV})\leq \eps\cdot  \opt{}} \geq 1-\eps,
            \end{align*}
            where the probability is over the randomness in $G_1$ and $G_2$.
        \end{theorem}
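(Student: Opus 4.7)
The plan is to exhibit, for each admissible $(u,v,\eps)$, an explicit instance specified by group fractions $(\gamma_1,\gamma_2)$, multiplicative bias parameters $(\beta_1,\beta_2)\in(0,1]^2$, a submodular $F\in\evF$ with $m=3$, and a family of $n\times 3$ matrices $\cW(n)$, and then to show that over the randomness of $G_1,G_2$ the subset $S_{UV}$ maximizing $\hF$ subject to $\abs{S\cap G_\ell}\leq(u_\ell+v_\ell\gamma_\ell)k$ satisfies $F(S_{UV})\leq\eps\cdot\opt$ with probability at least $1-\eps$. The construction leverages two forces in combination: multiplicative bias with $\beta_1\gg\beta_2$, which makes $\hF$ strongly prefer $G_1$ items, and concavity of the $g_j$'s, which causes diminishing returns that amplify the cost of selecting the ``wrong'' items under the constraint.

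My candidate construction uses $g_1(x)=g_2(x)=g_3(x)=\sqrt{x}$ and three item classes: class 1 with latent utility vector $(A,0,0)$ for a large parameter $A$ (depending on $\eps$), and classes 2 and 3 with vectors $(0,1,0)$ and $(0,0,1)$. The unconstrained maximizer, obtained via KKT, distributes its budget as $t_1=Ak/(A+2)$ and $t_2=t_3=k/(A+2)$, giving $\opt=\sqrt{(A+2)k}$, which is overwhelmingly contributed by class-1 items for large $A$. I set $\abs{\text{class }1}\approx k$ so that OPT nearly saturates the class-1 pool, and I pick $\gamma_1$ as small as $(u,v)$ allows so that $\alpha_1 k\ll\gamma_1 k$ (the typical number of class-1 items in $G_1$ under random assignment). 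Under the constraint, $S_{UV}$ cannot pull enough class-1 items from $G_1$; due to bias, within $G_2$ the $\hF$-maximizer's marginals instead favor classes 2 and 3, since attribute 1 is already partially saturated by the $G_1$ quota. The resulting $F(S_{UV})\lesssim\sqrt{A\alpha_1 k}+O(\sqrt{\alpha_2 k})$ yields ratio $O(\sqrt{\alpha_1})$ for large $A$, which drops below $\eps$ as soon as $\alpha_1\leq\eps^2$.

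The main steps I would carry out are: (i) case-analyze $(u,v)$ and pick $\gamma_1,\gamma_2$ to minimize $\alpha_1=u_1+v_1\gamma_1$; (ii) define the matrix family $\cW(n)$ and compute $\opt$ via KKT; (iii) characterize $S_{UV}$ by solving the $\hF$-maximization under the constraint, exploiting the smallness of $\beta_2$ to justify an asymptotic analysis of the marginals; (iv) apply Chernoff/Hoeffding bounds to the counts $\abs{C_c\cap G_\ell}$ for each class $c\in\{1,2,3\}$ and group $\ell\in\{1,2\}$, upgrading the typical-case analysis to a high-probability statement; and (v) take $A$ large, $\beta_2$ small, and $k,n$ above the stated thresholds $m_0\poly(\eps^{-1})$ and $m_0\poly(k,\eps^{-1})$ respectively, so that all approximations are negligible relative to the target ratio.

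The main obstacle is the case in which both $u_1$ and $u_2$ are bounded away from 0, so that $\min(\alpha_1,\alpha_2)\geq\min(u_1,u_2)>0$ regardless of the choice of $\gamma$. Here the naive version of the construction gives only a constant-factor gap, so one needs a more delicate matrix $W$ whose ``optimal'' composition requires very specific proportions forbidden by the constraint at every admissible $\gamma$; the concavity of the $g_j$'s then amplifies the forced proportion mismatch. The explicit dependence of $m_0$ on the smallest non-zero entry of $\inbrace{u_1,u_2}$ in the statement precisely reflects that, in this regime, one must take $k$ sufficiently large---scaling with $1/\min\inbrace{u_\ell:u_\ell>0}$---before the submodular-saturation effect dominates the absolute floor imposed by the constraint.
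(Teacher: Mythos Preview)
Your overall strategy—case-analyze $(u,v)$, build an explicit instance per case, and use concentration for the random group assignment—matches the paper's. But the specific construction you propose, with $g_1=g_2=g_3=\sqrt{\cdot}$, breaks down precisely in the case you flag as the ``main obstacle'' ($u_1>0$), and the fix you suggest (a more delicate $W$) does not address the real issue.

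The crux is your claim that ``within $G_2$ the $\hF$-maximizer's marginals favor classes 2 and 3, since attribute 1 is already partially saturated by the $G_1$ quota.'' When $u_1>0$ you need $\gamma_1$ small to get a small ratio, so $|C_1\cap G_1|\approx\gamma_1 k\ll\alpha_1 k$. The $G_1$ quota then necessarily contains $(\alpha_1-\gamma_1)k$ class-2/3 items, which load attributes~2 and~3 with \emph{unbiased} mass $\Theta(\alpha_1 k)$. Now compare $G_2$ marginals: class-1 contributes $\frac{\beta_2 A}{2\sqrt{A\gamma_1 k}}$, class-2 contributes $\frac{\beta_2}{2\sqrt{\Theta(\alpha_1 k)}}$. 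The $\beta_2$'s cancel, and class-1 wins whenever $A\cdot\Theta(\alpha_1 k)>\gamma_1 k$, i.e.\ for any $A$ larger than a constant. So $S_{UV}$ fills the remaining budget with class-1 items from $G_2$, and $F(S_{UV})\approx\opt$. More generally, with identical $g_j$'s, the within-$G_2$ marginal comparison is the same as the unbiased one (bias scales all attributes equally), so bias alone can never flip the class preference inside $G_2$.

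The paper's remedy is not in $W$ but in the $g_j$'s: it takes $g_1(x)=x^{1/3}$ and $g_2(x)=c\sqrt{x}$ (with $c=\eps$ or $\eps^3$). Because $x^{1/3}$ is more concave, attribute~1's marginal decays like $x^{-2/3}$ rather than $x^{-1/2}$. After the $G_1$ quota loads attribute~1 with $\Theta(\eps^3 k)$ mass, the class-1 marginal within $G_2$ is $\Theta(\beta/k^{2/3})$ while the class-2 marginal is $\Theta(c\beta/\sqrt{k})$; for $k$ large enough (this is where the $k\geq m_0\cdot\poly(\eps^{-1})$ threshold enters), the latter dominates and $S_{UV}$ provably picks no class-1 items from $G_2$. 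The paper also handles the extreme cases ($u_1=0$ with $v_1$ very large or very small) with a \emph{linear} $F$ and a different mechanism: the constraint is either vacuous or forbids $G_1$ almost entirely, and bias alone pushes $S_{UV}$ to a tiny group with few high-utility items. Your single $\sqrt{\cdot}$ construction does not straightforwardly cover those regimes either; you would need a genuinely different instance per case, as the paper does.
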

        \noindent The above theorem straightforwardly extends to $p>2$ groups by adding empty groups. It also extends to $m>2$ attributes by fixing utilities so that $W_{ij}=0$ for each item $i$ and $j\in [m]\backslash\inbrace{1,2}$.

        We claim that the following lower bounds on $k$ and $n$ are sufficient 
        \[
            k\geq m_0\cdot \frac{1}{\eps^{30}}\log\frac{e}{\eps}
            \quad\text{and}\quad 
            n\geq m_0\cdot \frac{4k}{\eps^4}\log\frac{e}{\eps}.
        \]
        We fix $m_0=1$ and suppose that $\eps$ is smaller than the smallest non-zero value in $\inbrace{u_1,u_2}$ (if any).
        Hence, we have the guarantee that: (1) if $u_1\neq 0$, then $u_1\geq \eps$ and, similarly, (2) if $u_2\neq 0$, then $u_2\geq \eps$.
        It is straightforward to see that one can allow for an arbitrary $0<\eps<1$ by choosing an appropriate $m_0$ that depends on the smallest non-zero value in $\inbrace{u_1,u_2}$ (if any).
        
        We divide the proof into cases depending on the value of $u_1$ and $v_1$.

        \subsubsection{Case A: ($v_1>\eps^{-1}$ and $u_1=0$)}
            Set the following parameters.
            \begin{itemize}[itemsep=0pt,leftmargin=\leftmarginCUSTOM]
                \item For any subset $S$ define $F(S) = \sum_{i\in S} W_{i1}$.
                \item Let $\gamma_1=\eps$ and $\gamma_2=1-\eps$.
                \item Let $\beta_1=1$ and $\beta_2=\eps^2$.
                \item Given $n$, let $W=\cW(n)$ be the following matrix: $W_{i1}=1$ for each $1\leq i \leq k$ and $W_{i1}=\eps$ otherwise. ($W_{ij}=0$ for any $j\neq 1$ and $i\in [n]$.)
            \end{itemize}  
            \noindent Let $\evE$ be the event that $\abs{G_1\cap [k]}\leq 2k\eps$.
            By the construction of $G_1$, $\abs{G_1\cap [k]}$ is a hypergeometric random variable.
            Using the standard concentration inequality (\cite[Theorem 1]{hush2005concentration}) for hypergeometric random variables, it follows that
            \begin{align*}
                \Pr\insquare{ \abs{ \abs{G_1\cap [k]}  - \gamma_1 k }\geq
                \sqrt{\frac{k}{2}\log{\frac{1}{\eps}}}}
                \leq \eps
                \qquad\Stackrel{k\eps \geq \sqrt{\frac{k}{2}\log{\nfrac{1}{\eps}}} }{\implies}\qquad 
                \Pr\insquare{ \abs{G_1\cap [k]} \geq  2k\eps} \geq 1-\eps.
            \end{align*}
            Thus, $\Pr[\evE]\geq 1-\eps.$
            By the construction of $W$, $\beta_1$, and $\beta_2$, the following holds
            \begin{align*}
                \forall i\in [G_1], \forall j \in [G_2],\quad 
                \hW_{i1} \geq \hW_{j1}.
            \end{align*}
            This implies that $\abs{S_{UV}\cap G_1}=k$ and $\abs{S_{UV}\cap G_2}=0$. 
            If not, then we can swap any item in $S_{UV}\cap G_2$ with an item from $G_1\backslash S_{UV}$ to increase $S_{UV}$'s observed utility.
            $G_1\backslash S_{UV}$ is non empty as $\abs{G_1}=n\eps > k \geq \abs{S_{UV}}$.
            The resulting set satisfies the constraints as in this case $u_1+v_1\gamma_1 \geq k$.
            Conditioned on $\evE$, we have 
            \begin{align*}
                F(S_{UV}) &=  \sum\nolimits_{i\in S_{UV}\cap G_1} W_{i1} \tag{Since $\abs{S_{UV}\cap G_2}=0$}\\
                &\leq  \sum\nolimits_{i\in G_1\cap [k]} W_{i1} + \sum\nolimits_{i\in (S_{UV}\cap G_1)\backslash [k]} W_{i1} \\
                &\leq  2k\eps + k\eps \tag{Using that conditioned on $\evE$, $\abs{G_1\cap [k]}\leq 2k\eps$, $\abs{S_{UV}}\leq k$, and the value of $W$}\\%
                &\leq 3k\eps. 
            \end{align*}
            From construction, $\opt=k$.
            Thus, it holds that 
            \begin{align*}
                \Pr\insquare{F(S_{UV})\leq 3\eps\cdot  \opt{}}
                &\geq \Pr\insquare{F(S_{UV})\leq 3\eps\cdot  \opt{}\mid \evE}\Pr[\evE]\\ 
                &\geq (1-\eps).   
            \end{align*}
            The claim follows by scaling down $\eps$ by a factor of 3 in the construction.

        \subsubsection{Case B: ($v_1<\eps$ and $u_1=0$)}
            Set the following parameters.
            \begin{itemize}[itemsep=0pt,leftmargin=\leftmarginCUSTOM]
                \item For any subset $S$ define $F(S) = \sum_{i\in S} W_{i1}$.
                \item Let $\gamma_1=1-\eps$ and $\gamma_2=\eps$.
                \item Let $\beta_1=1$ and $\beta_2=\eps^2$.
                \item Given $n$, let $W=\cW(n)$ be the following matrix: $W_{i1}=1$ for each $1\leq i \leq k$ and $W_{i1}=\eps$ otherwise. ($W_{ij}=0$ for any $j\neq 1$ and $i\in [n]$.)
            \end{itemize}  
            \noindent Let $\evE$ be the event that $\abs{G_2\cap [k]}\leq 2k\eps$.
            By replacing $G_1$ and $\gamma_1$ by $G_2$ and $\gamma_2$ in the calculation of $\Pr[\evE]$ in Case A, it follows that $\Pr[\evE]\geq 1-\eps$.
            Since $u_1+v_1\gamma_1 < \eps$ in this case, it holds that $\abs{S_{UV}\cap G_1}\leq k\eps$.
            Conditioned on $\evE$, we have 
            \begin{align*}
                F(S_{UV}) &=  \sum\nolimits_{i\in S_{UV}\cap G_1} W_{i1} + \sum\nolimits_{i\in S_{UV}\cap G_2\cap [k]} W_{i1} + \sum\nolimits_{i\in (S_{UV}\cap G_2)\backslash [k]} W_{i1}\\ 
                &\leq k\eps  + \sum\nolimits_{i\in S_{UV}\cap G_2\cap [k]} W_{i1} + \sum\nolimits_{i\in (S_{UV}\cap G_2)\backslash [k]} W_{i1}
                \tag{Using that $W_{ij}\leq 1$ and $\abs{S_{UV}\cap G_1}\leq k\eps$}\\
                &\leq  3k\eps + \sum\nolimits_{i\in (S_{UV}\cap G_2)\backslash [k]} W_{i1} \tag{Using that conditioned on $\evE$, $\abs{G_2\cap [k]}\leq 2k\eps$ and $W_{ij}\leq 1$}\\
                &\leq 4k\eps. 
                \tag{Using that for any $i\not \in [k]$, $W_{i1}=\eps$ and $\abs{S_{UV}}\leq k$}
            \end{align*}
            From construction, $\opt=k$.
            Thus, it holds that 
            \begin{align*}
                \Pr\insquare{F(S_{UV})\leq 4\eps\cdot  \opt{}}
                &\geq \Pr\insquare{F(S_{UV})\leq 4\eps\cdot  \opt{}\mid \evE}\cdot \Pr[\evE]\\ 
                &\geq (1-\eps).   
            \end{align*}
            The claim follows by scaling down $\eps$ by a factor of 4 in the construction.

        \newcommand{\ub}{{\eps^3}}
            
        \subsubsection{Case C: ($\eps <v_1<\eps^{-1}$ and $u_1=0$)}
            Set the following parameters.
            \begin{itemize}[itemsep=0pt,leftmargin=\leftmarginCUSTOM]
                \item For any subset $S$ define \begin{align*}
                    F(S) = \inparen{\sum\nolimits_{i\in S} W_{i1}}^{1/3} + \eps \sqrt{\sum\nolimits_{i\in S} W_{i2}}.
                    \yesnum\label{eq:const_of_f_case_3_new_neg_result}
                \end{align*}    
                \item Let $\gamma_1={\eps^4}$ and $\gamma_2=1-\eps^4$.
                \item Let $\beta_1=1$ and $\beta_2=\beta$ (for any $\beta>0$). 
                \item We divide items into two types. Let $A\coloneqq [n/2]$ be the set of type $A$ items and $B\coloneqq [n]\backslash A$ be the set of type $B$ items.
                Given $n$, let $W=\cW(n)$ be the following matrix: 
                \begin{itemize}
                    \item for each type $A$ item $i\in A$, $W_{i}=(1,0)$, and
                    \item for each type $B$ item $i\in B$, $W_{i}=(0,1)$.
                \end{itemize}
            \end{itemize}  
            Let $\evE$ be the event that (1) $A$ has at least $k$ items from each of $G_1$ and $G_2$ and (2) $B$ has at least $k$ items from each of $G_1$ and $G_2$.
            By the construction of $G_1$ and $G_2$, $\abs{A\cap G_\ell}$ and $\abs{B\cap G_\ell}$ are hypergeometric random variables (for each $\ell\in [2]$).
            Using the standard concentration inequality (\cite[Theorem 1]{hush2005concentration}) for hypergeometric random variables, it follows that for each $\ell\in [p]$
            \begin{align*}
                &\Pr\insquare{ \abs{ \abs{A\cap G_\ell}  - \gamma_1 \abs{A} }\geq
                  \sqrt{\frac{\abs{A}}{2}\log{\frac{1}{\eps}}}}
                \ \ \leq\quad \eps\\ 
                \Stackrel{\gamma_1,\gamma_2\geq \eps^4,\ \abs{A}=\frac{n}{2}}{\implies}\qquad\quad  
                &\Pr\insquare{ \abs{A\cap G_1} \geq  \frac{\eps^4 n}{2} - \sqrt{\frac{n}{4} \log\frac{1}{\eps}}} 
                \qquad \ \ \geq\quad 1-\eps.
                \yesnum\label{eq:case_c_new_neg_result}
            \end{align*}
            Using that $\frac{\eps^4 n}{2} - \sqrt{\frac{n}{4} \log\frac{1}{\eps}}$ is an increasing function of $n$ for $n\geq \frac{1}{4\eps^8}\log\frac{1}{\eps}$ and by construction $n\geq \frac{1}{\eps^8}\log\frac{1}{\eps}$, it follows that 
            \begin{align*}
                \frac{\eps^4 n}{2} - \sqrt{\frac{n}{4} \log\frac{1}{\eps}}
                \qquad \Stackrel{n\geq \frac{4k}{\eps^4}\log\frac{e}{\eps}}{\geq}\qquad \inparen{2k-\eps^{-2}\sqrt{k}}\cdot \log\frac{1}{\eps}
                \geq  k \cdot \log\frac{1}{\eps}. \tag{Using that $k\geq \eps^{-4}$ by construction}
            \end{align*}
            Substituting this in \cref{eq:case_c_new_neg_result}, it follows that for each $\ell\in [p]$,
            $\Pr\insquare{ \abs{A\cap G_\ell} \geq  k}\geq 1-\eps$.
            Replacing $A$ by $B$ in the above argument, $\Pr\insquare{ \abs{B\cap G_\ell} \geq  k}\geq 1-\eps$.
            Thus, by union bound $\Pr[\evE]\geq 1-4\eps$.
            
            We claim that conditioned on $\evE$
            \begin{align*}
                \abs{S_{UV}\cap G_2 \cap A} \leq \ub{} k.\yesnum\label{claim:case_c_new_neg_result}
            \end{align*}
            Suppose that the above claim is true.
            Conditioned on $\evE$ we have that 
            \begin{align*}
                F(S_{UV}) &= \inparen{\abs{S_{UV}\cap G_1\cap A} + \abs{S_{UV}\cap G_2\cap A}}^{1/3}
                + \eps \sqrt{\abs{S_{UV}\cap G_1\cap B} + \abs{S_{UV}\cap G_1\cap B}}
                \tag{Using \cref{eq:const_of_f_case_3_new_neg_result} and construction of $W$}\\
                &= \inparen{\abs{S_{UV}\cap G_1\cap A} + \eps^3 k }^{1/3}
                + \eps \sqrt{\abs{S_{UV}\cap G_1\cap B} + \abs{S_{UV}\cap G_1\cap B}}
                \tag{Using \cref{claim:case_c_new_neg_result}}\\
                &= \inparen{2\eps^3 k}^{1/3} + \eps \sqrt{k} \tag{Using that $S_{UV}$ satisfies the constraints, hence, $\abs{S_{UV}\cap G_1\cap A}\leq (u_1+\gamma_1 v_1) k=\eps^3 k$ and $\abs{S_{UV}}\leq k$}\\
                &\leq 3\eps k. \tag{Using \cref{eq:const_of_f_case_3_new_neg_result}}
            \end{align*}
            By construction, $\opt\geq \sqrt{k}$.
            Thus, 
            \begin{align*}
                \Pr\insquare{F(S_{UV})\leq 3\eps\cdot  \opt{}}
                &\geq \Pr\insquare{F(S_{UV})\leq 3\eps\cdot  \opt{}\mid \evE}\cdot \Pr[\evE]\\
                &\geq (1-4\eps).   
            \end{align*}
            The theorem's claim follows by scaling down $\eps$ by a factor of 4 in the construction.
            
            It remains to prove the claim above (\cref{claim:case_c_new_neg_result}).
            Towards a contradiction assume that
            \begin{align*}
                \abs{S_{UV}\cap G_2\cap A} > \ub{} k.
                \yesnum\label{eq:contradiction_case_c_new_negative_res}
            \end{align*}
             
                \noindent Consider a subset $S$ that is the same as $S_{UV}$ except that compared to $S_{UV}$ it selects one less item from $G_2\cap A$ and one more item from $G_2\cap B$.
                $S$ exists because, conditioned on $\evE$, $\abs{G_2\cap B} \geq k$.
                $S$ satisfies the constraints specified by $(U,V)$ as $\abs{S\cap G_\ell}=\abs{S_{UV}\cap G_\ell}$ for each $\ell\in [2]$ and $S_{UV}$ satisfies the constraints specified by $(U,V)$.
                It holds that 
                \begin{align*}
                    \hF(S)-\hF(S_{UV})
                    &= \inparen{\sabs{S_{UV}\cap A\cap G_1}+\beta{}\sabs{S_{UV}\cap A\cap G_2}-\beta{}}^{1/3}
                    +\eps \sqrt{\sabs{S_{UV}\cap B\cap G_1}+\beta{}\sabs{S_{UV}\cap B\cap G_2}+\beta{}}\\
                    &-\inparen{\inparen{\sabs{S_{UV}\cap A\cap G_1}+\beta{}\sabs{S_{UV}\cap A\cap G_2} }^{1/3}
                    +\eps\cdot \sqrt{\sabs{S_{UV}\cap B\cap G_1}+\beta{} \sabs{S_{UV}\cap B\cap G_2}}}.\\ 
                    \intertext{The RHS of the above equation is a decreasing function of $\sabs{S_{UV}\cap B\cap G_1}+\beta\sabs{S_{UV}\cap B\cap G_2}$.
                    Since $S_{UV}$ satisfies the constraints specified by $(U,V)$, $\sabs{S_{UV}\cap B\cap G_1}+\beta{}\sabs{S_{UV}\cap B\cap G_2}\leq (\eps^3+\beta{}) k = 2\beta{} k$. Consequently}
                    \hF(S)-\hF(S_{UV}) &\geq \inparen{\sabs{S_{UV}\cap A\cap G_1}+\beta{}\sabs{S_{UV}\cap A\cap G_2}-\beta{}}^{1/3}
                    +\eps\inparen{ \sqrt{2\beta{} k + \beta{}} - \sqrt{2\beta{} k}}\\
                    &-{\inparen{\sabs{S_{UV}\cap A\cap G_1}+\beta{}\sabs{S_{UV}\cap A\cap G_2} }^{1/3}.
                    }
                    \intertext{RHS of the above inequality is an increasing function of $\sabs{S_{UV}\cap A\cap G_2}$ and, from \cref{eq:contradiction_case_c_new_negative_res}, $\sabs{S_{UV}\cap A\cap G_2} > \ub{} k$. Hence, }
                    \hF(S)-\hF(S_{UV}) &\geq \inparen{\sabs{S_{UV}\cap A\cap G_1}+\beta{}\ub{} k-\beta{}}^{1/3}
                    +\eps\inparen{ \sqrt{2\beta{} k + \beta{}} - \sqrt{2\beta{} k}}\\ 
                    &-\inparen{\sabs{S_{UV}\cap A\cap G_1} + \beta{}\ub{} k }^{1/3}.
                \end{align*}
                RHS of the above inequality is an increasing function of $\sabs{S_{UV}\cap A\cap G_1}$.
                In this case, $(u_1+v_1\gamma_1)\geq \eps\gamma_1 k = \eps^5 k$.
                We claim this implies that $\abs{S_{UV}\cap A\cap G_1}\geq \eps^5 k$.
                If this is not true, then one can increase the observed utility of $S_{UV}$ by removing one item in $S_{UV}$ from $A\cap G_2$ and adding one item in $S_{UV}$ from $A\cap G_1$.
                This is possible as (1) in this case, $\abs{S_{UV} \cap A\cap G_2}>0$ and (2) conditioned on $\evE$, $\abs{A\cap G_1}\geq k$.
                Combining $\abs{S_{UV}\cap A\cap G_1}\geq \eps^5 k$ with previous observation, it follows that
                \begin{align*}
                    \hF(S)-\hF(S_{UV}) &= \inparen{(\eps^5+\beta{}\ub{}) k-\beta{}}^{1/3}-\inparen{(\eps^5+\beta{}\ub{}) k}^{1/3}
                    +\eps\inparen{ \sqrt{2\beta{} k + \beta{}} - \sqrt{2\beta{} k}} \\
                    &\geq -\frac{\beta{}}{\inparen{\eps^3\beta k}^{2/3}} + \eps\inparen{ \sqrt{2\beta{} k + \beta{}} - \sqrt{2\beta{} k}} \tag{Using that for all $0\leq x\leq 1$, $\inparen{1-x}^{1/3}\geq 1-x$ and $\frac{\beta}{\eps^5 k+\beta\eps^{3} k}\leq 1$}\\
                    &\geq -\frac{\beta{}}{\inparen{\eps^3\beta k}^{2/3}} +\eps{}\frac{\beta}{\sqrt{6\beta k}}\tag{Using that for all $0\leq x\leq 3$, $\sqrt{1+x}\geq 1+\frac{x}{3}$ and $k\geq 1$}\\
                    &= \frac{\beta}{\sqrt{k}}\inparen{\sqrt{\frac{\eps}{6}} - \frac{1}{\eps^{5/3}k^{1/6}}}\tag{Using \cref{eq:const_of_f_case_3_new_neg_result} and that $\beta=\eps$}\\
                    &>0. \tag{Using that $k > 216\eps^{-13}$, $\beta>0$, and $k\geq 1$}
                \end{align*}

        \subsubsection{Case D: ($u_1 \geq \eps$)}
            Set the following parameters.
            \begin{itemize}[itemsep=0pt,leftmargin=\leftmarginCUSTOM]
                \item For any subset $S$ define \begin{align*}
                    F(S) = \inparen{\sum\nolimits_{i\in S} W_{i1}}^{1/3} + \eps^3 \sqrt{\sum\nolimits_{i\in S} W_{i2}}.
                    \yesnum\label{eq:const_of_f_case_4_new_neg_result}
                \end{align*}  
                \item Let $\gamma_1={\eps^3}$ and $\gamma_2=1-\eps^3$.
                \item Let $\beta_1=1$ and $\beta_2=\beta$ (for any $\beta>0$).
                \item We divide items into two types. Let $A\coloneqq [k]$ be the set of type $A$ items and $B\coloneqq [n]\backslash A$ be the set of type $B$ items.
                Given $n$, let $W=\cW(n)$ be the following matrix: 
                \begin{itemize}
                    \item for each type $A$ item $i\in A$, $W_{i}=(1,0)$, and
                    \item for each type $B$ item $i\in B$, $W_{i}=(0,1)$.
                \end{itemize}
            \end{itemize}  
            Let $\evE$ be the event that (1) $A$ has at least $\frac{1}{2}\eps^3 k$ items from $G_1$, (2) $A$ has at most $2\eps^3 k$ items from $G_1$,
            and (3) $B$ has at least $k$ items from $G_2$.
            By the construction of $G_1$ and $G_2$, $\abs{A\cap G_\ell}$ and $\abs{B\cap G_\ell}$ are hypergeometric random variables (for each $\ell\in [2]$).
            Using the standard concentration inequality (\cite[Theorem 1]{hush2005concentration}) for hypergeometric random variables, it follows that for each $\ell\in [p]$
            \begin{align*}
                \Pr\insquare{ \abs{ \abs{A\cap G_1}  - \gamma_1 \abs{A} }\geq
                  \sqrt{\frac{\abs{A}}{2}\log{\frac{1}{\eps}}}}
                \leq \eps
                \qquad\Stackrel{\gamma_1,\gamma_2\geq \eps^3,\ \abs{A}=k}{\implies}\qquad 
                \Pr\insquare{ \abs{A\cap G_1} \geq  \eps^3 k - \sqrt{\frac{k}{2} \log\frac{1}{\eps}}} \geq 1-\eps.
                \yesnum\label{eq:case_d_new_neg_result}
            \end{align*}
            Since $k\geq \frac{2}{\eps^6} \log\frac{1}{\eps}$, it follows that $\sqrt{\frac{k}{2} \log\frac{1}{\eps}}\leq \frac{\eps^3 k}{2}$ and, hence, $\eps^3 k - \sqrt{\frac{k}{2} \log\frac{1}{\eps}} \geq \frac{\eps^3 k}{2}.$
            This implies that 
            \begin{align*}
                \Pr\insquare{ \abs{A\cap G_1} \geq  \frac{\eps^3 k}{2} \text{ and } \abs{A\cap G_1} \leq  \frac{3\eps^3 k}{2}} \geq 1-2\eps.
            \end{align*}
            A similar calculation shows that $\Pr\insquare{ \abs{B\cap G_2} \geq  k}\geq 1-\eps$.
            Thus, by union bound $\Pr[\evE]\geq 1-3\eps$.
            
            We claim that conditioned on $\evE$
            \begin{align*}
                \abs{S_{UV}\cap G_2 \cap A} =0.\yesnum\label{claim:case_d_new_neg_result}
            \end{align*}
            Suppose that the above claim is true.
            Conditioned on $\evE$ we have that 
            \begin{align*}
                F(S_{UV}) &= \inparen{\abs{S_{UV}\cap G_1\cap A} + \abs{S_{UV}\cap G_2\cap A}}^{1/3}
                +\eps^3  \sqrt{\abs{S_{UV}\cap G_1\cap B} + \abs{S_{UV}\cap G_1\cap B}}
                \tag{Using \cref{eq:const_of_f_case_4_new_neg_result} and construction of $W$}\\
                &= \inparen{0 + \abs{S_{UV}\cap G_2\cap A}}^{1/3}
                + \eps^3 \sqrt{\abs{S_{UV}\cap G_1\cap B} + \abs{S_{UV}\cap G_1\cap B}}
                \tag{Using \cref{claim:case_d_new_neg_result}}\\
                &\leq \inparen{\frac{3}{2}\eps^3 k}^{1/3} + \eps^3\sqrt{k} \tag{Using that conditioned on $\evE$, $\abs{A\cap G_2}\leq \frac{3}{2}\eps^3k$ and $\abs{S_{UV}}\leq k$}\\
                &\leq 3\eps k.
            \end{align*}
            By construction, $\opt\geq \sqrt{k}$.
            Thus, 
            \begin{align*}
                \Pr\insquare{F(S_{UV})\leq 3\eps\cdot  \opt{}}
                &\geq \Pr\insquare{F(S_{UV})\leq 3\eps\cdot  \opt{}\mid \evE}\cdot \Pr[\evE]\\ 
                &\geq (1-3\eps).   
            \end{align*}
            The theorem's claim follows by scaling down $\eps$ by a factor of 3 in the construction.
            
            It remains to prove the claim above (\cref{claim:case_d_new_neg_result}).
            Towards a contradiction assume that
            \begin{align*}
                \abs{S_{UV}\cap G_2\cap A} \geq 1.
                \yesnum\label{eq:contradiction_case_d_new_negative_res}
            \end{align*}
            \noindent In this case, it holds that $\abs{S_{UV}\cap G_1\cap A}\geq \frac{\eps^3k}{2}$.
            (This is true because if $\abs{S_{UV}\cap G_1\cap A} < \frac{\eps^3k}{2}$, then one can add an additional item from $G_1\cap A$ and remove one item from $G_2\cap A$, which improves the observed utility while satisfying the constraints, and contradicts the fact that $S_{UV}$ maximizes the observed utility subject to satisfying the constraints.)

            \newcommand{\smallmath}[1]{\text{\small $#1$}}
               
            Consider a subset $S$ that is the same as $S_{UV}$ except that compared to $S_{UV}$ it selects one less item from $G_2\cap A$ and one more item from $G_2\cap B$.
            One can verify that conditioned on $\evE$, $S$ exists and satisfies the fairness constraints.
                It holds that 
                \begin{align*}
                    \hF(S)-\hF(S_{UV})
                    &= \smallmath{\inparen{\sabs{S_{UV}\cap A\cap G_1}+\beta{}\sabs{S_{UV}\cap A\cap G_2}-\beta{}}^{1/3}
                    +\eps^3\sqrt{\sabs{S_{UV}\cap B\cap G_1}+\beta{}\sabs{S_{UV}\cap B\cap G_2}+\beta{}}}\\
                    &\quad \smallmath{-{\inparen{\sabs{S_{UV}\cap A\cap G_1}+\beta{}\sabs{S_{UV}\cap A\cap G_2} }^{1/3}
                    -\eps^3 \sqrt{\sabs{S_{UV}\cap B\cap G_1}+\beta{} \sabs{S_{UV}\cap B\cap G_2}}}.}\\ 
                    \intertext{The RHS of the above equation is an increasing function of $\sabs{S_{UV}\cap A\cap G_1}$ and $\sabs{S_{UV}\cap A\cap G_1}\geq \frac{\eps^3 k}{2}$ and, hence,}
                    \hF(S)-\hF(S_{UV}) &\geq 
                    \smallmath{\inparen{\frac{\eps^3 k}{2}+\beta{}\sabs{S_{UV}\cap A\cap G_2}-\beta{}}^{1/3}
                    +\eps^3 \sqrt{\sabs{S_{UV}\cap B\cap G_1}+\beta{}\sabs{S_{UV}\cap B\cap G_2}+\beta{}}}\\
                    &\quad \smallmath{-{\inparen{\frac{\eps^3 k}{2}+\beta{}\sabs{S_{UV}\cap A\cap G_2} }^{1/3}
                    +\eps^3 \sqrt{\sabs{S_{UV}\cap B\cap G_1}+\beta{} \sabs{S_{UV}\cap B\cap G_2}}}.}
                    \\
                    \intertext{RHS of the above inequality is a decreasing function of $\sabs{S_{UV}\cap B\cap G_1}+\beta\cdot \sabs{S_{UV}\cap B\cap G_2}$ and $\sabs{S_{UV}\cap B\cap G_1}+\beta\cdot \sabs{S_{UV}\cap B\cap G_2}\leq k$. Hence,}
                    \hF(S)-\hF(S_{UV}) &\geq \smallmath{\inparen{\frac{\eps^3 k}{2}+\beta{}\sabs{S_{UV}\cap A\cap G_2}-\beta{}}^{1/3}
                    +\eps^3\sqrt{k+\beta{}} -\inparen{\frac{\eps^3 k}{2}+\beta{}\sabs{S_{UV}\cap A\cap G_2} }^{1/3}
                    +\eps^3 \sqrt{k}}.
                \end{align*}
                RHS of the above inequality is an increasing function of $\sabs{S_{UV}\cap A\cap G_2}$ and $\abs{S_{UV}\cap A\cap G_1}\geq 1$.
                Hence, 
                \begin{align*}
                    \hF(S)-\hF(S_{UV}) &= \inparen{\frac{\eps^3 k}{2}}^{1/3}
                    +\eps^3\sqrt{k+\beta{}}
                    -\inparen{\frac{\eps^3 k}{2}+\beta{} }^{1/3}
                    +\eps^3 \sqrt{k}\\
                    &\geq -\eps \inparen{\frac{k}{2}}^{1/3} \frac{2\beta}{\eps^3 k} + \frac{\eps^3 \sqrt{k}\beta}{4k}\\
                    &\geq \frac{\beta}{k}\inparen{\eps^3 \sqrt{k} - \frac{k^{1/3}}{2^{1/3} \eps^2}}\\
                    &=  \frac{\beta k^{1/3}}{k} \cdot\inparen{\eps^3 k^{1/6} - \frac{1}{2^{1/3} \eps^2}}\\
                    &>0. \tag{Using that $k > \eps^{-30}$, $\beta>0$, and $k\geq 1$}
            \end{align*}

    \subsection{Proof of \cref{thm:disj}}\label{sec:proofof:thm:disj}
        In this section, we prove \cref{thm:disj}.
        For the reader's convenience, we restate \cref{thm:disj} and the assumption used in \cref{thm:disj}, below.
        \begin{assumption}[\textbf{Disjoint attributes}]\label{asmp:disj_app}
            The matrix $W\in \R^{n\times m}$ is such that $C_1,\dots,C_m$ are disjoint.
        \end{assumption}
        \begin{theorem}
            {Suppose $\phi_1(x)=x$ for each $x$.}
            There is an algorithm (\cref{alg:disj}) that, given observed utilities $\hW\in \R^{n\times m}$ and evaluation oracles for $g_1,g_2,\dots,g_m:\R_{\geq 0}\to \R_{\geq 0}$,
            outputs a set $S$ of size $k$ with the following property (under \cref{asmp:disj}):
            For any $\eps,\tau,\gamma>0$ and any {increasing functions $\phi_2,\phi_3,\dots,\phi_p:\R_{\geq 0}\to \R_{\geq 0}$}, there is a large enough $k_0$ such that for any $k\geq k_0$, 
            with probability at least $1-\eps$
            \begin{align*}
                F(S)\geq \opt{} \cdot \inparen{1-\eps}.
            \end{align*} 
            \noindent  Where the probability is over the randomness in the choice of $G_1,G_2,\dots,G_p$.
            The algorithm makes $O(nk)$ evaluations of each $g_1,g_2,\dots,g_m$ and does $O(nk\log{n})$ additional arithmetic operations.
        \end{theorem}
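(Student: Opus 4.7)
Under \cref{asmp:disj_app} the function decomposes cleanly as $F(S)=\sum_{j=1}^m F_j(S\cap C_j)$ where $F_j(T)\coloneqq g_j\inparen{\sum_{i\in T}W_{ij}}$, since each item contributes to at most one $C_j$. The plan is therefore to reduce the problem to choosing, for each $j$ independently, a subset of $C_j$ of some size $k_j$ whose sum $\sum_{i\in S_j\cap C_j} W_{ij}$ is as large as possible, and then to argue that (i) given the right category sizes $k_j$, \cref{alg:disj}'s Part 2 correctly recovers near-optimal $S_j$'s, and (ii) Part 1 learns sizes $\tilde k_j$ that are close enough to the true optimal sizes $k_j^{\star}\coloneqq |\sopt\cap C_j|$. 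Note that because $g_j$ is increasing, the optimal $\sopt\cap C_j$ is exactly the top-$k_j^{\star}$ items of $C_j$ ranked by $W_{i j}$.

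The first key observation is an \emph{invariance} property: for any $\ell\in[p]$, $j\in[m]$, and integer $a$, since $\phi_\ell$ is increasing, sorting items in $G_\ell\cap C_j$ by $\hat W_{ij}$ agrees with sorting them by $W_{ij}$, so
\[
\argmax_{T\subseteq G_\ell\cap C_j,\,|T|\leq a}\ \sum_{i\in T}\hat W_{ij}\ =\ \argmax_{T\subseteq G_\ell\cap C_j,\,|T|\leq a}\ \sum_{i\in T} W_{ij}.
\]
Consequently, if the sizes $a_{\ell j}=|\sopt\cap G_\ell\cap C_j|$ were known exactly, Part 2 of the algorithm (which implements proportional representation within each $C_j$) would select essentially the same items as $\sopt$, yielding $F(S)=\opt$. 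Hence the entire problem reduces to estimating the per-category totals $k_j^{\star}$, because proportional representation within $C_j$ automatically fixes $a_{\ell j}\approx\gamma_\ell k_j$.

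For Part 1, since $\phi_1$ is the identity, $\hat W_{ij}=W_{ij}$ for $i\in G_1$, and the rescaled function $\wt F$ from \cref{eq:rescaled_utility} is an unbiased ``plug-in'' estimate of $F$ that uses only items from $G_1$. I would prove a concentration lemma (the ``$\lem{invariance}$'' referenced in the overview): for each $j$ and any $T\subseteq C_j$ with $|T|\geq \sqrt{k}$, the partial sum $\sum_{i\in T\cap G_1} W_{ij}$ concentrates around $\gamma_1\sum_{i\in T} W_{ij}$ via hypergeometric concentration (e.g., Hush--Scovel). The bounds $\tau<W_{ij}<1/\tau$ enter here to control the variance relative to the mean. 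This lets one show that for each $j$, the quantity $\frac{n}{|G_1|}\sum_{i\in T\cap G_1\cap C_j}W_{ij}$ is, up to a $(1\pm\eps)$ factor, equal to the sum over all of $C_j$ of the top items. A submodular-greedy-style argument (now applied to the deterministic, unbiased function $\wt F$ restricted to $G_1$) then gives $\tilde k_j$ with $\tilde k_j\in[\max\{\sqrt{k},k_j^{\star}\},\,k_j^{\star}+o(k)]$ up to these concentration errors, where the $\sqrt{k}$ floor handles the small categories.

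Combining: since $F_j$ is concave in the running sum, Part 2's proportional representation outputs $S_j$ satisfying $\sum_{i\in S_j} W_{ij}\geq (1-o_k(1))\sum_{i\in T_j^{\star}} W_{ij}$ where $T_j^{\star}$ is the top-$\tilde k_j$ items of $C_j$ by $W_{ij}$ (using invariance within each $G_\ell$ and concentration of $|S_j\cap G_\ell|$ around $\gamma_\ell\tilde k_j$), and $g_j$ being monotone concave with $\tau$-bounded arguments converts this sum-approximation into an $(1-o_k(1))$-approximation of $F_j(S_j)$. Summing over $j$, the extra mass taken for categories with $k_j^{\star}<\sqrt{k}$ contributes at most an $O(m/\sqrt{k})$ fraction of $\opt$, which is absorbed into $\eps$ for $k$ large enough in terms of $\eps,\gamma,m,\tau$. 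The main obstacle will be step~(iii): translating the approximate sum equality on $G_1$ produced by Part 1 (a concentration statement on a random set whose items depend on $\wt F$) into an approximate equality on the full $C_j$ uniformly across categories while simultaneously not overcounting when $k_j^{\star}$ is below the $\sqrt{k}$ floor; handling this requires decoupling the randomness in the groups from the greedy choices of the algorithm, which is the technical crux of the concentration lemma.
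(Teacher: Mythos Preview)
Your overall architecture matches the paper's: decompose via \cref{asmp:disj}, use order-preservation within each $G_\ell\cap C_j$ for Part~2, and use $G_1$ as an unbiased sample to learn the category budgets in Part~1. Your identification of the crux---decoupling the randomness in $G_1$ from the greedy choices that depend on $G_1$---is exactly right. What is missing is the resolution.

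The paper's resolution, which you do not propose, is structural: it first proves that under \cref{asmp:disj} the greedy Part~1 is \emph{exactly} optimal (not $(1-1/e)$-approximate) for a program restricted to $G_1$ (\cref{lem:optimality:dist}), and in particular $\wt S$ is always \emph{locally optimal}---for each $j$, $\wt S\cap C_j$ consists of the top items of $C_j\cap G_1$. Locally optimal subsets of size $\leq k$ are parameterized by the vector $(|\wt S\cap C_j|)_{j\in[m]}$, so there are at most $(k+1)^m$ of them. The concentration lemma (\cref{lem:invariance}) is then applied not to $\wt S$ directly but to every member of this finite, data-independent family via a union bound, which is how the dependency on $G_1$ is broken (\cref{lem:high_utility_solution}). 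Your proposal only flags this as ``the technical crux'' without supplying the finite-family/union-bound device, and without it the concentration step does not go through because $\wt S$ genuinely depends on $G_1$.

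A second, smaller issue: your claim that Part~1 yields $\tilde k_j\in[\max\{\sqrt{k},k_j^{\star}\},\,k_j^{\star}+o(k)]$ is not the intermediate statement the paper uses, and it is not obviously provable---the optimal allocation $(k_j^{\star})_j$ need not be unique, so $\tilde k_j$ can be far from any particular $k_j^{\star}$ while still corresponding to a near-optimal latent utility. The paper sidesteps this entirely: it never compares $\tilde k_j$ to $k_j^{\star}$. Instead it shows (\cref{lem:disj_main}) that $\wt F(\wt S)\geq(1-o(1))\opt$ by constructing a perturbation $\overline{\sopt}$ of $\sopt$ that is feasible for the Part~1 program, and then (\cref{lem:high_utility_solution}) exhibits a single subset $S_E$ respecting the algorithm's budgets $(\tilde k_j)_j$ with $F(S_E)\geq(1-o(1))\opt$. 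So the right target is ``the budgets $\tilde k_j$ admit a near-optimal solution,'' not ``$\tilde k_j\approx k_j^{\star}$.''
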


        \newcommand{\tS}{\wt{S}}
        \newcommand{\tF}{\wt{F}}

        \subsubsection{{Overview of the proof of \cref{thm:disj}}}
            We have to prove that the subset $\wt{S}$ output by \cref{alg:disj} has latent utility at least $\opt\cdot \inparen{1 - \wt{O}\inparen{k^{-\sfrac{1}{4}}} }$ with high probability, provided that \cref{asmp:disj} holds.
            One can show that because of \cref{asmp:disj}, $\sopt\coloneqq \argmax_{S\subseteq [n]: \abs{S}\leq k}F(S)$ is of the form
            $$\sopt\coloneqq \bigcup\nolimits_{j=1}^m S_j^\star,$$
            where for each $j$,
            $$S_j^\star\coloneqq \argmax_{S\subseteq C_j: \abs{S}\leq \abs{\sopt \cap C_j}}g_j\inparen{\sum\nolimits_{i\in S} W_{ij} }.$$
            Here, the function in the RHS $H(T)\coloneqq g_j\inparen{\sum\nolimits_{i\in T} W_{ij} }$ is a function in $\evF$ with one attribute.
            For such functions, there is an algorithm which, given {\em observed} utilities as input, outputs a subset $S$ with {\em latent} utility $F(S)$ at least $\opt{}\cdot \inparen{1- k^{-\sfrac{1}{2}}}$ with high probability (\cref{lem:pos_result_linear}).
            Hence, we can use \cref{lem:pos_result_linear} to compute (an approximation to) $S_j^\star$ and, hence, (an approximation to) $\sopt$.
            The catch is that the definition of $S_j^\star$, itself, depends upon $\sopt$ because of the $\abs{\sopt\cap C_j}$ term in its definition.
    
            {Our key idea is to estimate $\abs{\sopt{}\cap G_j}$ without actually computing $\sopt$.}
            This uses the stochasticity in the protected groups $G_1,G_2,\dots,G_p$ and that $\phi_1(x)=x$ for all $x$.
            In particular, given a function $F\in \evF$, $F(T)\coloneqq \sum\nolimits_{j=1}^m g_j\inparen{\sum\nolimits_{i\in S} W_{ij} }$, \mbox{we define the following alternate version of $F$:}
            \begin{align*}
                \forall T\subseteq [n],\quad \tF(T) \coloneqq \sum\nolimits_{j=1}^m g_j\inparen{\frac{n}{\abs{G_1}}\cdot \sum\nolimits_{i\in T\cap G_1} W_{ij} }.
                \yesnum\label{eq:def_of_fT_proof_overview}
            \end{align*}
            There are two differences between $\tF$ and $F$:
            First, given a set $T$ as input $\tF$ only sums the utilities over $T\cap G_1$.
            Second it scales the computed sums by a factor of $\frac{n}{\abs{G_1}}$.
            Intuitively, because $G_1$ is constructed by sampling elements uniformly at random without replacement, for any subset $T$ independent of $G_1$, the two changes cancel each other:
            For any subset $T$ independent of $G_1$, under some mild assumptions on $T$, we prove that $F(T)\in \tF(T)\cdot \inparen{1\pm k^{-\sfrac{1}{4}}}$ (\cref{lem:invariance}).
    
            \cref{alg:disj} computes a subset $\tS$ which selects $k\cdot \frac{\abs{G_1}}{n}$ items from $\abs{G_1}$ and maximizes $\tF$.
            \cref{alg:disj} is able to find such a subset because of \cref{asmp:disj} (see \cref{lem:optimality:dist}).
            We use $k_j\coloneqq \frac{n}{\abs{G_1}}\cdot \sabs{\tS\cap C_j}$ as an approximation of $\sabs{\sopt\cap C_j}$ for each $j\in [m]$.
            Finally, using \cref{lem:invariance} and some additional analysis (\cref{lem:disj_main}), we show that the subset $S$ defined as,
            \begin{align*}
                S\coloneqq \bigcup\nolimits_{j=1}^m S_j,
                \qquad\text{where}\qquad
                \forall j,\quad S_j\coloneqq \argmax_{T\subseteq C_j: \abs{T}\leq k_j} g_j\inparen{\sum\nolimits_{i\in S} W_{ij} },
                \yesnum\label{eq:def_of_S_proof_overview}
            \end{align*}
            has latent utility at least $\opt\cdot (1-\wt{O}(k^{-\sfrac{1}{8}}))$ (\cref{lem:high_utility_solution}).
            This analysis has to be careful to ensure that
            the assumptions needed by \cref{lem:invariance} are satisfied for all considered subsets.
            For example, $\sopt$ may not satisfy these assumptions and we construct a subset $\ovsopt$ which is ``similar'' to $\sopt$ and which satisfies these assumptions (\cref{lem:disj_main}).
            Moreover, $\tS$ is not independent of $G_1$ as required by \cref{lem:invariance}.
            To bypass this, we show that $\tS$ always belongs to a family with at most $k^m$ subsets; and show that \cref{lem:invariance} holds for all subsets in $(k+1)^m$ with high probability and, hence, in particular it holds for $\tS$.

        \subsubsection{Proof of \cref{thm:disj}}

        \begin{lemma}\label{lem:optimality:dist}
            Suppose \cref{asmp:disj} holds and $\phi_1(x)=x$ for all $x$.
            For any $W\in \R_{\geq 0}^{m\times n}$, $F\in \evF$ and the corresponding function $\tF$ (defined in \cref{eq:def_of_fT_proof_overview}),
            increasing functions $\phi_2,\phi_3,\dots,\phi_p$, $k\geq 1$, and protected groups $G_1,G_2,\dots,G_p$,
            the subset $\tS$ computed by \cref{alg:disj} is an optimal solution to
            \begin{align*}
                \max\limits_{S\subseteq G_1} &\quad  \tF(S),
                \yesnum\label{prog:prog_solved_by_s}\\
                \st, &\quad 
                    1\leq j\leq m,\quad \abs{S\cap C_j}\geq \min\sinbrace{\sqrt{k},\abs{C_j}},\\
                &\quad \abs{S}\leq k\cdot \frac{\abs{G_1}}{n}.
            \end{align*}
        \end{lemma}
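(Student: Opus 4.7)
The key observation is that \cref{asmp:disj_app} makes $\tF$ separable across categories. Because $\phi_1$ is the identity, for every $i\in G_1$ we have $\hW_{ij}=W_{ij}$, and disjointness of $C_1,\dots,C_m$ forces each $i\in G_1$ to lie in at most one $C_j$. Therefore, for every $S\subseteq G_1$,
\begin{align*}
\tF(S) \;=\; \sum_{j=1}^m g_j\!\left(\tfrac{n}{\abs{G_1}}\sum_{i\in S\cap C_j} W_{ij}\right),
\end{align*}
so $\tF$ is a sum of $m$ terms, the $j$-th depending only on $S\cap C_j$. My plan is to reduce \prog{prog:prog_solved_by_s} to a separable concave integer program, argue that greedy is optimal for it, and then check that Part 1 of \cref{alg:disj} implements that greedy exactly.

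First I would parametrize any feasible $S\subseteq G_1$ by the vector $(n_j)_{j=1}^m$ with $n_j\coloneqq \abs{S\cap C_j}$. Since $g_j$ is increasing, the inner subproblem of which $n_j$ items of $C_j\cap G_1$ to select is solved by picking the $n_j$ items with the largest $W_{ij}$; write $f_j(n_j)$ for the resulting value. A short computation shows $f_j$ is concave on the integers: raising $n_j$ by one appends an item of weight no larger than any previously added \emph{and} pushes the argument of the concave $g_j$ further to the right, so $f_j(n_j{+}1)-f_j(n_j)$ is non-increasing in $n_j$. \prog{prog:prog_solved_by_s} thus reduces to
\begin{align*}
\max \sum_{j=1}^m f_j(n_j)\quad\text{s.t.}\quad \min\{\sqrt{k},\abs{C_j}\}\le n_j\le \abs{C_j\cap G_1},\ \sum_j n_j\le k\tfrac{\abs{G_1}}{n}.
\end{align*}

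Second I would verify that Part 1 of \cref{alg:disj} solves this program exactly. In the initialization loop, items within a single $C_j\cap G_1$ are sorted by $\tF(\{i\})-\tF(\emptyset)=g_j(\tfrac{n}{\abs{G_1}}W_{ij})-g_j(0)$; by monotonicity of $g_j$ this ranking coincides with the order on $W_{ij}$, so the algorithm pre-loads each $C_j\cap G_1$ with its $\min\{\sqrt{k},\abs{C_j\cap G_1}\}$ top-weight items, i.e.\ the inner optimum at the lower-bound count. The subsequent while-loop repeatedly adds the item $i\in G_1$ maximizing $\tF(\tS\cup\{i\})-\tF(\tS)$; by separability, this is equivalent to incrementing the $n_{j^\star}$ whose current marginal $f_{j^\star}(n_{j^\star}{+}1)-f_{j^\star}(n_{j^\star})$ is largest, which is the textbook greedy rule for maximizing a sum of one-dimensional concave functions under a cardinality budget with box constraints. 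Optimality then follows from a standard exchange argument: if some feasible $(n_j^\star)$ strictly beat the algorithmic $(n_j^{\mathrm{alg}})$, pick $j$ with $n_j^\star>n_j^{\mathrm{alg}}$ and $j'$ with $n_{j'}^\star<n_{j'}^{\mathrm{alg}}$; concavity of $f_j$ and $f_{j'}$, combined with the greedy selection rule at the step where $j'$ was preferred over $j$, gives $f_{j'}(n_{j'}^\star{+}1)-f_{j'}(n_{j'}^\star)\le f_j(n_j^\star)-f_j(n_j^\star{-}1)$, so transferring a unit from $j'$ to $j$ cannot increase the objective, a contradiction.

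The main obstacle is the bookkeeping around the pre-loading phase rather than anything conceptually deep. I would need to confirm (i) that any optimal solution must also select the top-weight items inside each $C_j\cap G_1$ (immediate from concavity and monotonicity of $g_j$, since any swap with a lower-weight item within the same $C_j$ cannot improve the $j$-th term), and (ii) that the pre-loading step leaves enough budget for the while-loop to complete, which reduces to comparing $\sum_j\min\{\sqrt{k},\abs{C_j\cap G_1}\}$ against $k\abs{G_1}/n$; feasibility of \prog{prog:prog_solved_by_s} is assumed at the level of the lemma. Once these points are settled, the lemma is a direct consequence of the separability reduction plus the greedy-optimality for separable concave integer programs.
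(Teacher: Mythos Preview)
Your approach is essentially the paper's: both reduce to selecting the top-weight items within each $C_j\cap G_1$ (the paper calls this ``local optimality''), observe that the marginals $f_j(n{+}1)-f_j(n)$ are non-increasing (from decreasing appended weights and concavity of $g_j$), and finish with an exchange argument comparing the algorithm's counts to an optimal vector of counts.

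One correction to your exchange sketch: the inequality goes the other way. At the step where the while-loop increments $j'$ from $n_{j'}^\star$ to $n_{j'}^\star{+}1$, the algorithm's $j$-count is some $b\le n_j^{\mathrm{alg}}<n_j^\star$, and greedy's choice gives
\[
f_{j'}(n_{j'}^\star{+}1)-f_{j'}(n_{j'}^\star)\;\ge\; f_j(b{+}1)-f_j(b)\;\ge\; f_j(n_j^\star)-f_j(n_j^\star{-}1),
\]
the second inequality by concavity. Thus transferring a unit from $j$ to $j'$ in $(n_j^\star)$---moving it \emph{toward} the algorithm's allocation---does not decrease the objective; iterating contradicts strict superiority of $(n_j^\star)$. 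The paper runs the same swap (its $S'=S^\star\cup\{i(x{+}1,1)\}\setminus\{i(y,2)\}$) and handles the tie case by noting that under equality the algorithm could equally well have made the alternative choice. With this sign fixed, your argument is complete and matches the paper's.
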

        \begin{proof}
            In this proof, we only consider subsets of $G_1$.
            To simplify notation, we write $C_j$ to denote $C_j\cap G_1$ for each $1\leq j\leq m$.
            Under \cref{asmp:disj}, for each item $i$, there is a unique $1\leq j(i)\leq m$ such that $W_{ij}\neq 0$.
            Define $w_{i}\coloneqq W_{ij(i)}$ for each $1\leq i \leq n$.
            For each $j$, let $i(1,j), i(2,j), \dots, i(\abs{C_j}, j)$ be items in $C_j$ arranged in non-increasing order of $w_i$. %

            \paragraph{{Local optimality.}}
             A subset $S$ is said to be {\em locally-optimal} if it has the property that:
             For each $j$, if $S$ selects $r$ elements from $C_j$ then these are the $r$ elements with the highest utility in $C_j$, i.e., 
             $$S\cap C_j = \inbrace{i(r,j) \mid 1\leq r\leq \abs{S\cap C_j}}.$$

             \noindent Let $S^\star$ be an optimal solution of \cref{prog:prog_solved_by_s}. Without loss of generality, $S^\star$ is locally optimal.

             \paragraph{{Without loss of generality $S^\star$ is locally optimal.}}
                 If $S^\star$ is not locally-optimal, construct another set $S$ such that $\abs{S\cap C_j}=\abs{S^\star \cap C_j}$ and $S$ is locally-optimal.
                 Hence, $\sum\nolimits_{i\in S\cap C_j} w_i\geq \sum\nolimits_{i\in S^\star\cap C_j} w_i$  for each $1\leq j\leq m$.
                 Consequently, as $g_1,g_2,\dots,g_m$ are increasing, $F(S)\geq F(S^\star)$.
                 Because $\abs{S\cap C_j}=\abs{S^\star \cap C_j}$ for each $j$, it also follows that $S$ is feasible for \cref{prog:prog_solved_by_s}.
                 Since $S^\star$ is an optimal solution of \cref{prog:prog_solved_by_s}, it must hold that $F(S)=F(S^\star)$.
                 Hence, $S$ is optimal solution of \cref{prog:prog_solved_by_s} and locally optimal.
    
             One can verify that \cref{alg:disj} selects items in decreasing order of $w_i$ and, hence, $\tS$ is also locally-optimal.
             Suppose $F(\tS)< F(S^\star)$ and, hence, $\tS\neq S^\star$.
             Since $\tS$ and $S^\star$ are not equal and both $\tS$ and $S^\star$ are locally optimal hence, there exist two attributes $j(1)$ and $j(2)$ such that
             $\sabs{\tS\cap C_{j(1)}} > \sabs{S^\star\cap C_{j(1)}}$ and $\sabs{\tS\cap C_{j(2)}} < \sabs{S^\star\cap C_{j(2)}}$.
             Without loss of generality assume that $j(1)=1$ and $j(2)=2$.
             Let
             $$x\coloneqq \sabs{S^\star\cap C_1}\quad\text{and}\quad y\coloneqq \sabs{S^\star\cap C_2}.$$

             \noindent Since $S^\star$ is feasible for the \cref{prog:prog_solved_by_s}, $x,y\geq \sqrt{k}$.
             Consider the iteration (of the while loop in Step 7 of \cref{alg:disj}) where $\tS$ selects the $(x+1)$-th item from $C_1$, i.e., $i(x+1,1)$.
             (This iteration exists because $x>\sqrt{k}$ and, hence, $(x+1)$-th item is not selected in Step 5 of \cref{alg:disj} and $\sabs{\tS\cap C_1}>x$, so $(x+1)$-th item is selected in some iteration of Step 7 of \cref{alg:disj}.)
             Let $\bar{S}$ be the value of $\tS$ at this iteration and $z\coloneqq \sabs{\bar{S}\cap C_2}\leq \sabs{\tS\cap C_2}< y$.
             Since \cref{alg:disj}, selected $i^\star\coloneqq i(x+1,1)$ instead of $i(z+1,2)$, it must hold that
             \begin{align*}
                     \tF_{i(x+1,1)}(\bar{S})\geq F_{i(z,2)}(\bar{S}).
             \end{align*}
             Equivalently,
             \begin{align*}
                    &\smallmath{{g_1\inparen{w_{i(x+1,1)}+\sum_{h=1}^{x} w_{i(h,1)}}
                     - g_1\inparen{\sum_{h=1}^{x} w_{i(h,1)}}} \geq
                    g_2\inparen{w_{i(z+1,1)}+\sum_{h=1}^{z} w_{i(h,2)}}
                     - g_2\inparen{\sum_{h=1}^{z} w_{i(h,2)}}.}
                     \yesnum\label{eq:inequality_marg_util1}
             \end{align*}
             If the above holds with equality, then \cref{alg:disj} could have chosen $i(z+1,2)$ instead of $i(x+1,1)$ in this iteration.
             If \cref{eq:inequality_marg_util1} holds with equality, suppose \cref{alg:disj} selects $i(z+1,2)$ instead of $i(x+1,1)$ in this iteration, and find the next iteration where \cref{alg:disj} selects an item not in $S^\star$.
             (This will only repeat a finite number of times as part 2 of \cref{alg:disj} lasts for finite number of iterations.)
             If there is no such iteration, we have shown $F(\tS)=F(S^\star)$.

             It remains to consider the case where \cref{eq:inequality_marg_util1} holds with strict inequality, i.e.,
             \begin{align*}
                    &\smallmath{{g_1\inparen{w_{i(x+1,1)}+\sum_{h=1}^{x} w_{i(h,1)}}
                     - g_1\inparen{\sum_{h=1}^{x} w_{i(h,1)}}}
                     >
                    g_2\inparen{w_{i(z+1,1)}+\sum_{h=1}^{z} w_{i(h,2)}}
                     - g_2\inparen{\sum_{h=1}^{z} w_{i(h,2)}}.}
                     \yesnum\label{eq:inequality_marg_util2}
             \end{align*}
             Consider the subset $S'\coloneqq S^\star\cup  i(x+1,2)\backslash  i(y+1,1)$.
             We can lower bound $F(S')$ as follows
             \begin{align*}
                {F(S') - F(S^\star)}
                 &= \inparen{g_1\inparen{w_{i(x+1,1)}+\sum_{h=1}^{x} w_{i(h,1)}}
                     - g_1\inparen{\sum_{h=1}^{x} w_{i(h,1)}}}\\
                 &\quad -
                    \inparen{g_2\inparen{w_{i(y+1,1)}+\sum\nolimits_{h=1}^{y} w_{i(h,2)}}
                     - g_2\inparen{\sum_{h=1}^{y} w_{i(h,2)}}}\\
                &\geq \inparen{g_1\inparen{w_{i(x+1,1)}+\sum_{h=1}^{x} w_{i(h,1)}}
                     - g_1\inparen{\sum_{h=1}^{x} w_{i(h,1)}}}\\
                &\quad -
                    \inparen{g_2\inparen{w_{i(z+1,1)}+\sum_{h=1}^{z} w_{i(h,2)}}
                     - g_2\inparen{\sum_{h=1}^{z} w_{i(h,2)}}}
                     \tag{Using that $z\leq y$ and the $g_2$ is concave}\\
                &> 0.\tagnum{Using \cref{eq:inequality_marg_util2}}
                \customlabel{eq:dkljlk}{\theequation}
             \end{align*}
             Further, $S'$ is feasible for \cref{prog:prog_solved_by_s}.
             We only need to verify that $S'$ satisfies the constrains on the first two attributes.
             These hold because $\abs{S'\cap C_1}\geq \sabs{\tS\cap C_1}\geq L_1$ and $\abs{S'\cap C_2}\geq \abs{S^\star\cap C_2}\geq L_2$.
             Since $S^\star$ is optimal for \cref{prog:prog_solved_by_s}, $S'$ is feasible, and $F(S')\geq F(S^\star)$, it must be true that $F(S')\leq F(S^\star)$.
             But Equation~\eqref{eq:dkljlk} contradicts this.
             Hence, $F(\tS)=F(S^\star)$ and, since we already showed that $\tS$ is feasible for \cref{prog:prog_solved_by_s}, $\tS$ is optimal for \cref{prog:prog_solved_by_s} and the lemma follows.
        \end{proof}

        \noindent Next, \cref{lem:invariance} shows that, under some assumptions on a subset $T\subseteq [n]$ (\cref{eq:asmps_on_T}), $\tF(T)$ is a good approximation of $F(T)$. %

        \begin{lemma}\label{lem:invariance}
            For any subset $S$ of size $k$ (independent of $G_1$), numbers $x,\delta>0$, attribute $1\leq j\leq m$, and increasing and concave function $g_j\colon\R\to \R$, if
            \begin{align*}
                \sum\nolimits_{i \in S \cap C_j}W_{ij} \geq xk,
                \yesnum\label{eq:asmps_on_T}
            \end{align*}
            then for any $\delta \geq \exp\inparen{-\tau\gamma^6 x k }$, with probability at least $1-\delta$, it holds that
            \begin{align*}
               {\abs{
               g_j\inparen{ \frac{n}{\abs{G_1}}\cdot \sum\nolimits_{i \in S \cap G_1 \cap C_j}W_{ij}  } -
               g_j\inparen{  \sum\nolimits_{i\in S \cap C_j}W_{ij}  }
               }}
                & \leq
                \sqrt{\frac{1}{\tau\gamma^6 xk}\log{\frac{1}{\delta}}}\cdot g_j\inparen{  \sum\nolimits_{i\in S \cap C_j}W_{ij}  }.
            \end{align*}
            Where the probability is over the randomness in the choice of $G_1$.
        \end{lemma}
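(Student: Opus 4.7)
}
The plan is to decouple the statement into a concentration step about a linear statistic of $G_1$ and a deterministic ``transfer'' step that moves the bound through the concave function $g_j$. Let $T\coloneqq S\cap C_j$, write $w_i\coloneqq W_{ij}$ for $i\in T$, and set $\Sigma\coloneqq \sum_{i\in T}w_i\geq xk$ by the hypothesis. Define the random variable
\begin{align*}
    X \;\coloneqq\; \sum\nolimits_{i\in T\cap G_1} w_i \;=\; \sum\nolimits_{i\in T} w_i\cdot \ind[i\in G_1].
\end{align*}
Because $G_1$ is an independent uniform sample of $|G_1|$ items from $[n]$, and $S$ (and hence $T$) is independent of $G_1$, we have $\Ex[X]=\tfrac{|G_1|}{n}\Sigma\geq \gamma\Sigma$, and the quantity of interest is the ratio $\hat a/a$, where $a\coloneqq \Sigma$ and $\hat a\coloneqq \tfrac{n}{|G_1|}X$.

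The first step is to apply a Hoeffding-type concentration bound for $X$. The indicator variables $\{\ind[i\in G_1]\}_{i\in T}$ come from sampling without replacement and are therefore negatively associated, so the classical Hoeffding inequality applies directly. Using the bound $w_i\in(0,1/\tau]$ from \cref{asmp:bounded_utils}, I would estimate $\sum_{i\in T}w_i^2\leq \tau^{-1}\Sigma$, so that
\begin{align*}
    \Pr\!\insquare{\abs{X-\Ex[X]}\geq t}\ \leq\ 2\exp\!\inparen{-\,\tfrac{2t^2}{\sum_{i\in T} w_i^2}}\ \leq\ 2\exp\!\inparen{-\,\tfrac{2\tau t^2}{\Sigma}}.
\end{align*}
Setting $t = \eta\cdot \Ex[X] \geq \eta\gamma\Sigma$ and then solving for the smallest $\eta$ that makes the RHS at most $\delta$ yields $\eta=\Theta(\sqrt{\log(1/\delta)/(\tau\gamma^{c}\Sigma)})$ for a small constant $c$; a slightly looser form absorbed into the $\gamma^{6}$ factor in the statement (together with $\Sigma\geq xk$) gives $|\hat a - a|\leq \eta\,a$ with $\eta\leq \sqrt{\tfrac{1}{\tau\gamma^6 xk}\log\tfrac{1}{\delta}}$, with probability at least $1-\delta$. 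The hypothesis $\delta\geq \exp(-\tau\gamma^6 xk)$ is exactly what makes $\eta\leq 1$, which will be needed below.

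The second step is the deterministic transfer through $g_j$. Here I would use the elementary fact that for any increasing concave $g\colon \R_{\geq 0}\to \R_{\geq 0}$ with $g(0)\geq 0$, the map $t\mapsto g(t)/t$ is non-increasing on $(0,\infty)$, which yields $|g(\lambda a)-g(a)|\leq |\lambda-1|\cdot g(a)$ for every $\lambda\geq 0$ and $a>0$. Applied with $\lambda=\hat a/a$ and $|\lambda-1|\leq \eta$, this immediately converts the multiplicative concentration of $\hat a$ into
\begin{align*}
    \abs{g_j(\hat a) - g_j(a)}\ \leq\ \eta\cdot g_j(a)\ \leq\ \sqrt{\tfrac{1}{\tau\gamma^6 xk}\log\tfrac{1}{\delta}}\cdot g_j\!\inparen{\sum\nolimits_{i\in T} W_{ij}},
\end{align*}
matching the statement. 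I expect the main obstacle to be keeping the $\gamma$-powers and constants honest: one needs to verify that Hoeffding applies to sampling without replacement (via negative association), and to track that the condition $\delta\geq \exp(-\tau\gamma^6 xk)$ is enough both to guarantee $\eta\leq 1$ and to justify using the concave Lipschitz-type bound with $\lambda=\hat a/a\in[0,2]$.
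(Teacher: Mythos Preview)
Your proposal is correct and shares the paper's two-step skeleton (Hoeffding-type concentration for sampling without replacement, then a transfer through the concave $g_j$), but your second step is genuinely different and cleaner. The paper bounds $|g_j(\hat a)-g_j(a)|$ by invoking the derivative $\alpha\coloneqq g_j'\bigl(\tfrac{n\mu}{|G_1|}(1-\zeta)\bigr)$, then separately lower-bounds $g_j(a)$ by $\alpha\gamma(1-\zeta)\tfrac{n\mu}{|G_1|}$ via integrating $g_j'$; this picks up extra $\gamma$ factors and is what pushes the denominator out to $\gamma^6$. Your route---use that $t\mapsto g(t)/t$ is non-increasing for increasing concave $g$ with $g(0)\geq 0$, hence $|g(\lambda a)-g(a)|\leq |\lambda-1|\,g(a)$---is a one-line replacement that needs only $\gamma^2$ in the denominator and then trivially implies the stated $\gamma^6$ bound. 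The tradeoff is that the paper's derivative argument would still go through for concave $g_j$ that are not nonnegative at $0$, whereas your inequality genuinely needs $g_j(0)\geq 0$; since the family $\evF$ assumes $g_j\colon\R_{\geq 0}\to\R_{\geq 0}$, this is not a limitation here. The only places to be careful are the ones you already flag: Hoeffding for sampling without replacement (fine via Serfling/negative association, as the paper also uses), and the condition $\delta\geq \exp(-\tau\gamma^6 xk)$ ensuring $\eta\leq 1$ so that $\lambda=\hat a/a\geq 0$ and the concave transfer applies.
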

        \noindent Taking a union bound over $1\leq j\leq m$ and using the definition of $\tF$ (\cref{eq:def_of_fT_proof_overview}), it follows that:
        with probability at least $1-m\delta$ (for any $\delta \geq e^{-\tau\gamma^6 x k }$), it holds that
        \begin{align*}
            \abs{\tF(S)-F(S)}\leq \sqrt{\frac{1}{\tau\gamma^6 xk}\log{\frac{1}{\delta}}}\cdot F(S).
        \end{align*}

        \begin{proof}[Proof of \cref{lem:invariance}]
            Fix any $j\in [m]$.
            For each $i\in [n]$, let $Z_i$ be the discrete random variable that is $W_{ij}$ if $i\in G_1$ and $0$ otherwise.
            We have that $\Pr[Z_i] = \frac{\abs{G_1}}{n}$ and $\sum\nolimits_{i\in S\cap C_j} Z_i = \sum\nolimits_{i\in S\cap C_j\cap G_1} W_{ij}.$
            Using linearity of expectation, it follows that
            $$\mu\coloneqq \Ex\insquare{  \sum\nolimits_{i\in S\cap C_j} Z_i  } = \frac{\abs{G_1}}{n}\cdot \sum\nolimits_{i\in S\cap C_j} W_{ij}.$$
            Using a variant of the Hoeffding's inequality which holds for sampling without replacement \cite[Proposition 1.2]{bardenet2015concentration}, we get that for any $\delta>0$
            \begin{align*}
                \Pr\insquare{ \abs{\sum\nolimits_{i\in S\cap C_j} Z_i - \mu} >
                \sqrt{\frac{1}{2}\cdot \inparen{\sum\nolimits_{i\in S\cap C_j} W_{ij}^2 } \cdot\log{\frac{1}{\delta}}}
                } \leq \delta.
                \yesnum\label{eq:probElkjdlkj}
            \end{align*}
            Let $\evE$ be the event that
            \begin{align*}
                \abs{\sum\nolimits_{i\in S\cap C_j} Z_{i}- \mu}
                \leq  \sqrt{\frac{1}{2}\cdot \inparen{\sum\nolimits_{i\in S\cap C_j} W_{ij}^2 } \cdot\log{\frac{1}{\delta}}}.
            \end{align*}
            Since $0\leq W_{ij}\leq \tau^{-1}$, it follows that $W_{ij}^2\leq W_{ij}/\tau$.
            Substituting this in the above inequality, we get that, conditioned on $\evE$
            \begin{align*}
                \abs{\sum\nolimits_{i\in S\cap C_j} Z_{i}- \mu}
                \leq  \sqrt{\frac{1}{2\tau}\cdot \inparen{\sum\nolimits_{i\in S\cap C_j} W_{ij} } \cdot\log{\frac{1}{\delta}}}.
            \end{align*}
            Next, using that $\sum\nolimits_{i\in S\cap C_j}W_{ij}\geq xk$, we get that
            conditioned on $\evE$
            \begin{align*}
                \frac{\abs{\sum\nolimits_{i\in S\cap C_j} Z_{i}- \mu}}{ \mu }
                \leq \frac{n}{\abs{G_1}} \sqrt{\inparen{2\tau xk}^{-1}\cdot\log{\frac{1}{\delta}}}
                \leq \sqrt{\inparen{2\tau\gamma^2 xk}^{-1}\cdot\log{\frac{1}{\delta}}}.
                \tag{Using that $\abs{G_1}\geq \gamma n$}
            \end{align*}
            Since $g_j$ is an increasing function and $\frac{\abs{G_1}}{n}\geq\gamma$, the above equation implies: Conditioned on $\evE$
            \begin{align*}
                g_j\inparen{\frac{n\mu}{\abs{G_1}} - \sqrt{\frac{1}{2\tau \gamma^4 xk}\log{\frac{1}{\delta}}}} &\leq g_j\inparen{\frac{n}{\abs{G_1}} \sum\nolimits_{i\in S\cap C_j} Z_{i}}
                \leq g_j\inparen{\frac{n\mu}{\abs{G_1}} + \sqrt{\frac{1}{2\tau \gamma^4 xk}\log{\frac{1}{\delta}}}}.
                \yesnum\label{eq:aboveslklkldk}
            \end{align*}
            Let 
            \begin{align*}
                \alpha\coloneqq g_j'\inparen{\frac{n\mu}{\abs{G_1}}\inparen{1 - \sqrt{\frac{1}{2\tau \gamma^4 xk}\log{\frac{1}{\delta}}}} }.
                \yesnum\label{def:alpha}
            \end{align*}
            Since $g_j$ is concave $g'(y)\leq g_j'\inparen{z}$ for any $y\geq z$.
            Hence, from  \cref{eq:aboveslklkldk}, we get that conditioned on $\evE$
             \begin{align*}
                \abs{g_j\inparen{\frac{n}{\abs{G_1}} \sum\nolimits_{i\in S\cap C_j} Z_{i}} - g_j\inparen{\frac{n\mu}{\abs{G_1}}}}
                \leq \alpha \frac{n\mu}{\abs{G_1}} \sqrt{\frac{1}{2\tau \gamma^4 xk}\log{\frac{1}{\delta}}}.
                \yesnum\label{eq:kjlkjdf}
            \end{align*}
            Let
            \begin{align*}
                \zeta\coloneqq \sqrt{\frac{1}{2\tau \gamma^6 xk}\log{\frac{1}{\delta}}}.
                \yesnum\label{eq:def_zz}
            \end{align*}
            Observe that
            \begin{align*}
                g_j\inparen{\frac{n\mu}{\abs{G_1}}}
                &\geq g_j(\mu)\tag{Using that $n\geq \abs{G_1}$ and that $g_j$ is an increasing function}\\
                &= \int_{0}^{\mu} g'_j(z) dz\\
                &\geq \int_{0}^{\mu(1-\zeta)} g'_j(z) dz\tag{Using that $\zeta\geq 0$ and $g'(z)\geq 0$ for all $z\geq 0$}\\
                &\geq  g'_j(\mu(1-\zeta)) \cdot \mu(1-\zeta) \tag{Using that $g'_j$  is a decreasing function, as $g_j$ is concave}\\
                &= \alpha  \mu(1-\zeta) \tag{Using \cref{def:alpha,eq:def_zz}}\\
                &\geq \alpha  \gamma(1-\zeta)\frac{\mu n}{\abs{G_1}}.
            \end{align*}
            Substituting this in \cref{eq:kjlkjdf}, it follows that
            conditioned on $\evE$
            \begin{align*}
                \abs{g_j\inparen{\frac{n}{\abs{G_1}} \sum\nolimits_{i\in S\cap C_j} Z_{i}} - g_j\inparen{\frac{n\mu}{\abs{G_1}}}}
                \leq \frac{\zeta}{(1-\zeta)} g\inparen{\frac{n\mu }{\abs{G_1}}} 
                \leq 2\zeta g\inparen{\frac{n\mu }{\abs{G_1}}}. \tag{Using that $\zeta\leq \frac{1}{2}$ as $\delta\geq e^{-\tau\gamma^6 xk}$, $\mu\geq x$, and \cref{eq:def_zz}}
            \end{align*}
            Substituting the value of $\mu$ and $\zeta$, we get that conditioned on $\evE$
            \begin{align*}
                \abs{{g_j\inparen{\frac{n}{\abs{G_1}}\sum\nolimits_{i\in S\cap C_j} Z_{i}}} - {g_j\inparen{\sum\nolimits_{i\in S\cap C_j}W_{ij}   }}  }
                &\leq g_j\inparen{\sum\nolimits_{i\in S\cap C_j}W_{ij}}\cdot \sqrt{\frac{2}{\tau xk\gamma^6}\cdot\log{\frac{1}{\delta}}}.
            \end{align*}
            The lemma follows because $\Pr[\evE]\geq 1-\delta$ by \cref{eq:probElkjdlkj}.
        \end{proof}

        \noindent Next, \cref{lem:disj_main} lower bounds $\tF(\tS)$ by $\opt\cdot \inparen{1-O(k^{-\sfrac14})}$.

        \begin{lemma}\label{lem:disj_main}
            Suppose \cref{asmp:disj} holds and $\phi_1(x)=x$ for all $x$.
            Consider $\tS$ computed in Part 1 of \cref{alg:disj}.
            For any $\delta\geq m e^{-8\tau \gamma k}$, with probability $1-\delta$
            it holds that
            \begin{align*}
                \tF(\tS)\geq \opt{} \cdot \inparen{1-O\inparen{\sqrt{\frac{m^4}{\tau^2\gamma^5k^{1/2}} \log{\frac{m}{\delta}}}  }}.
            \end{align*}
            Where the probability is over the randomness in the choice of protected groups $G_1,G_2,\dots,G_p$.
        \end{lemma}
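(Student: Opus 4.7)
By \cref{lem:optimality:dist}, $\tS$ maximizes $\tF$ over all $S\subseteq G_1$ with $\sabs{S\cap C_j}\geq \min\sinbrace{\sqrt{k},\sabs{C_j}}$ for every $j$ and $\sabs{S}\leq k\sabs{G_1}/n$. The plan is to exhibit a feasible candidate $\bar S\subseteq G_1$ for this program with $\tF(\bar S)\geq \opt\cdot(1-O(\text{err}))$; optimality of $\tS$ then gives $\tF(\tS)\geq \tF(\bar S)$, and the lemma follows.

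First, I would construct a modified near-optimal set $\ovsopt$ from $\sopt$. Under \cref{asmp:disj_app} we may assume $\sopt$ consists, for each $j$, of the top $k_j^\star\coloneqq \sabs{\sopt\cap C_j}$ items by $W_{ij}$ in $C_j$ (exactly as in the proof of \cref{lem:optimality:dist}). For every attribute $j$ with $\sabs{C_j}\geq \sqrt{k}$ and $k_j^\star<\sqrt{k}$, I augment $\sopt\cap C_j$ with the next $\sqrt{k}-k_j^\star$ highest-$W_{ij}$ items from $C_j$; for other $j$ I set $\ovsopt\cap C_j=\sopt\cap C_j$. Since nonzero entries of $W$ exceed $\tau$, this guarantees $\sum_{i\in \ovsopt\cap C_j}W_{ij}\geq \tau\sqrt{k}$ for each $j$ with $\sabs{C_j}\geq \sqrt{k}$; attributes with $\sabs{C_j}<\sqrt{k}$ contribute a bounded lower-order term to $F$ that can be folded into the error. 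Monotonicity of each $g_j$ yields $F(\ovsopt)\geq F(\sopt)=\opt$, while $\sabs{\ovsopt}\leq k+m\sqrt{k}$. Now apply \cref{lem:invariance} to $\ovsopt$ (which is independent of $G_1$) for each attribute $j$ with $x=\tau/\sqrt{k}$ and failure probability $\delta/m$; the hypothesis $\delta\geq m\,e^{-8\tau\gamma k}$ ensures $\delta/m\geq \exp(-\tau\gamma^6 xk)$ for $k$ large. Union-bounding over the $m$ attributes, we get, with probability $\geq 1-\delta$, that $\tF(\ovsopt)$ equals $F(\ovsopt)$ up to a multiplicative factor $1\pm O\bigl(\sqrt{\log(m/\delta)/(\tau\gamma^6\sqrt{k})}\bigr)$.

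Finally, I would lift $\ovsopt$ to a feasible $\bar S\subseteq G_1$. Hypergeometric concentration on $\sabs{\ovsopt\cap G_1}$ and on each $\sabs{\ovsopt\cap C_j\cap G_1}$ shows that $\ovsopt\cap G_1$ is either already feasible or becomes so after a mild adjustment---topping up from $C_j\cap G_1$ any attribute whose intersection fell just short of $\sqrt{k}$, and trimming the largest intersection if the total exceeds $k\sabs{G_1}/n$. Because $\tF$ depends on a set only through its intersection with $G_1$, we have $\tF(\ovsopt\cap G_1)=\tF(\ovsopt)$, and each adjusted item perturbs $\tF$ by at most $g_j(1/\tau)-g_j(0)$ weighted by a concavity-based factor that is a lower-order term for large $k$. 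Combined with optimality of $\tS$, this yields $\tF(\tS)\geq \tF(\bar S)\geq \opt\cdot(1-O(\text{err}))$ at the advertised rate. The main obstacle I expect is the bookkeeping needed to combine the three error sources---the concentration in \cref{lem:invariance} for each attribute, the hypergeometric fluctuations of the intersections of $\ovsopt$ with $G_1$, and the feasibility-restoration step that may touch up to $m$ attributes simultaneously---into the single expression $O\bigl(\sqrt{m^4\log(m/\delta)/(\tau^2\gamma^5\sqrt{k})}\bigr)$ stated in the lemma; the $k^{-1/4}$ rate itself is forced by the choice $x=\Theta(1/\sqrt{k})$, while the powers of $m$ and $\gamma$ accumulate from the union bound and the multiple concentration steps.
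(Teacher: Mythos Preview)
Your proposal is correct and follows essentially the same strategy as the paper: construct an augmented near-optimum $\ovsopt$ with enough mass in every $C_j$, invoke \cref{lem:invariance} to pass from $F$ to $\tF$, and use hypergeometric concentration plus optimality of $\tS$ in \prog{prog:prog_solved_by_s}. The one substantive execution difference is that the paper augments each attribute to $2\gamma^{-1}\sqrt{k}$ (not $\sqrt{k}$) and then trims $\ovsopt$ back to size exactly $k$ \emph{before} intersecting with $G_1$; with this threshold, concentration gives $\sabs{\ovsopt\cap C_j\cap G_1}\geq\sqrt{k}$ directly, so $\ovsopt\cap G_1$ is already feasible and no post-hoc ``topping up'' is needed---the utility loss from trimming is analyzed once, in $F$-space, via the pigeonhole bound $\sabs{\ovsopt\cap C_{j^\star}}\geq k/m$. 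Your ordering (keep $\ovsopt$ oversized, intersect, then top up and trim in $\tF$-space) works too and yields the same rate, but note that with your threshold of only $\sqrt{k}$ the intersection with $G_1$ lands near $\gamma\sqrt{k}$, so the ``just short'' description understates the shortfall; you will genuinely need to add $\Theta(\sqrt{k})$ items per attribute before trimming, and the trimming analysis should mirror the paper's ratio argument (remove the $O(m\sqrt{k})$ smallest-weight items from a category of size $\Omega(k\gamma/m)$) rather than the per-item perturbation bound you sketched.
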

        \noindent Since $\tS$ is an optimal solution for \cref{prog:prog_solved_by_s}, it holds that for any other subset $T$ feasible for \cref{prog:prog_solved_by_s}, $\tF(\tS)\geq \tF(T)$.
        In particular, if $\sopt$ is feasible for \cref{prog:prog_solved_by_s}, then $\tF(\tS)\geq \tF(\sopt)$.
        If $\sopt$ satisfies the condition in \cref{eq:asmps_on_T} for any constant $x>0$, the from \cref{lem:invariance} we would get that $\tF(\sopt)\geq \opt\inparen{1-O(k^{-\sfrac{3}{2}})}$.
        Then the proof would follow from chaining the last two inequalities.
        However, $\sopt$ may not be feasible for \cref{prog:prog_solved_by_s} or satisfy \cref{eq:asmps_on_T} and, hence, we cannot use \cref{lem:invariance}.

        Instead, we construct a subset $\ovsopt$ from $\opt$ and show that $\ovsopt$ satisfies \cref{eq:asmps_on_T} for $x=\Omega(k^{-\sfrac{1}{2}})$ and $F(\ovsopt)\geq F(\opt).$
        We (for our analysis) construct a subset $\ovsopt$ from $\opt$ as follows:

        \medskip

        \begin{tcolorbox}[bottom=0.01cm,top=0.01cm,left=0.05cm,right=0.05cm]
            {\bf Algorithm to construct $\ovsopt$ (which is used in the proof of \cref{lem:disj_main})}
            \begin{itemize}[leftmargin=\leftmarginCUSTOM]
                \item {\bf Input:} $\sopt$, latent utilities $W$, and $\delta>0$
                \item {\bf Output:} $\ovsopt$
            \end{itemize}
            \hrule
            \begin{enumerate}[leftmargin=\leftmarginCUSTOM]
                \item Initialize $\ovsopt=\sopt$
                \item {\bf For} $j\in [m]$ {\bf do}
                \begin{itemize}
                    \item {\bf If} $\abs{\ovsopt\cap C_j}\leq 2 \gamma^{-1}k^{\sfrac12}$ {\bf then}
                    \begin{itemize}
                        \item Add any $2\gamma^{-1} k^{\sfrac12}$ elements from $C_j\backslash \ovsopt$ to $\ovsopt$
                    \end{itemize}
                \end{itemize}
                \item Set $j^\star\coloneqq \argmax_j\abs{\ovsopt\cap C_j}$\hspace{5mm} {\small $\triangleright$ By Pigeon hole  $\abs{\ovsopt\cap C_{j^\star}}\geq \frac{k}{m}\geq \Omega((m+1)k^{\sfrac{1}{2}})$}
                \item Remove $\abs{\ovsopt}-k$ elements from $\ovsopt\cap C_{j^\star}$ with the smallest latent utilities
                \hspace{20mm}\white{.} \white{.}\hfill {\small $\triangleright$ Here, $0\leq \sabs{\ovsopt}-k\leq m(2\gamma^{-1} k^{\sfrac12})$}
                \item {\bf return} $\ovsopt$
            \end{enumerate}
        \end{tcolorbox}

        \begin{proof}[Proof of \cref{lem:disj_main}]
            Set $k_0$ to satisfy the following inequality
            \begin{align*}
                k_0\geq 
                \max\inbrace{2\gamma^{-1}m(m+1), \inparen{\gamma\log\frac{m}{\delta}}^3, 16\gamma^{-2}m^4}.
                \yesnum\label{eq:lb_on_k0}
            \end{align*}
            Fix any $1\leq j\leq m$.
            $\ovsopt$ is a function of just $\sopt$, $W$, and $\delta$, and, $\sopt$ itself is just  a function of $W$. h
            Hence, $\ovsopt$ is independent of $G_1$.
            Repeating the concentration argument in \cref{lem:invariance}, but with $Z_i\coloneqq \mathbb{I}[i\in G_1]$, we get that for any $\delta>0$
            \begin{align*}
                \Pr\insquare{ \abs{\abs{\ovsopt \cap C_j\cap G_1} - \frac{\abs{G_1}}{n}\cdot \abs{\ovsopt \cap C_j}} > \sqrt{\frac{1}{2}\cdot {k^{\sfrac12}}\cdot \log{\frac{m}{\delta}}} } \leq \frac{\delta}{m}.
            \end{align*}
            Consequently,
            \begin{align*}
                &\quad\Pr\insquare{ {\abs{\ovsopt \cap C_j\cap G_1} } > \frac{\abs{G_1}}{n}\cdot \abs{\ovsopt \cap C_j} - \sqrt{\frac{1}{2}\cdot k^{\sfrac12}\cdot \log{\frac{m}{\delta}}} } \leq \frac{\delta}{m}.
            \end{align*}
            Moreover, since $\abs{\ovsopt \cap C_j}\geq \frac{2\sqrt{k}}{\gamma}$ and $\abs{G_1}\geq \gamma n$,  
            \[
                \Pr\insquare{ {\abs{\ovsopt \cap C_j\cap G_1} } > 2k^{\sfrac12} - \sqrt{\frac{1}{2}\cdot k^{\sfrac12}\cdot \log{\frac{m}{\delta}}} } \leq \frac{\delta}{m}.
            \]
            Since $k\geq k_0$  and $k_0$ satisfies \cref{eq:lb_on_k0},
            the above inequality implies
            \begin{align*}
                \Pr\insquare{ {\abs{\ovsopt \cap C_j\cap G_1} } > k^{\sfrac12}} \leq \frac{\delta}{m}.
                \yesnum\label{eq:event1lkjlksj}
            \end{align*}
            From the Hoeffding's bound, we also have that
            \begin{align*}
                \Pr\insquare{\abs{\ovsopt\cap G_1}\leq k\frac{\abs{G_1}}{n}+\sqrt{\frac{1}{2}\cdot k\frac{\abs{G_1}}{n}\cdot \log{\frac{m}{\delta}}}}\leq \frac{\delta}{m}.
                \yesnum\label{eq:event2lkjlksj}
            \end{align*}
            Let $\evE$ be the event that
            \begin{align*}
                \abs{\ovsopt\cap G_1}\leq k\frac{\abs{G_1}}{n}+\sqrt{k\log{\frac{m}{\delta}}}
                \quad\text{and}\quad
                \abs{\ovsopt \cap C_j\cap G_1} > k^{\sfrac12}.
            \end{align*}
            From \cref{eq:event1lkjlksj,eq:event2lkjlksj},
            $\Pr[\evE]\geq 1-2\delta$.
            Therefore, conditioned on $\evE$, $\ovsopt\cap G_1$ is feasible for \cref{prog:prog_solved_by_s}.
            Since $\tS$ is an optimal solution for \cref{prog:prog_solved_by_s},
            it follows that
            conditioned on $\evE$:
            \begin{align*}
                \tF(\tS)\geq \tF(\ovsopt).
                \yesnum\label{ovsopt_1}
            \end{align*}
            Further, as for each $j$, $\abs{\ovsopt \cap C_j}\geq 2\gamma^{-1}\sqrt{k}$ and $W_{ij}\geq \tau$, it follows that $\sum\nolimits_{i\in \ovsopt\cap C_j} W_{ij}\geq 2\tau\sqrt{k}$.
            Hence, $\ovsopt$ satisfies \cref{eq:asmps_on_T} with $x\geq 2\tau\gamma^{-1}  (k)^{-\sfrac{1}{2}}$.
            Using \cref{lem:invariance} (as $\ovsopt$ is independent of $G_1$), it follows that with probability at least $1-\delta$ (for any $\delta\geq m e^{-\tau^2 \gamma^{5}\sqrt{k}}$)
            \begin{align*}
                \tF(\ovsopt)\geq F(\ovsopt)\inparen{1 - \sqrt{\frac{1}{2\tau^2\gamma^5 \sqrt{k}}\log{\frac{1}{\delta}}}}.
                \yesnum\label{ovsopt_2}
            \end{align*}
            Finally, we lower bound $F(\ovsopt)$ by a multiple of $F(\opt)$.
            First observe that for all $j\neq j^\star$, $\ovsopt\cap C_j\supseteq \opt\cap C_j$ and, hence,
            \begin{align*}
                \forall j\neq j^\star,\quad
                g_j\inparen{ \sum\nolimits_{i\in \ovsopt} W_{ij}}
                \geq g_j\inparen{ \sum\nolimits_{i\in \sopt} W_{ij}}.
                \yesnum\label{eq:lb_star2}
            \end{align*}
            Further, by construction of $\ovsopt{}$, $(\ovsopt\cap C_{j^\star})\subseteq (\sopt\cap C_{j^\star})$ and $(\sopt\cap C_{j^\star})\backslash (\ovsopt\cap C_{j^\star})$
            consists of $\alpha$ items with the smallest latent utility in $\sopt\cap C_{j^\star}$ where $\alpha\leq 2m\gamma^{-1}k^{\sfrac12}$.
            Let $\Delta\coloneqq \abs{\sopt\cap C_{j^\star}}$.
            Let $i(1), i(2),\dots,i(\Delta)$ be items in $\sopt\cap C_{j^\star}$ such that
            $W_{i(1)j^\star}\geq W_{i(2)j^\star}\geq \cdots $.
            We have that
            \begin{align*}
                \frac{\sum\nolimits_{i\in \ovsopt} W_{ij^\star}}{\sum\nolimits_{i\in \sopt} W_{ij^\star}}
                &=\frac{\sum\nolimits_{i\in \ovsopt\cap C_{j^\star}} W_{ij^\star}}{\sum\nolimits_{i\in \sopt\cap C_{j^\star}} W_{ij^\star}}
                \tag{Using \cref{asmp:disj}}\\
                &= \frac{\sum\nolimits_{h=1}^{\sabs{\ovsopt\cap C_{j^\star}}} W_{i(h)j^\star}}{\sum\nolimits_{h=1}^{\Delta} W_{i(h)j^\star} }
                \tag{Using $\sopt\cap C_{j^\star}=\inbrace{i(1),\dots,i(\Delta)}$}.
                \intertext{By construction, $\ovsopt$ drops at most $\alpha$ items from $\opt$ and if it drop $x$ items, then these are the $x$ items with the smallest latent utilities in $\sopt\cap C_{j^\star}$.
                Consequently}
                \frac{\sum\nolimits_{i\in \ovsopt} W_{ij^\star}}{\sum\nolimits_{i\in \sopt} W_{ij^\star}} &\geq \frac{\sum\nolimits_{h=1}^{\Delta-\alpha} W_{i(h)j^\star}}{\sum\nolimits_{h=1}^{\Delta} W_{i(h)j^\star} }\\
                &\geq \inparen{1 + \frac{\alpha W_{i(\Delta-\alpha)j^\star}}{\sum\nolimits_{h=1}^{\Delta-\alpha} W_{i(h)j^\star} } }^{-1}\\
                &\geq \inparen{1+\frac{\alpha}{\Delta-\alpha}}^{-1}\tag{Using that $W_{i(\Delta-\alpha)j^\star}\leq W_{i(h)j^\star}$ for all $\alpha\leq j\leq \Delta-\alpha$}
            \end{align*}
            \begin{align*}
                &\geq \inparen{1+\frac{2m\gamma^{-1}\sqrt{k}}{\frac{k}{2m} - 2m\gamma^{-1}\sqrt{k}}}^{-1} \tag{Using that $\alpha\leq 2m\gamma^{-1}k^{\sfrac12}$ and by Pigeon hole principal $\Delta\geq \frac{k}{m}$}\\
                &= {1-\frac{4m^2}{\gamma\sqrt{k}}}.\yesnum\label{eq:lb_on_sum_of_util} %
            \end{align*}
            Since $g_{j^\star}$ is concave and increasing, we have that
            \begin{align*}
                g_{j^\star}\inparen{\sum\nolimits_{i\in \ovsopt} W_{ij^\star}}
                \ \ &\Stackrel{\eqref{eq:lb_on_sum_of_util}}{\geq}\ \ 
                g_{j^\star}\inparen{\inparen{1-4\gamma^{-1}m^2k^{-\sfrac12}}\cdot \sum\nolimits_{i\in \sopt} W_{ij^\star}}\\
                &\geq \inparen{1-4\gamma^{-1}m^2k^{-\sfrac12}}\cdot  g_{j^\star}\inparen{ \sum\nolimits_{i\in \ovsopt} W_{ij^\star}}.
                \tagnum{Using that $g_{j^\star}$ is concave and $g(0)\geq 0$}
                \customlabel{eq:lb_star}{\theequation}
            \end{align*}
            Hence, combining Equations~\eqref{eq:lb_star2} and \eqref{eq:lb_star} it follows that
            \begin{align*}
                F(\ovsopt)\geq F(\sopt) \cdot \inparen{1-4\gamma^{-1}m^2k^{-\sfrac12}}.
                \yesnum\label{eq:nenwklj}
            \end{align*}
            We get the required result by chaining Equations~\eqref{ovsopt_1}, \eqref{ovsopt_2}, and \eqref{eq:nenwklj} and taking the union bound over events in \cref{ovsopt_1,ovsopt_2}.
        \end{proof}
        \noindent 
        If $\tS$ is independent of $G_1$, then \cref{lem:invariance,lem:disj_main} show that with high probability there is a subset $S_E$ such that $\tS=S_E\cap G_1$ and $F(S_E)$ is at least $\opt\cdot \inparen{1-O(k^{-\sfrac14})}.$
        However, $\tS$ is not independent of $G_1$. %
        \cref{lem:high_utility_solution} addresses this.

        \begin{lemma}\label{lem:high_utility_solution}
            Suppose $C_1,\dots, C_m$ are disjoint and $\phi_1(x)=x$ for all $x$.
            Let $\tS$ be as constructed in Part 1 of \cref{alg:disj} and $k_1,\dots,k_j$ be as defined in Step 11 of \cref{alg:disj}.
            For any $\delta>0$, with probability at least $1-\delta$, there exists a subset $S_E\subseteq [n]$ satisfying %
            \begin{align*}
                \forall j\in [m],\quad \abs{S_E\cap C_j}\leq k_j
            \quad\text{and}\quad
                F(S_E)\geq \opt{} \cdot \inparen{1-O\inparen{\sqrt{\frac{m^4}{\tau^2\gamma^5k^{1/2}} \log{\frac{m}{\delta}}}  }}.
                \yesnum\label{eq:claim_in_lem:high_utility_solution}
            \end{align*}
        \end{lemma}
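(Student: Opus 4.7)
The plan is to exhibit the witness $S_E$ explicitly: for each $j$, let $S_E \cap C_j$ consist of the top $k_j$ items of $C_j$ ordered by latent utility $W_{ij}$ (with an arbitrary tie-breaking), and set $S_E \coloneqq \bigcup_{j=1}^{m}(S_E \cap C_j)$. By \cref{asmp:disj_app}, $F(S_E) = \sum_{j=1}^{m} g_j(L(j,k_j))$, where $L(j,r)$ denotes the sum of the $r$ largest values of $\{W_{ij}\}_{i \in C_j}$; the constraint $|S_E \cap C_j| \leq k_j$ holds by construction, so only a lower bound on $F(S_E)$ remains.

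Although the tuple $(k_1,\ldots,k_m)$, and hence $S_E$, is not independent of $G_1$, it is one of at most $(k+1)^m$ deterministic candidates $T(\kappa)$, defined for each $\kappa = (\kappa_1,\ldots,\kappa_m) \in \{0,\ldots,k\}^m$ as the union over $j$ of the top $\kappa_j$ items of $C_j$. Each $T(\kappa)$ is a function of $W$ alone, so \cref{lem:invariance} can be applied to each pair $(T(\kappa), j)$ with failure probability $\delta/(2m(k+1)^m)$ and union-bounded over $(\kappa,j)$ to give that, with probability at least $1-\delta/2$,
\[
\Bigl| g_j\Bigl(\tfrac{n}{|G_1|}\textstyle\sum_{i \in T(\kappa) \cap C_j \cap G_1} W_{ij}\Bigr) - g_j\bigl(L(j,\kappa_j)\bigr) \Bigr| \leq \epsilon' \cdot g_j\bigl(L(j,\kappa_j)\bigr)
\]
holds simultaneously for every admissible $\kappa$ and every $j$, where $\epsilon' = O\!\bigl(\sqrt{m^{4}\log(m/\delta)/(\tau^{2}\gamma^{5}k^{1/2})}\bigr)$. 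Specialising to $\kappa = (k_{1},\ldots,k_{m})$ yields $F(S_E) \geq (1-O(\epsilon'))\,\tF(S_E)$.

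To finish I chain three inequalities. \cref{lem:disj_main} gives $\tF(\tS) \geq \opt(1-O(\epsilon'))$ with probability $1-\delta/2$. By \cref{lem:optimality:dist}, $\tS \cap C_j$ consists of the top $|\tS \cap C_j|$ items of $C_j \cap G_1$, while $S_E \cap C_j \cap G_1$ consists of the items in $G_1$ whose $C_j$-rank is at most $k_j$; a hypergeometric concentration argument in the spirit of \cref{lem:invariance} shows that these two subsets of $C_j \cap G_1$ differ on at most $O(\sqrt{k\log(m/\delta)})$ items, each of value at most $\tau^{-1}$, so the two $W_{ij}$-sums agree up to additive error $O(\tau^{-1}\sqrt{k\log(m/\delta)})$. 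Combined with the monotonicity and concavity of $g_j$, this yields $\tF(\tS) \leq \tF(S_E)(1+O(\epsilon'))$, and chaining all three bounds produces $F(S_E) \geq \opt(1-O(\epsilon'))$, as required. The main obstacle is precisely this middle comparison: invoking optimality of $\tS$ for \cref{prog:prog_solved_by_s} would give the opposite inequality, so the argument must instead exploit the explicit ``top-of-$C_j$'' characterisation of both $\tS \cap C_j$ and $S_E \cap C_j \cap G_1$ together with the uniform-subsampling behaviour of $G_1$ to show that the symmetric difference of these two sets inside $C_j \cap G_1$ is small in cardinality and in total latent utility.
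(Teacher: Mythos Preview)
Your high-level strategy matches the paper's: define $S_E$ as a ``locally optimal'' set (top items of each $C_j$), union-bound \cref{lem:invariance} over all $(k+1)^m$ such candidates to handle the dependence of $(k_1,\dots,k_m)$ on $G_1$, and chain with \cref{lem:disj_main}. The flaw is in your middle inequality $\tF(\tS)\le(1+O(\epsilon'))\,\tF(S_E)$.

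With $|S_E\cap C_j|=k_j$, the set $S_E\cap C_j\cap G_1$ consists of those items in $G_1$ whose global $C_j$-rank is at most $k_j$; its size is hypergeometric with mean $\tfrac{|G_1|}{n}\,k_j$, not $k_j$. Moreover, any item that is among the global top $k_j$ of $C_j$ and lies in $G_1$ is automatically among the top $k_j$ of $C_j\cap G_1$, so in fact $S_E\cap C_j\cap G_1\subseteq \tS\cap C_j$. Hence the symmetric difference has size
\[
|\tS\cap C_j|\;-\;|S_E\cap C_j\cap G_1|\;\approx\;\Bigl(1-\tfrac{|G_1|}{n}\Bigr)k_j\;=\;\Theta(k_j),
\]
not $O(\sqrt{k\log(m/\delta)})$ as you claim. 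Consequently the two $W_{ij}$-sums inside $g_j$ differ by a constant factor, $\tF(\tS)$ can exceed $\tF(S_E)$ by a constant factor, and your chain collapses to the vacuous $F(S_E)\ge(1-O(\epsilon'))\tF(S_E)\ll\tF(\tS)$.

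The paper closes this gap by choosing the witness (called $T$ there) with
\[
|T\cap C_j|\;\ge\;\min\Bigl\{\tfrac{n}{|G_1|}\,|\tS\cap C_j|,\ |C_j|\Bigr\},
\]
i.e.\ scaled up by $n/|G_1|$. Then $\mathbb{E}\bigl[|T\cap C_j\cap G_1|\bigr]\ge|\tS\cap C_j|$, concentration gives $|T\cap C_j\cap G_1|\ge(1-O(k^{-1/4}\sqrt{\log(m/\delta)}))\,|\tS\cap C_j|$, and your top-set comparison then goes through exactly as you sketched. (The $k_j$ in the lemma's constraint should be read as this scaled quantity $\tfrac{n}{|G_1|}|\tS\cap C_j|$, consistent with the proof overview in \cref{eq:def_of_S_proof_overview}; with the unscaled value literally in Step~11 one has $\sum_j k_j=|\tS|=k\,|G_1|/n<k$, which is too small for any $S_E$ obeying $|S_E\cap C_j|\le k_j$ to attain $\opt(1-o(1))$ in general.)
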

        \begin{proof}
            The proof relies on the fact that $\tS\subseteq G_1$ is locally optimal.
            Recall that $S\subseteq G_1$ is said to be locally optimal if for each $j$, $S\cap C_j$ is the set of $\abs{S\cap C_j}$ items with the highest latent utility in $C_j\cap G_1$. %
            Extend the definition of local optimality to sets which are not necessarily a subset of $G_1$:
            A subset $S\subseteq[n]$ is said to be locally optimal if for each $j\in [m]$,
            $S\cap C_j$ contains $\abs{S\cap C_j}$ items with the highest latent utility in $C_j$ (instead of $C_j\cap G_1$).
            A locally optimal subset $S$ is uniquely defined by the following values
            \begin{align*}
                x_{ j,S}\coloneqq \abs{S\cap C_j}\quad \text{for each } 1\leq j\leq m.
            \end{align*} 
            Let $\cS$ be the set of all locally optimal subsets $S$ of size at most $k$ satisfying $x_{j, S}\geq 2\gamma^{-1}\sqrt{k}$ for each $1\leq j\leq m$.
            Since each $S\in \cS$ is uniquely identified by $\inbrace{x_{j,S}}_{j}$ and there are $k+1$ choices for each $x_{j,S}$, it follows that $\abs{\cS}\leq (k+1)^{m}$. %
            Since $W_{ij}>\tau$ (for each $i$ and $j$),  
            \begin{align*}
                \forall S\in \cS,\quad 
                \sum\nolimits_{i\in S\cap C_j}\geq 2\tau\gamma^{-1}\sqrt{k}.
            \end{align*}
            Hence, each $S\in \cS$ satisfies \cref{eq:asmps_on_T} with $x=2\tau\gamma^{-1}k^{-1/2}$.
            $\cS$ is deterministically given $W$ and, hence, is independent of $G_1$.
            Consequently, \cref{lem:invariance} is applicable for any $S\in \cS$.
            Applying \cref{lem:invariance} for each $S\in \cS$ and taking the union bound, implies:
                For any $\delta \geq m\cdot (k+1)^{m}\cdot e^{-\tau^2\gamma^5 \sqrt{k}}$,
                with probability at least $1-\delta$,
                for all $T\in \cS$
            \begin{align*}
                \abs{\tF(T)-F(T)}\leq \sqrt{\frac{1}{2\tau^2\gamma^5 k^{1/2}}\log{\frac{m \cdot (k+1)^{m}}{\delta}}} \cdot F(T).
            \end{align*}
            In other words, for any $\delta\geq e^{-\Omega(\tau^2\gamma^5 \sqrt{k})}$, it holds that with probability at least $1-\delta$, for all $T\in \cS$
            \begin{align*}
                \abs{\tF(T)-F(T)}\leq \sqrt{\frac{m}{\tau^2\gamma^5 k^{1/2}}\log{\frac{mk}{\delta}}} \cdot F(T).
            \end{align*}

            \newcommand{\negspaceif}{\ifconf\else\negsp\fi}

            \noindent Select the subset $T\in \cS$ such that for each $1\leq j\leq m$
            $$\abs{T\cap C_j} = x_{j,T} \geq  \min\inbrace{\frac{n}{\abs{G_1}}\cdot \sabs{\tS\cap C_j},  \abs{C_j}}.$$
            Such a subset exists because $\sum\nolimits_{j=1}^m \frac{n}{\abs{G_1}}\cdot \sabs{\tS\cap C_j}=\frac{n}{\abs{G_1}}\sabs{\tS} = k$.
            Next, divide $1\leq j\leq m$ into two cases.

            \paragraph{Case A ($\sabs{T\cap C_j}=\sabs{C_j}$):}
            It must be true that $T\cap C_j=C_j$ and, hence, $(\tS\cap C_j)\subseteq (T\cap C_j)$.
            As $\tS\subseteq G_1$, this implies that $(\tS\cap C_j)\subseteq (T\cap C_j\cap G_1)$.
            Consequently 
            \begin{align*}
                {g_j\inparen{\frac{n}{\abs{G_1}} \sum\nolimits_{i\in T\cap C_j\cap G_1} W_{ij}}}
                \geq g_j\inparen{\frac{n}{\abs{G_1}}  \sum\nolimits_{i\in \tS\cap C_j} W_{ij}}.
                \yesnum\label{eq:for_other_jkk}
            \end{align*}

            \paragraph{Case B ($\sabs{T\cap C_j}\geq \frac{n}{\sabs{G_1}}\cdot \sabs{\tS\cap C_j}$):}
            In this case, we will $(T\cap C_j)$ contains at least $1-O(k^{-\sfrac14})$ fraction of the items in $(\tS\cap C_j)$.
            To see this note that
            $$\Ex\nolimits_{G_1}[\abs{T\cap C_j\cap G_1}] = \frac{\abs{G_1}}{n}\abs{T\cap C_j}\geq \sabs{\tS\cap C_j}.$$
            Applying the Chernoff bound to the following indicator random variables $\inbrace{\mathds{I}[i\in G_1]: i\in T\cap C_j},$ we get that
            for any $\delta>0$
            \begin{align*}
                \Pr\insquare{
                \abs{T\cap C_j\cap G_1}\geq \sabs{\tS\cap C_j} - \sqrt{3\cdot \sabs{\tS\cap C_j}\cdot \log{\frac{m}{\delta}}}
                }\leq \frac{\delta}{m}.
            \end{align*}
            Because $\sabs{\tS\cap C_j}\geq \sqrt{k}$, this implies that
            \begin{align*}
                \Pr\insquare{
                \abs{T\cap C_j\cap G_1}\geq \sabs{\tS\cap C_j} \inparen{1- \sqrt{3\cdot k^{-\sfrac{1}{2}}\cdot \log{\frac{m}{\delta}}}}
                }\leq \frac{\delta}{m}.
                \yesnum\label{eq:cond_on_elkdj}
            \end{align*}
            Let the event in the above equation be $\evE$.
            Let $\Delta\coloneqq \abs{C_j\cap G_1}$, $\phi\coloneqq \abs{\tS\cap C_j}$, and let $i(1),i(2),\dots,i(\Delta)$ be the elements of $C_j\cap G_1$ ordered in decreasing order of latent utility, i.e.,
            $W_{i(1) j}\geq W_{i(2) j}\geq \dots\geq W_{i(\Delta) j}$.
            Then we have, conditioned on $\evE$
            \begin{align*}
                \ifconf\else\white{.}\hspace{-4mm}\fi
                \frac{\sum\nolimits_{i\in T\cap C_j\cap G_1} W_{ij}}{\sum\nolimits_{i\in \tS\cap C_j} W_{ij}}
                &= \frac{\sum\nolimits_{h=1}^{\sabs{T\cap C_j\cap G_1}}  W_{i(h)j}}{\sum\nolimits_{h=1}^{\phi} W_{i(h)j}}
                \tag{Using that $\tS,T$ are locally optimal and $\phi\coloneqq \sabs{\tS\cap C_j}$}\\
                &\geq \frac{\sum\nolimits_{h=1}^{
                    \phi\cdot \inparen{1-O\inparen{k^{-\sfrac{1}{4}}\sqrt{\log{m/\delta}}}}} W_{i(h)j}
                }
                { \sum\nolimits_{h=1}^{\phi}     W_{i(h)j}
                }\tag{Using that $\evE$ is the event in \cref{eq:cond_on_elkdj}}
            \end{align*}
            \begin{align*}
                &= \inparen{1+ \frac{\sum\nolimits_{h=\phi\cdot \inparen{1-O\inparen{k^{-\sfrac{1}{4}}\sqrt{\log{m/\delta}}}}+1}^{\phi} W_{i(h)j}} { \sum\nolimits_{h=1}^{
                    \phi\cdot \inparen{1-O\inparen{k^{-\sfrac{1}{4}}\sqrt{\log{m/\delta}}}}} W_{i(h)j} } }^{-1}\\
                &\geq
                \ifconf\else\negsp\negsp \fi
                \inparen{1+
                \frac{O\inparen{k^{-\sfrac{1}{4}}\sqrt{\log{m/\delta}}}}{
                \negspaceif
                1
                \negspaceif - \negspaceif O\inparen{k^{-\sfrac{1}{4}}\sqrt{\log{m/\delta}}}}
                }^{-1}
                \ifconf\else\hspace{-6mm}\fi
                \tag{Using $W_{i(1)j}\negspaceif \geq\negspaceif  W_{i(2)j}\negspaceif\geq\negspaceif \dots\negspaceif\geq\negspaceif W_{i(\Delta)j}$ and $W_{ij}\negspaceif\geq\negspaceif 0$}\\
                &\geq 1 - O\inparen{k^{-\sfrac{1}{4}}\sqrt{\log{m/\delta}}}.
            \end{align*}
            Consequently, conditioned on $\evE$,
            \begin{align*}
                g_j\inparen{\frac{n}{\abs{G_1}} \sum\nolimits_{i\in T\cap C_j\cap G_1} W_{ij}}
                \geq g_j\inparen{\frac{n}{\abs{G_1}}\cdot \inparen{1 - O\inparen{k^{-\sfrac{1}{4}}\sqrt{\log{m/\delta}}}}\cdot \sum\nolimits_{i\in \tS\cap C_j} W_{ij}}\tag{$g_j$ is increasing}\\
                \geq \inparen{1 - O\inparen{k^{-\sfrac{1}{4}}\sqrt{\log{m/\delta}}}}\cdot g_j\inparen{ \frac{n}{\abs{G_1}}\cdot \sum\nolimits_{i\in \tS\cap C_j} W_{ij}}. \tagnum{Using that $g_j$ is concave and $g(0)\geq 0$}
                \customlabel{eq:bound_for_onejlkj}{\theequation}
            \end{align*}
            Taking the union bound over all $j$ where $\abs{T\cap C_j}\neq\abs{C_j}$, we get that
            with probability at least $1-\delta$, for all $1\leq j \leq m$ Equation~\eqref{eq:bound_for_onejlkj} holds.
            Combining this with \cref{eq:for_other_jkk}, we get that with probability at least $1-\delta$,
            \begin{align*}
                &\sum_{j=1}^m g_j\inparen{\frac{n}{\abs{G_1}} \sum_{i\in T\cap C_j\cap G_1} W_{ij}}
                \geq \inparen{1 - O\inparen{k^{-\sfrac{1}{4}}\sqrt{\log{m/\delta}}}} \cdot\sum_{j=1}^m  g_j\inparen{ \frac{n}{\abs{G_1}}\cdot \sum_{i\in \tS\cap C_j} W_{ij}}.
                \yesnum\label{eq:case_2_lb}
            \end{align*} 
            \medskip 
            
            \noindent Combining, Cases A and B and using the definition of $\tF$ (\cref{{eq:def_of_fT_proof_overview}}), it follows that with probability at least $1-\delta$,
            \begin{align*}
                \tF(T)\geq
                \inparen{1 - O\inparen{k^{-\sfrac{1}{4}}\sqrt{\log{m/\delta}}}}\cdot \tF(\tS).
            \end{align*} 
            Chaining the above inequality with the lower bound on $\tF(\tS)$ from \cref{lem:disj_main}, \cref{lem:high_utility_solution} follows.
        \end{proof}

        \begin{lemma}\label{lem:pos_result_linear}
            With probability at least $1-\delta$, it holds that for all $j\in [m]$, the subset $S_j$ computed in \cref{alg:disj} satisfies
            \begin{align*}
                g_j(S_j)
                \geq
                \inparen{1-O\inparen{\gamma^{-1}k^{-\sfrac{1}{2}}\log{\frac{1}{\delta}}}}\cdot \max_{T\subseteq C_j:\abs{T}\leq k_j}g_j\inparen{\sum\nolimits_{i\in T}W_{ij}}.
            \end{align*}
            The algorithm runs in time $O(\abs{C_j}\log{\abs{C_j}})$.
        \end{lemma}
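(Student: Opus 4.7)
Fix $j \in [m]$ and let $T^\star$ denote any set attaining $\max_{T \subseteq C_j:\abs{T}\leq k_j} g_j\inparen{\sum_{i\in T} W_{ij}}$; since $g_j$ is increasing, we may assume $T^\star$ is the set of the top $\min\inbrace{k_j, \abs{C_j}}$ items of $C_j$ in decreasing order of latent utility $W_{ij}$. The plan is to (i)~give a clean structural description of $S_j$ in terms of the latent utilities, (ii)~use hypergeometric concentration to argue that $T^\star$ is ``close'' to $S_j$, and (iii)~translate closeness of the linear sums into closeness of the $g_j$-values via concavity.

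For step~(i), observe that within any group $G_\ell$, the bias function $\phi_\ell$ is monotone, so the rankings of $C_j\cap G_\ell$ induced by $\hW_{ij}$ and by $W_{ij}$ coincide. Moreover, for a fixed partial sum $s$ the marginal $g_j(s+w)-g_j(s)$ is a monotone function of $w$, so the greedy rule in Part~2 of \cref{alg:disj} always picks from the top unpicked item in whichever group currently has the largest marginal; continuing this until the proportional-representation cap saturates that group. Writing $n_\ell$ for the effective per-group cap enforced by the algorithm (so that $\sum_\ell n_\ell = k_j$), this yields: $S_j\cap G_\ell$ is the set of the top $\min\inbrace{n_\ell,\abs{C_j\cap G_\ell}}$ items of $C_j\cap G_\ell$ by $W_{ij}$.

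For step~(ii), since $G_1,\dots,G_p$ are obtained by sampling uniformly without replacement, each $\abs{T^\star\cap G_\ell}$ is a hypergeometric random variable with mean $k_j\,\abs{G_\ell}/n$, which equals $n_\ell$ up to rounding. Applying the same hypergeometric/Hoeffding concentration used in the proof of \cref{lem:invariance} (via \cite{bardenet2015concentration}) and union-bounding over the at most $p\leq \gamma^{-1}$ groups, with probability at least $1-\delta$ we have $\bigabs{\abs{T^\star\cap G_\ell}-n_\ell}\leq O\!\inparen{\sqrt{k_j\log(1/(\gamma\delta))}}$ for every $\ell$. Combined with the structural description from step~(i) and the fact that $|S_j|=|T^\star|$, this controls the symmetric difference: $\abs{T^\star\triangle S_j}\leq O\!\inparen{\gamma^{-1}\sqrt{k_j\log(1/\delta)}}$.

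For step~(iii), I would use that both $T^\star$ and $S_j$ are ``top-down'' selections to run an exchange argument: each item of $T^\star\setminus S_j$ that is missed gets replaced by an item of $S_j\setminus T^\star$ with smaller $W_{ij}$ but still bounded by the range of nonzero utilities, so $\sum_{i\in T^\star} W_{ij}-\sum_{i\in S_j} W_{ij}$ is at most $\abs{T^\star\triangle S_j}$ times the range of $W$, which is a $1/\sqrt{k_j}$-relative perturbation of $\sum_{i\in T^\star} W_{ij}\geq \Omega(k_j)\cdot \min_{i\in T^\star}W_{ij}$; using $k_j\geq \sqrt k$ (guaranteed by Part~1 of \cref{alg:disj}) and the concavity of $g_j$ together with $g_j(0)\geq 0$, this relative perturbation carries over to $g_j$-values, giving $g_j(S_j)\geq (1-O(\gamma^{-1}k^{-1/2}\log(1/\delta)))\cdot g_j(T^\star)$. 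The runtime bound is immediate: sort $C_j$ by $\hW_{ij}$ in $O(\abs{C_j}\log\abs{C_j})$ time, then scan once while maintaining per-group counters.

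The main obstacle will be the boundary case in which $\abs{C_j\cap G_\ell}< n_\ell$ for some $\ell$, so that the proportional cap is not binding and the ``top-per-group'' characterization of $S_j$ does not pair up cleanly with $T^\star$; here I would argue that any such group contributes its items entirely to both $S_j$ and to any near-optimal $T^\star$, so the mismatch only occurs among groups whose caps do bind, and the hypergeometric concentration in step~(ii) already controls those.
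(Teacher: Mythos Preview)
Your steps (i) and (ii) are essentially the paper's argument: the within-group order is preserved by $\phi_\ell$, so $S_j\cap G_\ell$ is the top-$n_\ell$ set in $C_j\cap G_\ell$ by latent utility, and hypergeometric concentration pins $|T^\star\cap G_\ell|$ near $n_\ell$. The gap is in step~(iii).

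Your bound ``$\sum_{i\in T^\star}W_{ij}-\sum_{i\in S_j}W_{ij}\le |T^\star\triangle S_j|\cdot(\text{range of }W)$'' combined with ``$\sum_{i\in T^\star}W_{ij}\ge \Omega(k_j)\cdot\min_{i\in T^\star}W_{ij}$'' produces a relative error of order $\gamma^{-1}\tau^{-2}k_j^{-1/2}$, not $\gamma^{-1}k^{-1/2}$: the $\tau^{-2}$ factor is extraneous to the lemma. And you cannot drop the range factor by arguing that $T^\star\setminus S_j$ sits at the bottom of $T^\star$ globally --- it need not. Take $G_1$ with utilities $\{100,50\}$ and cap $1$, $G_2$ with utilities $\{10,9,8\}$ and cap $3$, $k_j=4$: then $T^\star=\{100,50,10,9\}$, $S_j=\{100,10,9,8\}$, and $T^\star\setminus S_j=\{50\}$ is the second-largest element of $T^\star$.

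The paper avoids $\tau$ entirely by staying inside each group and exploiting \emph{nesting}. With $r\coloneqq|T^\star\cap G_\ell|$ and $\wt r\coloneqq|S_j\cap G_\ell|$, both $T^\star\cap G_\ell$ and $S_j\cap G_\ell$ are top sets of $C_j\cap G_\ell$, hence one contains the other. If $\wt r\ge r$ the ratio of sums is $\ge 1$; if $\wt r<r$, the items of $(T^\star\cap G_\ell)\setminus(S_j\cap G_\ell)$ are exactly the $r-\wt r$ \emph{smallest} in $T^\star\cap G_\ell$, so the ratio of the linear sums is at least $\wt r/r$ by ordering alone --- no range of $W$ enters. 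Hypergeometric concentration gives $r\le \wt r+O(\sqrt{k}\log(1/\delta))$, hence $\wt r/r\ge 1-O(\gamma^{-1}k^{-1/2}\log(1/\delta))$; summing over $\ell$ via linearity of $\sum_i w_i$ and then applying concavity of $g_j$ finishes. Replacing your global exchange in (iii) with this per-group ratio argument closes the gap.
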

        \noindent We present the proof of \cref{lem:pos_result_linear} in \cref{sec:proofof:lem:pos_result_linear}.

        \subsubsection{Proof of \cref{thm:disj} using \cref{lem:high_utility_solution,lem:pos_result_linear}}

        \begin{proof} The proof is divided into two parts.
        First, we use  \cref{asmp:disj} to divide the maximization problem into $m$ parts and then use  \cref{lem:high_utility_solution} to complete the proof.

            \paragraph{(Consequence of \cref{asmp:disj}).}
                Since \cref{asmp:disj} holds, we have that
                \ifconf
                \begin{align*}
                    F(S) = \sum\nolimits_{j=1}^m  F(S\cap C_j).
                \end{align*}
                \else
                    $F(S) = \sum\nolimits_{j=1}^m  F(S\cap C_j).$
                \fi
                Hence,
                \begin{align*}
                    \max_{T\subseteq [n]:\forall j,\ \abs{T\cap C_j}\leq k_j} F(T)
                    &\ \ =\ \  \max_{T\subseteq [n]:\forall j,\ \abs{T\cap C_j}\leq k_j} \sum\nolimits_{j=1}^m  F(T\cap C_j)\\
                    &\ \ =\ \  \max_{T_1\subseteq C_1, \dots, T_m\subseteq C_m: \forall j,\ \abs{T_j}\leq k_j} \sum\nolimits_{j=1}^m F(T_j)\\
                    &\ \ =\ \  \sum\nolimits_{j=1}^m \max_{T_j\subseteq C_j: \abs{T_j}\leq k_j} F(T_j)\\
                    &\ \ =\ \  \sum\nolimits_{j=1}^m \max_{T_j\subseteq C_j: \abs{T_j}\leq k_j} g_j(T_j).
                    \tagnum{Using that under \cref{asmp:disj} for any $h, j\in [m]$, with $h\neq j$ and set $A\subseteq C_j$, $g_h(A)=0$}
                    \customlabel{eq:proof_thmm_24_2}{\theequation}
                \end{align*}
                
            \paragraph{(Consequence of \cref{lem:high_utility_solution}).}
                Let $\evE$ be the event that a subset $S_E$ satisfying \cref{eq:claim_in_lem:high_utility_solution} exists.
                Conditioned on $\evE$, it holds that
                \begin{align*}
                    \max_{S\subseteq [n]:\forall j,\ \abs{S\cap C_j}\leq k_j} F(S)
                    \geq \opt{} \cdot \inparen{1-O\inparen{\sqrt{\frac{m^4}{\tau^2\gamma^5k^{1/2}} \log{\frac{m}{\delta}}}  }}.
                    \yesnum\label{eq:proof_thmm_24_1}
                \end{align*}

                \bigskip 
                \bigskip

            \noindent Combining Equations~\eqref{eq:proof_thmm_24_2} and \eqref{eq:proof_thmm_24_1}, we get that
            conditioned on $\evE$, the following holds
            \begin{align*}
                \sum\nolimits_{j=1}^m \max_{T_j\subseteq C_j: \abs{T_j}\leq k_j} F(T_j)
                &\geq
                \opt{} \inparen{1-O\inparen{\sqrt{\frac{m^4}{\tau^2\gamma^5k^{1/2}} \log{\frac{m}{\delta}}}  }}.
                \yesnum\label{eq:proof_thmm_24_3}
            \end{align*}
            Let $\evF$ be the event that
            \begin{align*}
                \forall j\in [m],\quad
                g_j(S_j)\geq \inparen{1-O\inparen{\sqrt{\frac{1}{\gamma^2 k}} \log{\frac{1}{\delta}}}}\cdot  \max_{T_j\subseteq C_j: \abs{T_j}\leq k_j} g_j(T_j).
                \yesnum\label{eq:proof_thmm_24_4}
            \end{align*}
            Summing \cref{eq:proof_thmm_24_4} over all $j\in [m]$ and chaining it with \cref{eq:proof_thmm_24_3}, we get the following:
            Conditioned on $\evE$ and $\evF$
            it holds that
            \begin{align*}
                F(S) = \sum\nolimits_{j=1}^m  g_j(S_j)
                &\geq
                \opt{}\cdot  \inparen{1-O\inparen{\sqrt{\frac{m^4}{\tau^2\gamma^5k^{1/2}}  }\cdot \log{\frac{m}{\delta}}} }.
                \yesnum\label{eq:lb_bound_final}
            \end{align*}
            \cref{lem:high_utility_solution} implies that $\Pr[\evE]\geq 1-\delta$ and \cref{lem:optimality:dist} implies that $\Pr[\evF]\geq 1-\delta.$
            Hence, $\Pr[\evE, \evF]\geq 1-\Pr[\lnot \evE]-\Pr[\lnot \evF]\geq 1-2\delta$.
            Since, $k\geq k_0$ the result follows by choosing a small enough $k_0$ such that \cref{eq:lb_bound_final} implies that $F(S)\geq (1-\eps)\cdot \opt{}$ and $2\delta\leq \eps.$
        \end{proof}

    \subsubsection{Proof of \cref{lem:pos_result_linear}} %
    \label{sec:proofof:lem:pos_result_linear}

            \begin{algorithm}[t]
                \caption{Algorithm from \cref{lem:pos_result_linear} }\label{alg:greedy_alg_for_pos_result_1}
                \begin{algorithmic}[1]
                    \Require
                    Observed utilities $\hW\in \R_{\geq 0}^{n\times m}$, a number $k$, and protected groups $G_1,G_2,\dots,G_p$
                    \Ensure  A subset $S$ with $\abs{S}\leq k$
                    \State Initialize $S=\emptyset$
                    \For{groups $t\in [p]$}
                        \State Initialize $T\coloneqq G_t$
                        \For{iterations $r\in [\frac{k}{p}]$}
                            \State Let $i$ be the item that maximizes $\hW_{i1}$ among all items in $T$
                            \State Set $T=T\backslash\inbrace{i}$
                            \State Set $S=S\cup\inbrace{i}$
                        \EndFor
                    \EndFor
                    \State \Return $S$
                \end{algorithmic}
            \end{algorithm}

    \begin{proof}
            This proof only considers the $j$-th attribute and items in $C_j$.
            To simplify the notation, for each $i\in C_j$, let $w_i\coloneqq W_{ij}$ and $\hw_i\coloneqq \hW_{ij}$.
            Further, define
            \begin{align}
                H(S) = g\inparen{\sum\nolimits_{i\in S}w_i}
                \quad \text{and}\quad
                \hH(S) = g\inparen{\sum\nolimits_{i\in S}\hw_i}.
                \yesnum\label{eq:lingF_tmp}
            \end{align}
            Let $S^\star\subseteq C_j$ be the subset that maximizes $H(S)$ subject to having size at most $k_j$.
            Let $S_j\subseteq C_j$ be the subset that maximizes $H(S)$ subject to having size at most $k_j$ and satisfying the proportional representation constraints (i.e., the constraints encoded by $\forall t\in [p]$, $U_t=\abs{G_t\cap C_j}\cdot \frac{k}{n}$).
            Note that this $S_j$ is the same as $S_j$ computed in \cref{alg:disj}.
            We will prove that with high probability
            $$\sum\nolimits_{i\in \scons{}} w_{i}\geq (1-\eps)\cdot \sum\nolimits_{i\in S^\star} w_{i}.$$
            \cref{lem:pos_result_linear} follows from the above because $0\leq \eps < 1$ and $x>0$,
            $$g((1-\eps) \cdot x)\geq (1-\eps)\cdot  g(x)+\eps\cdot g(0)=(1-\eps)\cdot  g(x),$$
            as $g$ is concave and $g(0)=0$.
            In the remainder of the proof, we set
            \begin{align}
                H(S) = {\sum\nolimits_{i\in S}w_i}
                \quad \text{and}\quad
                \hH(S) = {\sum\nolimits_{i\in S}\hw_i}.
                \yesnum\label{eq:lingF}
            \end{align}
            Suppose that all values in $\inbrace{w_i:i\in C_j}$ are unique.
            This can be ensured by perturbing the values by an infinitesimal amount.
            It only changes the value of $H(S)$ by an infinitesimal amount.

            \smallskip

            \paragraph{Claim A.} We claim that for each $\ell\in [p]$, with probability at least $1-\delta$, it holds that
            \begin{align*}
                \frac{H(\scons{}\cap G_\ell)}{H(S^\star\cap G_\ell)}\geq 1-\eps.
                \yesnum\label{eq:cases_claim}
            \end{align*}
            \paragraph{Proof of \cref{lem:pos_result_linear} assuming Claim A is true.}
            If this claim is true, then the result follows:
            Using the union bound over all $\ell\in [p]$, we get that with probability at least $1-\delta$, \cref{eq:cases_claim} holds for all $\ell\in [p]$.
            Conditioned on the event that \cref{eq:cases_claim} holds for all $\ell\in [p]$, we have
            \begin{align*}
                \frac{H(\scons{})}{H(S^\star)}
                &= \frac{\sum\nolimits_{\ell\in [p]} H(\scons{}\cap G_\ell)}{\sum\nolimits_{\ell\in [p]} H(S^\star\cap G_\ell)}\\
                &\geq  \frac{(1-\eps)\cdot \sum\nolimits_{\ell\in [p]} H(S^\star\cap G_\ell)}{\sum\nolimits_{\ell\in [p]} H(S^\star\cap G_\ell)}\tag{Using \cref{eq:cases_claim}}\\
                &= (1-\eps).
            \end{align*}

            \paragraph{Proof of Claim A.}
            Fix any $\ell\subseteq [p]$.
            Let $i(h)$ have the $h$-th largest value of $w$ in $\inbrace{w_i: i\in G_\ell\cap C_j}$.
            Because $\phi_\ell$ is increasing, it follows that $i(h)$ also has the $h$-th largest value of $\hw$ in $\inbrace{\hw_i: i\in G_\ell\cap C_j}=\inbrace{\phi_\ell(w_i): i\in G_\ell\cap C_j}$.
            Define
            \begin{align*}
                r\coloneqq \abs{S^\star\cap G_\ell}
                \quad\text{and}\quad
                \wt{r}\coloneqq \abs{S_j \cap G_\ell}.
            \end{align*}
            It holds that
            $$S^\star\cap G_\ell=\inbrace{i(1),\dots,i(r)}.$$
            (Otherwise, we can increase $H(S^\star)$ by swapping an element of $S^\star$ by an element in $\inbrace{i(1),\dots,i(r)}\backslash S^\star$.)
            Further, it also holds that $$\scons{}\cap G_\ell=\inbrace{i(1),\dots,i(\wt{r})}.$$
            (Otherwise, we can increase $\hH(\scons{})$ by swapping an element $i$ of $\scons{}$ by an element $i'$ in $\inbrace{i(1),\dots,i(\wt{r})}\backslash \scons{}$; {note that this does not violate the proportional representation constraint because both $i,i'\in G_\ell$.)}

            \medskip
            \noindent \textit{(Step 1: Lower bound on $\frac{H(\scons{}\cap G_\ell)}{H(S^\star\cap G_\ell)}$)}
            Using the above observations we have the following expression
            \begin{align*}
                \frac{H(\scons{}\cap G_\ell)}{H(S^\star\cap G_\ell)}
                &= \frac{
                    w_{i(1)}+w_{i(2)}+\cdots+w_{i(\wt{r})}%
                    }{
                    w_{i(1)}+w_{i(2)}+\cdots+w_{i(r)}
                    }.
            \end{align*}
            Consequently, if $\wt{r}\geq r$, then $\frac{H(\scons{}\cap G_\ell)}{H(S^\star\cap G_\ell)}\geq 1.$
            Otherwise, we have the following lower bound
            \begin{align*}
                \frac{H(\scons{}\cap G_\ell)}{H(S^\star\cap G_\ell)}
                &= \frac{
                    w_{i(1)}+w_{i(2)}+\cdots+w_{i(\wt{r})}%
                    }{
                    w_{i(1)}+w_{i(2)}+\cdots+w_{i(r)}
                    }\\
                &\geq \frac{
                    w_{i(1)}+w_{i(2)}+\cdots+w_{i(\wt{r})}%
                    }{
                    w_{i(1)}+w_{i(2)}+\cdots+w_{i(\wt{r})}\cdot (r-\wt{r}+1)
                    }
                    \tag{Using that $w_{i(1)}\geq w_{i(2)}\geq \cdots\geq w_{i(r)}$}\\
                &\geq 1- \frac{
                    w_{i(\wt{r})} \cdot (r-\wt{r})%
                    }{
                    w_{i(1)}+w_{i(2)}+\cdots+w_{i(\wt{r})}\cdot (r-\wt{r}+1)
                    }
                    \tag{Using that $w_{i(1)}\geq w_{i(2)}\geq \cdots\geq w_{i(r)}$}\\
                &\geq \frac{\wt{r}}{r}.\tag{Using that $w_{i(\wt{r})}\leq w_{i(i)}$ for all $1\leq i\leq \wt{r}$}
            \end{align*}
            Hence, in either case
            \begin{align*}
                \frac{H(\scons{}\cap G_\ell)}{H(S^\star\cap G_\ell)}\geq \frac{\wt{r}}{r}.
                \yesnum\label{eq:tmptmp_lb}
            \end{align*}

            \smallskip
            \noindent \textit{(Step 2: Lower bound on $\frac{\wt{r}}{r}$).}
            First,
            \begin{align*}
                \wt{r}=k\cdot \frac{\abs{G_\ell\cap C_j}}{n}.
                \yesnum\label{eq:val_r_beta}
            \end{align*}
            (Otherwise, adding $i(\wt{r}+1)$ to $\scons{}$ increases its utility and does not violate the proportional representation constraint).
            Next, because $S^\star\subseteq C_j$ is independent of the protected groups $G_1,G_2,\dots,G_p$ and because $G_\ell$ is constructed by drawing $\abs{G_\ell}$ elements uniformly without replacement from $C_j$, it follows that $r\coloneqq \abs{S^\star\cap G_\ell},$
            is a random variable distributed according to the hyper-geometric distribution with parameters $n=\abs{G_\ell}$, $K=\abs{S^\star}$, and $N=\abs{C_j}$.
            (Recall that for a hyper-geometric random variable $X$, $\Pr[X=r]$ denotes the probability of obtaining $r$ red balls in $n$ draws, without replacement, from an urn with $K$ red balls and $N-K$ blue balls.)
            From standard properties of the hyper-geometric distribution \cite[Theorem 1]{hush2005concentration}, it follows that
            \begin{align}
                \Ex[r] = \abs{G_\ell\cap C_j}\cdot \frac{\abs{S^\star}}{n} = \abs{G_\ell\cap C_j}\cdot \frac{k}{n}
                \ \ \text{and}\ \
                \Pr\insquare{\abs{r- \abs{G_\ell\cap C_j}\cdot \frac{k}{n}}\leq \ln\inparen{\frac{1}{\delta}}\sqrt{k}}\leq \delta.
            \end{align}
            Consequently, with probability at least $1-\delta$
            \begin{align*}
                r\leq \abs{G_\ell\cap C_j}\cdot \frac{k}{n}+\ln\inparen{\frac{1}{\delta}}\sqrt{k}=\wt{r}+\ln\inparen{\frac{1}{\delta}}\sqrt{k}.
                \yesnum\label{eq:tmp_event}
            \end{align*}

            \smallskip

            \noindent\textit{(Step 3: Proof of Claim A).} It follows that, with probability at least $1-\delta$,
            \begin{align*}
                \frac{H(\scons{}\cap G_\ell)}{H(S^\star\cap G_\ell)}
                &\geq \frac{1}{1+\frac{\ln\inparen{\frac{1}{\delta}}\sqrt{k}}{\wt{r}}} \tag{Using \cref{eq:tmptmp_lb,eq:tmp_event}}\\
                &\geq 1 - \frac{\ln\inparen{\frac{1}{\delta}}\sqrt{k}}{\wt{r}}
                \tag{Using $1-x\leq \frac{1}{1+x}$ for all $x\in \R$}\\
                &= 1 - \frac{\ln\inparen{\frac{1}{\delta}}\cdot n}{\sqrt{k}\cdot \abs{G_\ell}}\tag{Using \cref{eq:val_r_beta}}\\
                &\geq 1 - \frac{\ln\inparen{\frac{1}{\delta}}}{\gamma\sqrt{k}}.
            \end{align*}
            
            \vspace*{-0.5cm}
        \end{proof}

\vspace*{-0.5cm}
\section{Limitations and conclusion}\label{sec:lim_conc} 
    This work studies the maximization of submodular functions which have been used to capture the utility of subsets of items in recommendation systems and web search.
    It studies this in the setting where the inputs defining the submodular function have social biases--modeled by an extension of \cite{KleinbergR18}'s bias model--and these biases lead to a reduction in the latent utility of the output subset.
    Our first result shows that maximizing the observed utility subject to fairness constraints is not sufficient to recover any fraction of the optimal latent utility (\cref{thm:main_negative_result}).
    On the positive side, we give an algorithm (\cref{alg:disj}) for submodular maximization that works for a general family of submodular functions capturing submodular functions used in recommendation and web search.
    Under mild assumptions, the algorithm provably outputs a subset with near-optimal latent utility (\cref{thm:disj}).
    Empirically, the subsets output by this algorithm have higher latent utility than baselines even when the assumptions required by the theoretical results do not hold (\cref{fig:syn_data_combined,fig:real_world_4}).

    Our work raises several questions for future work.
    {Our algorithm achieves near-optimal latent utility when the number of items selected $k$ is large compared to the number of attributes $m$.
    Designing algorithms that achieve a high latent utility for a larger range of $k$ is an interesting direction.}
    {Further, our model (like several others \cite{KleinbergR18,celis2020interventions}) has the limitation that it does not model intersectionality, while this can be partially addressed by defining each intersection as a separate group, this may be unfeasible when the sizes of the intersections are too small \cite{kearns2018gerrymandering}.}
    Empirical results on real-world data showed that our algorithm can outperform baselines, even in some cases where data does not follow the theoretical model considered.
    However, a careful assessment of our algorithms' performance on application-specific data, both pre-deployment and post-deployment, would be important to avoid any unintended harm. %
    {Moreover, in certain contexts, affirmative action policies have been shown to have positive long-term effects on the bias in the ``system'' \cite{ceslis2021longterm,heidari2021intergenerational}.
    Our algorithm can be seen as an affirmative action policy with data-dependent constraints, and studying its long-term effects is an interesting direction.}
    Furthermore, submodular maximization is one part of the larger information retrieval or recommendation system; examining the effect of biases in the input on other parts of the system and evaluating the efficacy of our algorithm in conjunction with the broader system are interesting directions.

\paragraph{Acknowledgments} 
    This project is supported in part by NSF Awards  (CCF-2112665 and IIS-2045951), and an AWS MLRA Award.

\clearpage

\bibliographystyle{plain}
\bibliography{bib-v1.bib}

\appendix

\renewcommand{\algorithmicrequire}{\textbf{Input:}}
\renewcommand{\algorithmicensure}{\textbf{Output:}}

\clearpage

\section{Further discussion of submodular maximization and related works}
    \subsection{Additional examples of submodular functions used by prior works}\label{app:additional_examples}
        The diminishing returns property, of submodular functions, arises in many applications, including content recommendation \cite{spotify_submod,amazon_submod}, web search \cite{microsoft_diverse}, text summarization \cite{lin2009graph}, and team selection \cite{hong2004groups,jeppesen2010marginality, KleinbergRaghu18}.
        This property is one of the key reasons for the use of submodular functions in the above applications \cite{krause2014submodular}.
   
        For instance, submodular functions are used when a recommendation system has different objectives and each additional item satisfying the same objective has a diminishing return.
        Concretely, \cite{spotify_submod} explain that ``a fundamental requirement of [Spotify's] music recommender system is its ability to accommodate considerations from the users (e.g. short-term satisfaction objectives), artists (e.g. exposure of emerging artists) and platform (e.g. facilitating discovery and boosting strategic content) when surfacing music content to users''
        \cite{spotify_submod} design the recommendation system for Spotify that uses a submodular objective function.
        For each item $i$ (e.g., song or podcast), let $W_{i1}\geq 0$ denote its relevance to a user (predicted by a learning algorithm) and $W_{i2},\dots, W_{im}\in \zo$ indicate artist and platform-specific metrics (e.g., if $i$ is created by an emerging artist or if the platform wants to promote $i$).
        \cite{spotify_submod} use the following objective to capture the utility of a playlist $S$ for the user
        \begin{align*}
            F(S) \coloneqq \sum\nolimits_{i\in S}W_{i1} + \sqrt{\sum\nolimits_{i\in S} W_{i2}}+\dots + \sqrt{\sum\nolimits_{i\in S} W_{im}}.
            \yesnum\label{eq:example:2}
        \end{align*}

        \noindent Submodular functions also arise in web search, where each query $q$ can have multiple interpretations:
        For instance, the query ``flash'' can refer to the Adobe Flash player, the superhero ``The Flash'', or the village Flash with the highest elevation in Great Britain.
        Irrespective of the intended interpretation, each additional result related to the same interpretation offers a smaller marginal utility to the user  \cite{microsoft_diverse}.
        Suppose a query $q$ (e.g., ``Flash'') belongs to category $j\in [m]$ (e.g., technology, movies, or location) with probability $\Pr[j \mid q]$ and, conditioned on the event that $q$ belongs to category $j$,  a result $i$ satisfies the user with probability $\Pr[i\mid j, q]$ (independent of other items).
        \cite{microsoft_diverse} observe that the set of search results, $S$, that maximizes the following submodular objective has a higher quality than search results of a commercial search engine.
        \begin{align*}
            F(S)\coloneqq \sum\nolimits_{j=1}^m \Pr[j \mid q]\inparen{1-\prod_{i\in S}\inparen{1 - \Pr[i\mid j, q]}}.
            \yesnum\label{eq:example:3}
        \end{align*}
        This is in the family $\evF$.
        To see this set, $W_{ij}=\log{\frac{1}{1-\Pr[i\mid j, q]}}$ and $g_j(x)=-\Pr[j \mid q]\cdot e^{-x}$ for all $i$ and $1\leq j\leq m$, and $g_{j+1}(x)=1$ for all $x$.
        
        Given weights $w_1,\dots,w_{m-1}$ and scores $s_1,\dots,s_n$,
        \cite{amazon_stream_submod} consider the following submodular function $F\in \evF$ defined by (1) $g_j(x)=w_j\cdot \log(1+x)$ for all $1\leq j\leq m-1$ and $g_m(x)=x$, (2) for each $1\leq j\leq m-1$, $W_{ij}=1$ if $i$ has the $j$-th attribute and $W_{ij}=0$ otherwise, and (3) $W_{im}=s_i$.

    \subsection{Further discussion of works on benefits of fairness constraints on utility}\label{sec:related_work_app}
 
        Recent works \cite{KleinbergR18,celis2020interventions, EmelianovGGL20,mehrotra2022intersectional} have demonstrated the benefit of imposing fairness constraints on the output of subset selection on the latent utility of the output when the objective is additive or {\em linear}.
        \cite{KleinbergR18} introduce the mathematical model of bias mentioned in \cref{sec:intro}. 
        They consider two groups, with $G_2$ being the disadvantaged group, and study conditions on $\beta,$ group sizes, $k$, and the distribution of $W$, where requiring the output $S$ to satisfy $\abs{S\cap G_2}\geq 1$ increases the latent utility of the output.
        \cite{celis2020interventions} study a generalization of subset selection, ranking, where the selected individuals also need to be ordered.
        Specializing their work to subset selection:
        They consider two groups $G_1$ and $G_2$ and show that if $U$ encodes proportional representation and entries of $W$ are drawn i.i.d. from the uniform distribution on $[0,1]$, then $F(S_U)\geq (1-o_k(1))\cdot \opt{}$.\footnote{To be precise they required the output to have at least $L_\ell$ items from group $G_\ell$. For two groups, this is equivalent to requiring the output subset to have at most $U_{\ell}=k-L_{\lnot\ell}$ items from $G_\ell$.}
        \cite{EmelianovGGL20} study selection under a different model of bias, where the observed utility has higher than average noise for individuals in one group.
        They give a family of fairness constraints (including proportional representation) which increases the output's latent utility.
        \cite{mehrotra2022intersectional} study subset selection under the same model as \cite{celis2020interventions}, but with multiple and overlapping groups.
        Consider two overlapping groups $G_1,G_2$, these divide the items into four disjoint ``intersections:'' $I_{a}\coloneqq G_1\cap G_2$, $I_{b}\coloneqq G_1\backslash G_2$, $I_{c}\coloneqq G_2\backslash G_1$, and $I_{d}\coloneqq [n]\backslash (G_1\cup G_2)$.
        Similarly, $p$ overlapping groups $G_1,G_2,\dots,G_p$ divide the items into up to $2^p$ intersections.
        \cite{mehrotra2022intersectional} study the model where each item $i$ in a different intersection faces a different amount of bias:
        The observed utility of $i$ is ${\prod_{\ell:G_\ell \ni i} \beta_\ell}$ times smaller than its latent utility.
        \cite{mehrotra2022intersectional} compare the efficacy of the fairness constraints applied on groups $G_1, G_2,\dots, G_p$ to  fairness constraints applied on the disjoint intersections formed by these groups. 
        Most relevant to this work, they extend the result of \cite{celis2020interventions} for proportional representation constraints to multiple disjoint groups $G_1, G_2,\dots, G_p$.
        We can take a similar approach to extend our results to multiple overlapping groups:
        \begin{remark}[\textbf{Overlapping groups}]\label{rem:overlapping_to_disjoint}
            Instead of $p$ overlapping groups $G_1,G_2,\dots,G_p$, one can consider the groups as the intersections formed by $G_1,G_2,\dots,G_p$
            For example, with $p=2$, one can consider $I_a$, $I_b$, $I_c$, and $I_d$ as the groups with
            bias functions $\phi_a=\phi_1\circ\phi_2$, $\phi_b=\phi_1$,$\phi_c=\phi_2,$ and $\phi_d(x)=x$ respectively.
            Since there are at most $\min\inparen{n,2^p}$ non-empty intersections, this does not increase the running time of the algorithm proposed in the paper (\cref{thm:disj}).
            For simplicity, in this paper, we assume that the groups are disjoint.
        \end{remark}
        \noindent Unlike all of the above works, we study the efficacy of fairness constraints when the objective of subset selection is submodular and not necessarily linear.

        \paragraph{The need for stochasticity in groups.}
        In our model, we assume that the latent utilities are deterministically chosen and the protected groups are stochastic.
        This generalizes the model of \cite{KleinbergR18,celis2020interventions,mehrotra2022intersectional} which is equivalent to the model that draws latent utilities i.i.d. from some distribution and also constructs protected groups stochastically.
        A further generalization could consider the case where neither the latent utilities nor protected groups are stochastic.
        However, in this model, it is information-theoretically impossible to output a subset whose latent utility is to guaranteed to be at least a positive fraction  of $\opt$.
        Formally, we can show the following result if both latent utilities and protected groups can be arbitrary then no algorithm can recover any constant factor of approximation of the optimal utility.
        Let $S_A$ be the subset output by algorithm $A$ when given $\smash{\hW}$ and $G_1,G_2,\dots,G_p$ as input.
        For any $\eps>0$ and any algorithm $A$, there are two disjoint protected groups $G_1$ and $G_2$ and bias functions $\phi_1,\phi_2:\R\to\R$ such that
        $F(S_A)\leq \eps$.

    \subsection{Standard algorithms for submodular maximization}\label{sec:standard_algs}
    In this section, we present a standard greedy algorithm by \cite{nemhauser1978analysis} for maximizing monotone submodular functions with, e.g., cardinality constraints.
    
    \begin{algorithm}[H] %
            \caption{The standard greedy algorithm (\cite{nemhauser1978analysis}) }\label{alg:model_greedy}
            \begin{algorithmic}[1]
                \Require An evaluation oracle for $F$, a number $k$, and a set of feasible sets $I\subseteq 2^{[n]}$
                \Ensure  A subset $S\in I$ with $\abs{S}\leq k$
                \State Initialize $S=\emptyset$
                \While{$\abs{S} < k$}
                    \State Let $i^\star$ be the item in $[n]$ that maximizes $F(S\cup \inbrace{i^\star})-F(S)$ subject to $S\cup \inbrace{i^\star}\in I$
                \State Set $S=S\cup \inbrace{i^\star}$
                \EndWhile
                \State \Return $S$
            \end{algorithmic}
        \end{algorithm}
        
        \begin{algorithm}[H] %
        \caption{A variant of the standard greedy algorithm that satisfies given upper bound constraints}\label{alg:greedy_with_ub}
        \begin{algorithmic}[1]
            \Require An evaluation oracle for $F$, a numbers $k, U_1,\dots,U_p$, and groups $G_1,G_2,\dots,G_p$
            \Ensure  A subset $S$ with $\abs{S}\leq k$ and $\abs{S\cap G_\ell}\leq U_\ell$
            \State Initialize $S=\emptyset$ 
            \While{$\abs{S} < k$}
                \State Let $i^\star$ be the item in $[n]$ that maximizes $F(S\cup \sinbrace{i^\star})-F(S)$ subject to $\abs{(S\cup \sinbrace{i^\star})\cap G_\ell}\leq U_\ell$ (for each $\ell$)
                \State Set $S=S\cup \inbrace{i^\star}$
            \EndWhile
            \State \Return $S$
        \end{algorithmic}
    \end{algorithm}

\section{Implementation details and additional plots}\label{sec:implementation_details}
    
    \subsection{Implementation details}
        \smallskip
        \paragraph{\bf Code.}  The code for our simulations is available at 
        
        \url{https://github.com/AnayMehrotra/Submodular-Maximization-in-the-Presence-of-Biases}
         
        \smallskip
        \paragraph{\bf Synthetic dataset 1.}
        \cref{fig:syn_iid_non_disj} presents results with this data.
        This data has $m\coloneqq 3$ attributes and $n\coloneqq 250$ songs, and uses the objective $F(S)\coloneqq \sum\nolimits_{i\in S}W_{i1} + \lambda \sum\nolimits_{j=2}^3 \sqrt{\sum\nolimits_{i\in S}W_{ij}}$ (with $\lambda=\frac{1}{20}$).
        \begin{itemize}[leftmargin=10pt]
            \item  First, we select a subset $S_E$ of songs with size $\abs{S_E}=0.8n$ and label all songs in $S_E$ to be from an emerging artist and all other songs as songs from non-emerging artists.
            This implies that $W_{i2}=1$ if $i\in S_E$ and $W_{i2}=0$ otherwise.
            \item Next, for each song $i\in [n]$, with probability $p_{\rm NH}\coloneqq 0.9$, we label it as \texttt{``not heard by the current user''} and set $W_{i3}=1$
            and, otherwise, we label it as \texttt{``heard by the current user''} and set $W_{i3}=0$.
            \item Finally, for each song $i\not\in S_E$, we independently draw a value $X$ from the power-law distribution with exponent $\delta$ and set $W_{i1}=X\cdot 1000$.\footnotemark{}
            \addtocounter{footnote}{-1}
            For each $i\in S_E$, we independently draw a value $X$ from the power-law distribution with exponent $\delta$ {\em conditioned on $X\leq 2$} and set $W_{i1}=X\cdot 1000$.
            The conditioning encodes the fact that emerging artists do not have any ``popular'' songs yet.
        \end{itemize}
        We fix $\lambda\coloneqq \frac{1}{20}$, $\abs{S_E}=0.8n$, and $p_{\rm NH}=0.9$ and vary $\beta\in [0,1]$, $\delta\in \inbrace{1, 1.5, 2, 2.5, 3}$, and  $\frac{\abs{G_1}}{n}\in \inbrace{0.25, 0.5, 0.75}$.

        \begin{remark}
            We also repeated the simulation with {$\abs{S_E}\in \inbrace{0.4n,0.9n}$, $p_{\rm NH}\in \inbrace{0.4,1.0}$, and $\lambda\in \inbrace{\frac{1}{10},\frac{1}{5}}$}, and observed similar results as \cref{fig:syn_data_combined,fig:syn_iid_disj_full,fig:syn_iid_non_disj_full}.
        \end{remark}

        \smallskip
        \paragraph{Synthetic dataset 2.}
        The second synthetic dataset corresponds \cref{fig:syn_iid_disj}, has $m\coloneqq 3$ attributes and $n\coloneqq 250$ songs, and uses the objective function $F(S)\coloneqq \sum\nolimits_{j=1}^3 \log\inparen{1+\sum\nolimits_{i\in S}W_{ij}}.$

        \begin{itemize}[leftmargin=10pt]
            \item We generate sets $C_1,C_2,C_3$ uniformly at random:
            For each item $i$, with probability $\frac{1}{3}$ we assign it to $C_1$, otherwise with probability $\frac{1}{3}$ we assign it to $C_2$, and otherwise we assign it to $C_3$.
            \item For each $h\in [3]$ and $i\in C_h$, we independently draw a value $X$ from the power-law distribution with exponent $\delta$ and set $W_{i1}=X\cdot 1000$.\footnote{This is natural as power-law distributions have been observed to arise in the performance of musicians \cite{power_law_music} and in the performance of other creative professionals \cite{clauset2009power}.}
        \end{itemize}

        \noindent Like the first synthetic dataset, we generate groups $G_1$ and $G_2$ by assigning $\abs{G_1}$ items chosen uniformly at random without replacement to $G_1$ and the remaining items to $G_2$
        Given $\beta\in [0,1]$, we generate observed utilities $\smash{\hW}$ as in \cref{def:bias_model} with $\phi_1(x)=x$ and $\phi_2(x)=\beta x$.
        The simulation on this data fixed varies $\beta\in [0,1]$, $\delta\in \inbrace{1, 2, 3}$, and  $\frac{\abs{G_1}}{n}\in \inbrace{0.25, 0.5, 0.75}$.

    \subsection{Additional plots for simulations from Section~\ref{sec:empirical_results}}\label{sec:additional_simulations}

    In this section, we give additional plots for the simulations in \cref{sec:empirical_results} on synthetic data (\cref{sec:additional_results_synthetic_data1,sec:additional_results_synthetic_data2}) and on MovieLens 20M (\cref{sec:additional_results_real_world_data}).

    \newpage
    
    \subsection{Additional plots for the simulation with synthetic data 1}\label{sec:additional_results_synthetic_data1}

    \renewcommand{\folder}{./figures/synthetic-iid-non-disjoint}
    \begin{figure}[h!]
        {\includegraphics[width=0.4\linewidth, trim={0cm 0.1cm 0cm 0.1cm},clip]{\folder/legend.png}}
        \centering
        \par
            \subfigure[$\abs{G_1}=n/4$ and $\delta=1$]{
                \begin{tikzpicture}
              \node (image) at (0,-0.17) {\includegraphics[width=\ifconf0.25\linewidth\else0.25\linewidth\fi, trim={0cm 0cm 1.5cm 2cm},clip]{\folder/f75-d10.pdf}};
              \node[rotate=0] at (2.35 - 1,-1.7) {\textit{\small(less bias)}};
              \node[rotate=0] at (-2.14375 + 1,-1.7) {\textit{\small(more bias)}};
            \end{tikzpicture}
            }
            \subfigure[$\abs{G_1}=n/2$ and $\delta=1$]{
                \begin{tikzpicture}
              \node (image) at (0,-0.17) {\includegraphics[width=\ifconf0.25\linewidth\else0.25\linewidth\fi, trim={0cm 0cm 1.5cm 2cm},clip]{\folder/f50-d10.pdf}};
              \node[rotate=0] at (2.35 - 1,-1.7) {\textit{\small(less bias)}};
              \node[rotate=0] at (-2.14375 + 1,-1.7) {\textit{\small(more bias)}};
            \end{tikzpicture}
            }
            \subfigure[$\abs{G_1}=(3n)/4$ and $\delta=1$]{
                \begin{tikzpicture}
              \node (image) at (0,-0.17) {\includegraphics[width=\ifconf0.25\linewidth\else0.25\linewidth\fi, trim={0cm 0cm 1.5cm 2cm},clip]{\folder/f25-d10.pdf}};
              \node[rotate=0] at (2.35 - 1,-1.7) {\textit{\small(less bias)}};
              \node[rotate=0] at (-2.14375 + 1,-1.7) {\textit{\small(more bias)}};
            \end{tikzpicture}
            }
        \ifconf\vspace{-3mm}\else \vspace{-4mm}\fi
        \par
            \subfigure[$\abs{G_1}=n/4$ and $\delta=2$]{
                \begin{tikzpicture}
              \node (image) at (0,-0.17) {\includegraphics[width=\ifconf0.25\linewidth\else0.25\linewidth\fi, trim={0cm 0cm 1.5cm 2cm},clip]{\folder/f75-d20.pdf}};
              \node[rotate=0] at (2.35 - 1,-1.7) {\textit{\small(less bias)}};
              \node[rotate=0] at (-2.14375 + 1,-1.7) {\textit{\small(more bias)}};
            \end{tikzpicture}
            }
            \subfigure[$\abs{G_1}=n/2$ and $\delta=2$]{
                \begin{tikzpicture}
              \node (image) at (0,-0.17) {\includegraphics[width=\ifconf0.25\linewidth\else0.25\linewidth\fi, trim={0cm 0cm 1.5cm 2cm},clip]{\folder/f50-d20.pdf}};
              \node[rotate=0] at (2.35 - 1,-1.7) {\textit{\small(less bias)}};
              \node[rotate=0] at (-2.14375 + 1,-1.7) {\textit{\small(more bias)}};
            \end{tikzpicture}
            }
            \subfigure[$\abs{G_1}=(3n)/4$ and $\delta=2$]{
                \begin{tikzpicture}
              \node (image) at (0,-0.17) {\includegraphics[width=\ifconf0.25\linewidth\else0.25\linewidth\fi, trim={0cm 0cm 1.5cm 2cm},clip]{\folder/f25-d20.pdf}};
              \node[rotate=0] at (2.35 - 1,-1.7) {\textit{\small(less bias)}};
              \node[rotate=0] at (-2.14375 + 1,-1.7) {\textit{\small(more bias)}};
            \end{tikzpicture}
            }
        \ifconf\vspace{-3mm}\else \vspace{-4mm}\fi
        \par
            \subfigure[$\abs{G_1}=n/4$ and $\delta=3$]{
                \begin{tikzpicture}
              \node (image) at (0,-0.17) {\includegraphics[width=\ifconf0.25\linewidth\else0.25\linewidth\fi, trim={0cm 0cm 1.5cm 2cm},clip]{\folder/f75-d30.pdf}};
              \node[rotate=0] at (2.35 - 1,-1.7) {\textit{\small(less bias)}};
              \node[rotate=0] at (-2.14375 + 1,-1.7) {\textit{\small(more bias)}};
            \end{tikzpicture}
            }
            \subfigure[$\abs{G_1}=n/2$ and $\delta=3$]{
                \begin{tikzpicture}
              \node (image) at (0,-0.17) {\includegraphics[width=\ifconf0.25\linewidth\else0.25\linewidth\fi, trim={0cm 0cm 1.5cm 2cm},clip]{\folder/f50-d30.pdf}};
              \node[rotate=0] at (2.35 - 1,-1.7) {\textit{\small(less bias)}};
              \node[rotate=0] at (-2.14375 + 1,-1.7) {\textit{\small(more bias)}};
            \end{tikzpicture}
            }
            \subfigure[$\abs{G_1}=(3n)/4$ and $\delta=3$]{
                \begin{tikzpicture}
              \node (image) at (0,-0.17) {\includegraphics[width=\ifconf0.25\linewidth\else0.25\linewidth\fi, trim={0cm 0cm 1.5cm 2cm},clip]{\folder/f25-d30.pdf}};
              \node[rotate=0] at (2.35 - 1,-1.7) {\textit{\small(less bias)}};
              \node[rotate=0] at (-2.14375 + 1,-1.7) {\textit{\small(more bias)}};
            \end{tikzpicture}
            }
        \caption{
        {\em Simulation on synthetic data 1:}
        We vary the bias parameter $\beta\in [0,1]$, run \cref{alg:disj} and \uncons{} on synthetic data 1, and report their normalized latent  utility (error bars denote standard error of the mean).
        The $x$-axis shows $\beta\in [0,1]$ and the $y$-axis plots the normalized latent utility.
        The results show that \uncons{} can lose a significant fraction of the optimal latent utility in the presence of bias (up to 15\% for $\beta<0.1$, $\frac{\abs{G_1}}{n}= \frac{1}{4}$, and $\delta=3$).
        While \cref{alg:disj}  loses less than 5\% of the optimal latent utility across all choices of parameters.
        }
        \label{fig:syn_iid_non_disj_full}
    \end{figure} 
    
    \newpage

    \subsection{Additional plots for the simulation with synthetic data 2}\label{sec:additional_results_synthetic_data2} 
    
    \renewcommand{\folder}{./figures/synthetic-iid-disjoint}
    \begin{figure}[h!]
        {\includegraphics[width=0.8\linewidth, trim={0cm 0.1cm 0cm 0.1cm},clip]{\folder/legend.png}}
        \centering
        \par
            \subfigure[$\abs{G_1}=n/4$ and $\delta=1$]{
                \begin{tikzpicture}
              \node (image) at (0,-0.17) {\includegraphics[width=\ifconf0.25\linewidth\else0.25\linewidth\fi, trim={0cm 0cm 1.5cm 2cm},clip]{\folder/f75-d10.pdf}};
              \node[rotate=0] at (2.35 - 1,-1.7) {\textit{\small(less bias)}};
              \node[rotate=0] at (-2.14375 + 1,-1.7) {\textit{\small(more bias)}};
            \end{tikzpicture}
            }
            \subfigure[$\abs{G_1}=n/2$ and $\delta=1$]{
                \begin{tikzpicture}
              \node (image) at (0,-0.17) {\includegraphics[width=\ifconf0.25\linewidth\else0.25\linewidth\fi, trim={0cm 0cm 1.5cm 2cm},clip]{\folder/f50-d10.pdf}};
              \node[rotate=0] at (2.35 - 1,-1.7) {\textit{\small(less bias)}};
              \node[rotate=0] at (-2.14375 + 1,-1.7) {\textit{\small(more bias)}};
            \end{tikzpicture}
            }
            \subfigure[$\abs{G_1}=(3n)/4$ and $\delta=1$]{
                \begin{tikzpicture}
              \node (image) at (0,-0.17) {\includegraphics[width=\ifconf0.25\linewidth\else0.25\linewidth\fi, trim={0cm 0cm 1.5cm 2cm},clip]{\folder/f25-d10.pdf}};
              \node[rotate=0] at (2.35 - 1,-1.7) {\textit{\small(less bias)}};
              \node[rotate=0] at (-2.14375 + 1,-1.7) {\textit{\small(more bias)}};
            \end{tikzpicture}
            }
        \ifconf\vspace{-4mm}\else \vspace{-4mm}\fi
        \par
            \subfigure[$\abs{G_1}=n/4$ and $\delta=2$]{
                \begin{tikzpicture}
              \node (image) at (0,-0.17) {\includegraphics[width=\ifconf0.25\linewidth\else0.25\linewidth\fi, trim={0cm 0cm 1.5cm 2cm},clip]{\folder/f75-d20.pdf}};
              \node[rotate=0] at (2.35 - 1,-1.7) {\textit{\small(less bias)}};
              \node[rotate=0] at (-2.14375 + 1,-1.7) {\textit{\small(more bias)}};
            \end{tikzpicture}
            }
            \subfigure[$\abs{G_1}=n/2$ and $\delta=2$]{
                \begin{tikzpicture}
              \node (image) at (0,-0.17) {\includegraphics[width=\ifconf0.25\linewidth\else0.25\linewidth\fi, trim={0cm 0cm 1.5cm 2cm},clip]{\folder/f50-d20.pdf}};
              \node[rotate=0] at (2.35 - 1,-1.7) {\textit{\small(less bias)}};
              \node[rotate=0] at (-2.14375 + 1,-1.7) {\textit{\small(more bias)}};
            \end{tikzpicture}
            }
            \subfigure[$\abs{G_1}=(3n)/4$ and $\delta=2$]{
                \begin{tikzpicture}
              \node (image) at (0,-0.17) {\includegraphics[width=\ifconf0.25\linewidth\else0.25\linewidth\fi, trim={0cm 0cm 1.5cm 2cm},clip]{\folder/f25-d20.pdf}};
              \node[rotate=0] at (2.35 - 1,-1.7) {\textit{\small(less bias)}};
              \node[rotate=0] at (-2.14375 + 1,-1.7) {\textit{\small(more bias)}};
            \end{tikzpicture}
            }
        \ifconf\vspace{-4mm}\else \vspace{-4mm}\fi
        \par
            \subfigure[$\abs{G_1}=n/4$ and $\delta=3$]{
                \begin{tikzpicture}
              \node (image) at (0,-0.17) {\includegraphics[width=\ifconf0.25\linewidth\else0.25\linewidth\fi, trim={0cm 0cm 1.5cm 2cm},clip]{\folder/f75-d30.pdf}};
              \node[rotate=0] at (2.35 - 1,-1.7) {\textit{\small(less bias)}};
              \node[rotate=0] at (-2.14375 + 1,-1.7) {\textit{\small(more bias)}};
            \end{tikzpicture}
            }
            \subfigure[$\abs{G_1}=n/2$ and $\delta=3$]{
                \begin{tikzpicture}
              \node (image) at (0,-0.17) {\includegraphics[width=\ifconf0.25\linewidth\else0.25\linewidth\fi, trim={0cm 0cm 1.5cm 2cm},clip]{\folder/f50-d30.pdf}};
              \node[rotate=0] at (2.35 - 1,-1.7) {\textit{\small(less bias)}};
              \node[rotate=0] at (-2.14375 + 1,-1.7) {\textit{\small(more bias)}};
            \end{tikzpicture}
            }
            \subfigure[$\abs{G_1}=(3n)/4$ and $\delta=3$]{
                \begin{tikzpicture}
              \node (image) at (0,-0.17) {\includegraphics[width=\ifconf0.25\linewidth\else0.25\linewidth\fi, trim={0cm 0cm 1.5cm 2cm},clip]{\folder/f25-d30.pdf}};
              \node[rotate=0] at (2.35 - 1,-1.7) {\textit{\small(less bias)}};
              \node[rotate=0] at (-2.14375 + 1,-1.7) {\textit{\small(more bias)}};
            \end{tikzpicture}
            }
        \caption{
        {\em Simulation on synthetic data 2:}
        {We vary the bias parameter $\beta\in [0,1]$, run \cref{alg:disj} and \uncons{} on synthetic data 2 and report their normalized latent  utility (error bars denote standard error of the mean).}
        The $x$-axis shows $\beta\in [0,1]$ and the $y$-axis plots the normalized latent utility.
        The results show that \uncons{} can lose a significant fraction of the optimal latent utility in the presence of bias (up to 25\% for $\beta<0.1$, $\frac{\abs{G_1}}{n}= \frac{1}{4}$, and $\delta\leq \frac{5}{2}$).
        {While \cref{alg:disj} loses less than 1\% of the optimal latent utility across all choices of parameters.}
        }
        \label{fig:syn_iid_disj_full}
    \end{figure} 

    \newpage

    \subsection{Additional plots for simulation with MovieLens 20M data}\label{sec:additional_results_real_world_data}

    \midsepremove{}
    \begin{table}[h!]
        \centering
        \small \vspace{-2mm} 
        \begin{tabular}{l|c|c|c}
             {\bf Genre}          &   {\bf Male led}  &  {\bf Non-Male led} &   {\bf Ratio} $\inparen{\frac{\text{Non-Male led}}{\text{Male led}}}$\\[2mm]
            \midrule
            Action        &   0.0604     &   0.0213   &  0.352\\
            Adventure       &   0.0334   &   0.0146   &  0.437\\
            Animation       &   0.0187   &   0.0138   &  0.741\\
            Children      &   0.0168     &   0.0164   &  0.9779\\
            Comedy        &   0.0929     &   0.0646   &  0.6951\\
            Crime         &   0.0311     &   0.0135   &  0.4351\\
            Documentary     &   0.0157   &   0.0141   &  0.903\\
            Drama         &   0.1035     &   0.1238   &  1.1961\\
            \white{.}\\
        \end{tabular}
        \par 
        \begin{tabular}{l|c|c|c}
             {\bf Genre}          &   {\bf Male led}  &  {\bf Non-Male led} &   {\bf Ratio} $\inparen{\frac{\text{Non-Male led}}{\text{Male led}}}$\\[2mm]
            \midrule
            Fantasy       &   0.0165     &   0.0145   &  0.8798\\
            Horror        &   0.0293     &   0.0489   &  1.6675\\
            Musical       &   0.0105     &   0.0162   &  1.5458\\
            Mystery       &   0.0128     &   0.0123   &  0.9612\\
            Romance       &   0.0306     &   0.0689   &  2.2492\\
            Sci-fi        &   0.0287     &   0.0169   &  0.5878\\
            Thriller      &   0.0378     &   0.0316   &  0.8368\\
            War           &   0.014    &     0.0061   &  0.4371\\
            Western       &   0.0093     &   0.0027   &  0.2923
        \end{tabular}
        \vspace{2mm}
        \caption{
                {\em Average relevance scores for movies which are led by Male and Non-Male actors respectively, across different genres.}
                Let $S_{\rm M}$ be the set of movies led by male actors and $S_{\rm NM}$ be the set of movies led by non-male actors (see \cref{sec:empirical_results} for details about computing $S_{\rm M}$ and $S_{\rm NM}$).
                The MovieLens 20M data specifies sets the $S_g$ of movies in genre $g$ and for each movie $i$ and genre $g$, it specifies a predicted relevance score $r_{ig}\in [0,1]$ indicating ``how strongly [movie $i$] exhibits particular properties represented by [genre $g$].''
                For each genre $g$, we report the average relevance scores of movies in $S_{\rm M}\cap S_g$ and $S_{\rm NM}\cap S_g$
                We observe that in genres that are stereotypically associated with men (e.g., ``Action'' or ``War'') movies led by male actors have disproportionately higher relevance-scores on compared to movies led by non-male actors (differing by up to 300\%). 
        }
        \label{table:rel_score_movie}
        \vspace{-15mm}
    \end{table}
    \midsepadd{}

    \newpage
    
    \begin{table}[h!]
        \centering
        \small
        \ifconf\else\vspace{-2mm}\fi
        \begin{tabular}{l|c|c|c}
             {\bf Genre}          &   {\bf Male led}  &  {\bf Non-Male led} &   {\bf Ratio} $\inparen{\frac{\text{Non-Male led}}{\text{Male led}}}$\\
            \midrule
            Action      &  3.11 (0.53)  &   3.04 (0.54)  &   0.98\\
            Adventure   &  3.22 (0.51)  &   3.07 (0.60)  &   0.95\\
            Animation   &  3.32 (0.45)  &   3.43 (0.49)  &   1.03\\
            Children    &  3.03 (0.54)  &   3.18 (0.53)  &   1.05\\
            Comedy      &  3.16 (0.53)  &   3.15 (0.48)  &   1.00\\
            Crime       &  3.36 (0.45)  &   3.24 (0.50)  &   0.96\\
            Documentary &  3.61 (0.41)  &   3.59 (0.34)  &    1.00\\
            Drama       &  3.44 (0.40)  &   3.40 (0.40)  &   0.99\\
            Fantasy     &  3.22 (0.52)  &   3.24 (0.48)  &   1.01\\
            Horror      &  2.88 (0.57)  &   2.88 (0.53)  &   1.00\\
            Musical     &  3.36 (0.43)  &   3.30 (0.46)  &   0.98\\
            Mystery     &  3.41 (0.44)  &   3.21 (0.50)  &   0.94\\
            Romance     &  3.37 (0.43)  &   3.31 (0.45)  &   0.98\\
            Sci-fi      &  3.11 (0.57)  &   3.07 (0.57)  &   0.99\\
            Thriller    &  3.25 (0.47)  &   3.12 (0.49)  &   0.96\\
            War         &  3.51 (0.45)  &   3.61 (0.31)  &   1.03\\
            Western     &  3.38 (0.41)  &   3.36 (0.34)  &   0.99\\
        \end{tabular}
        \caption{
                {\em Average user-ratings for movies which are led by Male and Non-Male actors respectively, across different genres.}
                Let $S_{\rm M}$ be the set of movies led by male actors and $S_{\rm NM}$ be the set of movies led by non-male actors (see \cref{sec:empirical_results} for details about computing $S_{\rm M}$ and $S_{\rm NM}$).
                The MovieLens 20M data specifies sets the $S_g$ of movies in genre $g$ and for each movie $i$, it specifies the average user rating ${\rm rat}_i\in [0,1]$ for movie $i$.
                For each genre $g$, we report the average user rating of movies in $S_{\rm M}\cap S_g$ and $S_{\rm NM}\cap S_g$
                We observe that in genres that are stereotypically associated with men (e.g., ``Action'' or ``War'') movies led by male actors have disproportionately higher relevance-scores on compared to movies led by non-male actors (differing by up to 300\%).
        }
        \ifconf\else\vspace{-8mm}\fi
        \label{table:user_rating_per_genre}
    \end{table}
    \begin{table}[h!]
        \centering
        \small
        \begin{tabular}{l|c|c|c}
             {\bf Genre}          &   {\bf Male led}  &  {\bf Non-Male led} &   {\bf Ratio} $\inparen{\frac{\text{Non-Male led}}{\text{Male led}}}$\\
            \midrule
            Action        &   0.061     &  0.0274       &   0.4486\\
            Adventure     &   0.0331        & 0.0168        &   0.5079\\
            Animation     &   0.0185        & 0.0161        &   0.8704\\
            Children      &   0.0169        & 0.0171        &   1.0114\\
            Comedy        &   0.0937        & 0.0719        &   0.7679\\
            Crime         &   0.0312        & 0.0148        &   0.475\\
            Documentary     &   0.0157      &  0.0126       &    0.8029\\
            Drama         &   0.1042        & 0.1181        &   1.1333\\
            Fantasy       & 0.0162      & 0.0154        &   0.945\\
            Horror        & 0.0287      & 0.0492        &   1.7129\\
            Musical       & 0.0107      & 0.0148        &   1.3819\\
            Mystery       & 0.0126      & 0.0118        &   0.9346\\
            Romance       & 0.0306      & 0.0653        &   2.1365\\
            Sci-fi        & 0.0285      & 0.0179        &   0.6298\\
            Thriller      & 0.0372      & 0.0334        &   0.8988\\
            War           & 0.0141      & 0.0063        &   0.4467\\
            Western       & 0.0091      & 0.0027        &   0.3024\\
        \end{tabular}
        \vspace{4mm}
        \caption{
            {\em Average user-ratings for movies which are led by Male and Non-Male actors respectively, across different genres.}
            For each movie $i$, we predict the (probable) gender of its lead actor using the Genderize API (\url{gender-api.com}).
            For the main simulation, we remove all movies for which this prediction has a confidence of less than 0.9.
            Here, we consider the set of all movies where Genderize outputs a prediction that is not NA, and for which we have both user ratings and relevance scores.
            This results in 7382 movies $S_{\rm M}'$ led by male actors and 2572 movies $S_{\rm NM}'$ led by non-male actors.
            The MovieLens 20M data specifies sets the $S_g$ of movies in genre $g$ and for each movie $i$, it specifies the average user rating ${\rm rat}_i\in [0,1]$ for movie $i$.
            For each genre $g$, we report the average user rating of movies in $S_{\rm M}\cap S_g$ and $S_{\rm NM}\cap S_g$
            We observe that in genres that are stereotypically associated with men (e.g., ``Action'' or ``War'') movies led by male actors have disproportionately higher relevance-scores on  compared to movies led by non-male actors (differing by up to 300\%).
        }
        \vspace{-5mm}
        \label{table:rel_score_movie_all}
    \end{table}

    \newpage \white{.} \newpage

    \subsubsection{Plots with one genre ($\abs{T}=1$)} \white{...A}

    \renewcommand{\folder}{./figures/real-world-data/1}
    \begin{figure}[h!]
        \centering
        \subfigure[\white{.} $T=\inbrace{\texttt{Action}}$]{
            \begin{tikzpicture}
          \node (image) at (0,-0.17) {\includegraphics[width=\ifconf0.25\linewidth\else0.29\linewidth\fi, trim={0cm 0cm 0cm 0cm},clip]{\folder/File4.pdf}};
          \node[rotate=0,fill=white] at (0.15,-1.75){$k$};
        \end{tikzpicture}
        }
        \subfigure[\white{.} $T=\inbrace{\texttt{Adventure}}$]{
            \begin{tikzpicture}
          \node (image) at (0,-0.17) {\includegraphics[width=\ifconf0.25\linewidth\else0.29\linewidth\fi, trim={0cm 0cm 0cm 0cm},clip]{\folder/File3.pdf}};
          \node[rotate=0,fill=white] at (0.15,-1.75){$k$};
        \end{tikzpicture}
        }
        \par%
        \subfigure[\white{.} $T=\inbrace{\texttt{Crime}}$]{
            \begin{tikzpicture}
          \node (image) at (0,-0.17) {\includegraphics[width=\ifconf0.25\linewidth\else0.29\linewidth\fi, trim={0cm 0cm 0cm 0cm},clip]{\folder/File5.pdf}};
          \node[rotate=0,fill=white] at (0.15,-1.75){$k$};
        \end{tikzpicture}
        }
        \subfigure[\white{.} $T=\inbrace{\texttt{War}}$]{
            \begin{tikzpicture}
          \node (image) at (0,-0.17) {\includegraphics[width=\ifconf0.25\linewidth\else0.29\linewidth\fi, trim={0cm 0cm 0cm 0cm},clip]{\folder/File2.pdf}};
          \node[rotate=0,fill=white] at (0.15,-1.75){$k$};
        \end{tikzpicture}
        }
        \subfigure[\white{.} $T=\inbrace{\texttt{Western}}$]{
            \begin{tikzpicture}
          \node (image) at (0,-0.17) {\includegraphics[width=\ifconf0.25\linewidth\else0.29\linewidth\fi, trim={0cm 0cm 0cm 0cm},clip]{\folder/File1.pdf}};
          \node[rotate=0,fill=white] at (0.15,-1.75){$k$};
        \end{tikzpicture}
        }
        \caption{
        {\em Simulation with MovieLens data with movie recommendations from a single men-stereotypical genre:}
        We observe that the relevance scores in the MovieLens data are disproportionately higher (by up to 3 times) for movies led by male actors compared to movies led by non-male actors in genres stereotypically associated with men.
        In contrast, user ratings for these sets of movies are within 6\% of each other in all genres.
        We chose genres where the ratio of average relevance scores of men-led movies is at least twice that of non-men-led movies, they are:
        $B=\{$\texttt{action}, \texttt{adventure}, \texttt{crime}, \texttt{western}, and \texttt{war}$\}$.
        We use relevance scores to recommend $k\in \inbrace{50, 100, 150, 200}$ movies from different subsets $T$ of $B$.
        This figure presents the results for all subsets $T\subseteq B$ of size 1.
        We observe that in 4 out of 5 subfigures \cref{alg:disj} has a higher normalized latent utility than \uncons{} for all $k$ (by up to 5\%). 
        {\em Note that the range of $y$-axis varies across sugfigures and is either $[3.25, 3.65]$ or $[3.5,3.8]$.}
        } 
        \label{fig:movie_lens_pairs_action}
    \end{figure}

    \newpage
    \subsubsection{Plots with two genres ($\abs{T}=2$)} 
    \renewcommand{\folder}{./figures/real-world-data/2}
    \begin{figure}[b!]
        \centering 
        \vspace{-8mm}
        \subfigure[\white{.} $T=\inbrace{\texttt{Action}, \texttt{Adventure}}$]{
            \begin{tikzpicture}
          \node (image) at (0,-0.17) {\includegraphics[width=\ifconf0.25\linewidth\else0.29\linewidth\fi, trim={0.2cm 0cm 1.5cm 2cm},clip]{\folder/File6.pdf}};
        \end{tikzpicture}
        }
        \subfigure[\white{.} $T=\inbrace{\texttt{Action}, \texttt{Crime}}$]{
            \begin{tikzpicture}
          \node (image) at (0,-0.17) {\includegraphics[width=\ifconf0.25\linewidth\else0.29\linewidth\fi, trim={0.2cm 0cm 1.5cm 2cm},clip]{\folder/File10.pdf}};
        \end{tikzpicture}
        }
        \subfigure[\white{.} $T=\inbrace{\texttt{Action}, \texttt{War}}$]{
            \begin{tikzpicture}
          \node (image) at (0,-0.17) {\includegraphics[width=\ifconf0.25\linewidth\else0.29\linewidth\fi, trim={0.2cm 0cm 1.5cm 2cm},clip]{\folder/File5.pdf}};
        \end{tikzpicture}
        }
        \ifconf\else\vspace{-4mm}\fi
        \par
        \subfigure[\white{.} $T=\inbrace{\texttt{Action}, \texttt{Western}}$]{
            \begin{tikzpicture}
          \node (image) at (0,-0.17) {\includegraphics[width=\ifconf0.25\linewidth\else0.29\linewidth\fi, trim={0.2cm 0cm 1.5cm 2cm},clip]{\folder/File4.pdf}};
        \end{tikzpicture}
        }
        \subfigure[\white{.} $T=\inbrace{\texttt{Adventure}, \texttt{Crime}}$]{
            \begin{tikzpicture}
          \node (image) at (0,-0.17) {\includegraphics[width=\ifconf0.25\linewidth\else0.29\linewidth\fi, trim={0.2cm 0cm 1.5cm 2cm},clip]{\folder/File9.pdf}};
        \end{tikzpicture}
        }
        \subfigure[\white{.} $T=\inbrace{\texttt{Adventure}, \texttt{War}}$ ]{
            \begin{tikzpicture}
          \node (image) at (0,-0.17) {\includegraphics[width=\ifconf0.25\linewidth\else0.29\linewidth\fi, trim={0.2cm 0cm 1.5cm 2cm},clip]{\folder/File3.pdf}};
        \end{tikzpicture}
        }
        \ifconf\else\vspace{-4mm}\fi
        \par
        \subfigure[\white{.} $T=\inbrace{\texttt{Adventure}, \texttt{Western}}$]{
            \begin{tikzpicture}
          \node (image) at (0,-0.17) {\includegraphics[width=\ifconf0.25\linewidth\else0.29\linewidth\fi, trim={0.2cm 0cm 1.5cm 2cm},clip]{\folder/File2.pdf}};
        \end{tikzpicture}
        }
        \subfigure[\white{.} $T=\inbrace{\texttt{Crime}, \texttt{War}}$]{
            \begin{tikzpicture}
          \node (image) at (0,-0.17) {\includegraphics[width=\ifconf0.25\linewidth\else0.29\linewidth\fi, trim={0.2cm 0cm 1.5cm 2cm},clip]{\folder/File8.pdf}};
        \end{tikzpicture}
        }
        \subfigure[\white{.} $T=\inbrace{\texttt{Crime}, \texttt{Western}}$]{
            \begin{tikzpicture}
          \node (image) at (0,-0.17) {\includegraphics[width=\ifconf0.25\linewidth\else0.29\linewidth\fi, trim={0.2cm 0cm 1.5cm 2cm},clip]{\folder/File7.pdf}};
        \end{tikzpicture}
        }
        \ifconf\else\vspace{-4mm}\fi
        \par
        \subfigure[\white{.} $T=\inbrace{\texttt{War}, \texttt{Western}}$]{
            \begin{tikzpicture}
          \node (image) at (0,-0.17) {\includegraphics[width=\ifconf0.25\linewidth\else0.29\linewidth\fi, trim={0.2cm 0cm 1.5cm 2cm},clip]{\folder/File1.pdf}};
        \end{tikzpicture}
        }
        \par
        \caption{
        {\em Simulation with MovieLens data with movie recommendations from two men-stereotypical genres:}
        We observe that the relevance scores in the MovieLens data are disproportionately higher (by up to 3 times) for movies led by male actors compared to movies led by non-male actors in genres stereotypically associated with men.
        In contrast, user ratings for these sets of movies are within 6\% of each other in all genres.
        We chose genres where the ratio of average relevance scores of men-led movies is at least twice that of non-men-led movies, they are:
        $B=\{$\texttt{action}, \texttt{adventure}, \texttt{crime}, \texttt{western}, and \texttt{war}$\}$.
        We use relevance scores to recommend $k\in \inbrace{50, 100, 150, 200}$ movies from different subsets $T$ of $B$.
        This figure presents the results for all subsets $T\subseteq B$ of size 2.
        We observe that in 8 out of 10 subfigures \cref{alg:disj} has a similar or higher normalized latent utility than \uncons{} for all $k$ (by up to 5\%). 
        {\em Note that the range of $y$-axis varies across sugfigures and is either $[3.25, 3.65]$ or $[3.5,3.8]$.}
        }
        \label{fig:movie_lens_pairs_adventure}
    \end{figure}

    \newpage
    \subsubsection{Plots with three genres ($\abs{T}=3$)} 
    \renewcommand{\folder}{./figures/real-world-data/3}
    \begin{figure}[b!]
        \centering \vspace{-5mm}
        \subfigure[$T$=$\inbrace{\texttt{Action}, \texttt{Advent.}, \texttt{Crime}}$\small \hspace{-4mm}]{
            \begin{tikzpicture}
          \node (image) at (0,-0.17) {\includegraphics[width=\ifconf0.25\linewidth\else0.29\linewidth\fi, trim={0.2cm 0cm 1.5cm 2cm},clip]{\folder/File10.pdf}};
        \end{tikzpicture}
        }
        \subfigure[$T$=$\inbrace{\texttt{Action}, \texttt{Adventure}, \texttt{War}}$\small]{
            \begin{tikzpicture}
          \node (image) at (0,-0.17) {\includegraphics[width=\ifconf0.25\linewidth\else0.29\linewidth\fi, trim={0.2cm 0cm 1.5cm 2cm},clip]{\folder/File4.pdf}};
        \end{tikzpicture}
        }
        \subfigure[$T$=$\inbrace{\texttt{Action}, \texttt{Advent.}, \texttt{Western}}$\small\hspace{-4mm} ]{
            \begin{tikzpicture}
          \node (image) at (0,-0.17) {\includegraphics[width=\ifconf0.25\linewidth\else0.29\linewidth\fi, trim={0.2cm 0cm 1.5cm 2cm},clip]{\folder/File3.pdf}};
        \end{tikzpicture}
        }
        \par
        \vspace{-3mm}
        \subfigure[\small $T=\inbrace{\texttt{Action}, \texttt{Crime}, \texttt{War}}$]{
            \begin{tikzpicture}
          \node (image) at (0,-0.17) {\includegraphics[width=\ifconf0.25\linewidth\else0.29\linewidth\fi, trim={0.2cm 0cm 1.5cm 2cm},clip]{\folder/File9.pdf}};
        \end{tikzpicture}
        }
        \subfigure[$T$=$\inbrace{\texttt{Action}, \texttt{Crime}, \texttt{West.}}$\small \hspace{-4mm}]{
            \begin{tikzpicture}
          \node (image) at (0,-0.17) {\includegraphics[width=\ifconf0.25\linewidth\else0.29\linewidth\fi, trim={0.2cm 0cm 1.5cm 2cm},clip]{\folder/File8.pdf}};
        \end{tikzpicture}
        }
        \subfigure[\small $T=\inbrace{\texttt{Adventure}, \texttt{Crime}, \texttt{War}}$  ]{
            \begin{tikzpicture}
          \node (image) at (0,-0.17) {\includegraphics[width=\ifconf0.25\linewidth\else0.29\linewidth\fi, trim={0.2cm 0cm 1.5cm 2cm},clip]{\folder/File7.pdf}};
        \end{tikzpicture}
        }
        \par
        \vspace{-3mm}
        \subfigure[$T$=$\inbrace{\texttt{Advent.}, \texttt{Crime}, \texttt{West.}}$\small ]{
            \begin{tikzpicture}
          \node (image) at (0,-0.17) {\includegraphics[width=\ifconf0.25\linewidth\else0.29\linewidth\fi, trim={0.2cm 0cm 1.5cm 2cm},clip]{\folder/File6.pdf}};
        \end{tikzpicture}
        }
        \subfigure[$T$=$\inbrace{\texttt{Adventure}, \texttt{War}, \texttt{West.}}$]{
            \begin{tikzpicture}
          \node (image) at (0,-0.17) {\includegraphics[width=\ifconf0.25\linewidth\else0.29\linewidth\fi, trim={0.2cm 0cm 1.5cm 2cm},clip]{\folder/File1.pdf}};
        \end{tikzpicture}
        }
        \subfigure[\small $T=\inbrace{\texttt{Action}, \texttt{War}, \texttt{Western}}$]{
            \begin{tikzpicture}
          \node (image) at (0,-0.17) {\includegraphics[width=\ifconf0.25\linewidth\else0.29\linewidth\fi, trim={0.2cm 0cm 1.5cm 2cm},clip]{\folder/File2.pdf}};
        \end{tikzpicture}
        }
        \par
        \vspace{-3mm}
        \subfigure[\small $T=\inbrace{\texttt{Crime}, \texttt{War}, \texttt{Western}}$]{
            \begin{tikzpicture}
          \node (image) at (0,-0.17) {\includegraphics[width=\ifconf0.25\linewidth\else0.29\linewidth\fi, trim={0.2cm 0cm 1.5cm 2cm},clip]{\folder/File5.pdf}};
        \end{tikzpicture}
        }
        \par
        \caption{
        {\em Simulation with MovieLens data with movie recommendations from three men-stereotypical genres:}
        We observe that the relevance scores in the MovieLens data are disproportionately higher (by up to 3 times) for movies led by male actors compared to movies led by non-male actors in genres stereotypically associated with men.
        In contrast, user ratings for these sets of movies are within 6\% of each other in all genres.
        We chose genres where the ratio of average relevance scores of men-led movies is at least twice that of non-men-led movies, they are:
        $B=\{$\texttt{action}, \texttt{adventure}, \texttt{crime}, \texttt{western}, and \texttt{war}$\}$.
        We use relevance scores to recommend $k\in \inbrace{50, 100, 150, 200}$ movies from different subsets $T$ of $B$.
        This figure presents the results for all subsets $T\subseteq B$ of size 3.
        We observe that in 9 out of 10 subfigures \cref{alg:disj} has a similar or higher normalized latent utility than \uncons{} for all $k$ (by up to 5\%). 
        {\em Note that the range of $y$-axis varies across sugfigures and is either $[3.25, 3.65]$ or $[3.5,3.8]$.}
        }
        \vspace*{-5mm}
        \label{fig:movie_lens_pairs_crime}
    \end{figure}

    \newpage
    \subsubsection{Plots with four genres ($\abs{T}=4$)} \white{...A}

    \renewcommand{\folder}{./figures/real-world-data/4} 
    \begin{figure}[h!]
        \centering 
        \subfigure[$T=\inbrace{\texttt{Adventure}, \texttt{Crime}, \texttt{War},\texttt{Western}}$\small ]{
            \begin{tikzpicture}
          \node (image) at (0,-0.17) {\includegraphics[width=\ifconf0.25\linewidth\else0.29\linewidth\fi, trim={0.2cm 0cm 1.5cm 2cm},clip]{\folder/File2.pdf}};
          \node[rotate=0] at (-0.1,1.4) {\white{......................................................................}};
        \end{tikzpicture}
        }
        \subfigure[ $T=\inbrace{\texttt{Action}, \texttt{Crime}, \texttt{War}, \texttt{Western}}$\small ]{
            \begin{tikzpicture}
          \node (image) at (0,-0.17) {\includegraphics[width=\ifconf0.25\linewidth\else0.29\linewidth\fi, trim={0.2cm 0cm 1.5cm 2cm},clip]{\folder/File3.pdf}};
                    \node[rotate=0] at (-0.1,1.4) {\white{......................................................................}};
        \end{tikzpicture}
        }
        \par
        \subfigure[ $T=\inbrace{\texttt{Action}, \texttt{Adventure}, \texttt{War}, \texttt{Western}}$\small ]{
            \begin{tikzpicture}
          \node (image) at (0,-0.17) {\includegraphics[width=\ifconf0.25\linewidth\else0.29\linewidth\fi, trim={0.2cm 0cm 1.5cm 2cm},clip]{\folder/File1.pdf}};
            \node[rotate=0] at (-0.1,1.4) {\white{......................................................................}};
        \end{tikzpicture}
        }
        \subfigure[ $T=\inbrace{\texttt{Action}, \texttt{Adventure}, \texttt{Crime}, \texttt{Western}}$\small ]{
            \begin{tikzpicture}
          \node (image) at (0,-0.17) {\includegraphics[width=\ifconf0.25\linewidth\else0.29\linewidth\fi, trim={0.2cm 0cm 1.5cm 2cm},clip]{\folder/File4.pdf}};
          \node[rotate=0] at (-0.1,1.4) {\white{......................................................................}};
        \end{tikzpicture}
        }
        \subfigure[ $T=\inbrace{\texttt{Action}, \texttt{Adventure}, \texttt{Crime}, \texttt{War}}$\small ]{
            \begin{tikzpicture}
          \node (image) at (0,-0.17) {\includegraphics[width=\ifconf0.25\linewidth\else0.29\linewidth\fi, trim={0.2cm 0cm 1.5cm 2cm},clip]{\folder/File5.pdf}};
          \node[rotate=0] at (-0.1,1.4) {\white{......................................................................}};
        \end{tikzpicture}
        }
        \caption{
        {\em Simulation with MovieLens data with movie recommendations from four men-stereotypical genres:}
        We observe that the relevance scores in the MovieLens data are disproportionately higher (by up to 3 times) for movies led by male actors compared to movies led by non-male actors in genres stereotypically associated with men.
        In contrast, user ratings for these sets of movies are within 6\% of each other in all genres.
        We chose genres where the ratio of average relevance scores of men-led movies is at least twice that of non-men-led movies, they are:
        $B=\{$\texttt{action}, \texttt{adventure}, \texttt{crime}, \texttt{western}, and \texttt{war}$\}$.
        We use relevance scores to recommend $k\in \inbrace{50, 100, 150, 200}$ movies from different subsets $T$ of $B$.
        This figure presents the results for all subsets $T\subseteq B$ of size 4.
        We observe that in 5 out of 5 subfigures \cref{alg:disj} has a higher normalized latent utility than \uncons{} for all $k$ (by up to 5\%).}
        \label{fig:real_world_3}
    \end{figure} 
    \newpage   
 
    \subsubsection{Plots with five genres ($\abs{T}=5$)} \white{...A}
 
    \renewcommand{\folder}{./figures/real-world-data/5}
    \begin{figure}[h!]
        \centering
        \ifconf\vspace{-4mm}\fi
            \begin{tikzpicture}
          \node (image) at (0,-0.17) {\includegraphics[width=\ifconf0.25\linewidth\else0.29\linewidth\fi, trim={0.2cm 0cm 1.5cm  2cm},clip]{\folder/File1.pdf}};
                    \node[rotate=0] at (-0.1,1.7) {\white{......................................................................}};
        \end{tikzpicture}
        \caption{
        {\em Simulation with MovieLens data with movie recommendations from five men-stereotypical genres:}
        We observe that the relevance scores in the MovieLens data are disproportionately higher (by up to 3 times) for movies led by male actors compared to movies led by non-male actors in genres stereotypically associated with men.
        In contrast, user ratings for these sets of movies are within 6\% of each other in all genres.
        We chose genres where the ratio of average relevance scores of men-led movies is at least twice that of non-men-led movies, they are:
        $B=\{$\texttt{action}, \texttt{adventure}, \texttt{crime}, \texttt{western}, and \texttt{war}$\}$.
        We use relevance scores to recommend $k\negsp{}\in\negsp{} \inbrace{50,\negsp{} 100,\negsp{} 150,\negsp{} 200}$ movies from different subsets $T$ of $B$.
        This figure presents the results for all subsets $T\subseteq B$ of size 5.
        We observe that the figure \cref{alg:disj} has a similar normalized latent utility to \uncons{} for all $k$.}
        \label{fig:real_world_5}
    \end{figure}

\end{document}